\def\paperversion{2}
\newcommand{\RR}{\mathbb{R}}
\newcommand{\cF}{\mathcal{F}}
\newtheorem*{theorem*}{Theorem}
\theoremstyle{plain}
\newcommand{\W}{\mathcal{W}}
\newcommand{\diff}{\mathrm{d}}
\newcommand{\cv}[1]{\mathbf{#1}}
\newcommand{\mbE}{\mathbb{E}}
\newcommand{\dsI}{\mathds{1}}
\newcommand{\trans}{^{\mathrm T}}
\newcommand{\FNN}{\mathcal{F}_{\mathrm{NN}}}
\newcommand{\ReLU}{\mathrm{ReLU}}
\newcommand{\HU}{\mathcal{H}^{s,\beta}(\mathcal{U})}
\newcommand{\inp}[2]{\langle#1,#2\rangle}
\newcommand{\tI}{\text{(I)}}
\newcommand{\tII}{\text{(II)}}
\definecolor{longhorn}{rgb}{0.8, 0.33, 0.0}
\newtheorem{assumption}{Assumption}
\newcommand{\cU}{\mathcal{U}}
\newcommand{\cT}{\mathcal{T}}
\newcommand{\cH}{\mathcal{H}}
\newcommand{\cA}{\mathcal{A}}
\newcommand{\cW}{\mathcal{W}}
\begin{document}

\title{
A Manifold Two-Sample Test Study: Integral Probability Metric with Neural Networks
}

\ifnum\paperversion=1
\shorttitle{Integral Probability Metric with Neural Networks} %
\shortauthorlist{Jie, Minshuo, Tuo, Wenjing, Yao} %
\author{{%
\sc Jie Wang}
\\[2pt]
School of ISyE, Georgia Institute of Technology\\
Atlanta, GA 30332
\\[2pt]
{\sc Minshuo Chen}\\[2pt]
School of ISyE, Georgia Institute of Technology\\
{\sc Tuo Zhao}\\[2pt]
School of ISyE, Georgia Institute of Technology\\
{\sc Wenjing Liao}\\[2pt]
School of Math, Georgia Institute of Technology\\
{\sc and}\\[2pt]
{\sc Yao Xie} \\[2pt]
School of ISyE, Georgia Institute of Technology
}
\fi

\ifnum\paperversion=2
\author{Jie Wang, Minshuo Chen, Tuo Zhao, Wenjing Liao, Yao Xie
\thanks{Jie Wang, Minshuo Chen, Tuo Zhao, and Yao Xie are affiliated with School of Industrial
and Systems Engineering at Georgia Tech;
Wenjing Liao is affiliated with School of Mathematics at Georgia Tech;
Email: \{jwang3163, mchen393, tourzhao, wliao60\}@gatech.edu, yao.xie@isye.gatech.edu
}
}
\fi

\maketitle

\begin{abstract}
{
Two-sample tests are important areas aiming to determine whether two collections of observations follow the same distribution or not.
We propose two-sample tests based on integral probability metric~(IPM) for high-dimensional samples supported on a low-dimensional manifold.
We characterize the properties of proposed tests with respect to the number of samples $n$ and the structure of the manifold with intrinsic dimension $d$.
When an atlas is given, we propose a two-step test to identify the difference between general distributions, which achieves the type-II risk in the order of $n^{-1/\max\{d,2\}}$.
When an atlas is not given, we propose H{\"o}lder IPM test that applies for data distributions with $(s,\beta)$-H{\"o}lder densities, which achieves the type-II risk in the order of $n^{-(s+\beta)/d}$.
To mitigate the heavy computation burden of evaluating the H{\"o}lder IPM, we approximate the H{\"o}lder function class using neural networks.
Based on the approximation theory of neural networks, 
we show that the neural network IPM test has the type-II risk in the order of $n^{-(s+\beta)/d}$, which is in the same order of the type-II risk as the H{\"o}lder IPM test.
Our proposed tests are adaptive to low-dimensional geometric structure because their performance crucially depends on the intrinsic dimension instead of the data dimension.
}
{
\ifnum\paperversion=2
\\\textit{Keywords:}
\fi
H{\"o}lder densities, Deep neural networks, Two-sample testing
}
\end{abstract}

\section{Introduction}
As an important topic in statistical inference, two-sample testing aims to determine whether two sets of collected samples are from the same distribution or not.
Such a topic has wide applications in general scientific discovery areas.
For example, the adversarial attack is a popular research topic in machine learning: imperceptible perturbations to testing data can lead to misbehavior of the trained machine learning model, such as wrong predictions~\citep{goodfellow2014explaining}. 
One can perform two-sample testing based on a group of normal samples and a group of testing samples to determine whether testing samples are anomalous for \emph{adversarial detection}~\citep{sheikholeslami2021provably, gu2019detecting, tramer2022detecting}.
Other notable examples in which two-sample test plays a key role include \emph{change-point detection}~\citep{Xie_2013,cao2018change,xie2021sequential}, \emph{model criticism}~\citep{Lloyd15, chwialkowski2016kernel, bikowski2018demystifying}, \emph{causal inference}~\citep{lopezpaz2018revisiting}, etc.

The problem of two-sample testing has been a long-standing challenge in statistics.
Classical approaches {(see, e.g., \cite{lehmann2005testing})} for two-sample tests mainly focus on parametric or low-dimensional settings, such as the Hotelling's two-sample test~\citep{hotelling1931} and the Student's t-test~\citep{PFANZAGL96}.
Non-parametric two-sample testing for high-dimensional data has recently received much attention, {in which no prior information about the data-generating distributions is available.}
Existing non-parametric tests are usually constructed based on certain distance functions quantifying the discrepancy between probability distributions.
Many existing tests leverage \emph{integral probability metric}~(IPM, \citet{muller1997integral}) as such distance functions, which {operates} by finding a critic within a certain function space that {maximally distinguishes two distributions}.
Commonly used IPMs in literature include the total variation distance~\citep{Ga1991}, the Wasserstein distance~\citep{delbarrio1999, ramdas2015wasserstein}, and the Maximum Mean Discrepancy~(MMD, \citet{Gretton12, Gretton09, Grettonnips12}).

Despite that non-parametric tests for high-dimensional data has promising applications, their performances usually degrade quickly as the data dimension increases, known as the \emph{curse of dimensionality} issue.
For example, the Kolmogorov-Smirnov test~\citep{Frank51,Pratt1981} and Anderson-Darling test~\citep{anderson1952asymptotic} {are powerful for univariate samples}, but their computation is not tractable for multivariate data~(see, e.g., \cite{justel1997multivariate}).
The Hotelling's $T^2$ test~\citep{anderson1962introduction,muirhead2009aspects} is a classical approach {for distinguishing mean and variance differences in multivariate data,} but it performs poorly when dealing with $n$ samples of dimension $D$ for the case $D/n\to\gamma\in(0,1)$~\citep{bai1996effect}, and it is even undefined for the case $D>n$.
The Wasserstein test~\citep{ramdas2015wasserstein} is computationally tractable, but it is shown to suffer from the curse of dimensionality~\citep{fournier2015rate, weed2019estimation}.
MMD test is a notable contribution in literature, but for properly selected kernel functions, its power {decays polynomially or even exponentially} into zero with an increasing dimension of data distributions~\citep{reddi2014decreasing}, which indicates it is not suitable for high-dimensional testing as well.

To tackle the difficulty of high-dimensional non-parametric tests, we take the low-dimensional geometric structure of data points into consideration.
This consideration is not restrictive in practice, as real-world high-dimensional data usually {exhibit} low-dimensional structure.
For example, images can be viewed as points in high-dimensional space, with coordinates representing the intensity of pixels, but their perceptually meaningful structure usually depends on a small number of parameters, {due to} rotation, translation, and skeletonization~\citep{hinton2006reducing, osher2017low, gong2019intrinsic}.
In Table~\ref{Tab:dimension:sum}, we showcase the (estimated) intrinsic dimensions of popular real data sets and commonly encountered analytical manifolds in optimization and simulation literature. 
As can be seen, in real data sets (MNIST, SVHN, CIFAR-10, ImageNet), the data dimension is much larger than the estimated intrinsic dimension.
The degree of freedom of data points from general areas (e.g., visual, acoustic, textual) is often significantly smaller than the input data dimension due to rich local regularities, global symmetries, or repetitive patterns~\citep{tenenbaum2000global, roweis2000nonlinear, Djuric15}.
It is, therefore, reasonable to assume that the data-generating distributions are {supported on} the manifold with a small intrinsic dimension when designing two-sample tests.

 \begin{table}[t]
	\vspace{-2px}
	\caption{
	Estimated intrinsic dimensions of popular data sets in machine learning and analytical manifolds}
	\label{Tab:dimension:sum}
	\footnotesize
	\begin{center}
			\begin{tabular}{lcc}
			\toprule
				Data Set or Manifold  & Data Dimension & (Estimated) Intrinsic Dimension \\
				\midrule
MNIST~\citep{lecun1998gradient} & 784 & between $7$ and $13$~\citep{pope2021the}\\
SVHN~\citep{sermanet2012convolutional} & 3072 & between $9$ and $19$~\citep{pope2021the}\\
CIFAR-10~\citep{abouelnaga2016cifar} & 3072 & between $11$ and $23$~\citep{pope2021the}\\
		ImageNet~\citep{deng2009imagenet} & 150528 & between $26$ and $43$~\citep{pope2021the}\\
	    Sphere $\{x\in\mathbb{R}^{D}:~\|x\|_2=1\}$	 & $D$ & $D-1$\\
		Stiefel manifold $\{X\in\mathbb{R}^{m\times k}:~X\trans X=I_k\}$	  & $km$ & $km-\frac{1}{2}k(k+1)$ \\
		Oblique manifold $\{X\in\mathbb{R}^{m\times k}:~(X\trans X)_{i,i}=1, \forall i\}$ & $km$ &  $k(m-1)$\\
		Fixed-rank manifold $\{X\in\mathbb{R}^{m\times k}:~\text{rank}(X)=r\}$	  & $km$ & $(m+k-r)r$ \\
				\bottomrule
			\end{tabular}
	\end{center}
	\end{table}
	
It is well-known that neural networks are powerful in function representation and are adaptive to the structure of data.
Classical works have established universal approximation theories of neural networks, in which the network size depends crucially on the data dimension $D$.
However, it is worth mentioning that those works mentioned above do not fully explain the success of neural networks since the data dimension $D$ is usually a large number, so the sample complexity rate appears to be too slow.
Recent works provide theoretical justification when the low-dimensional data structure is in force, indicating the sample complexity rate usually depends on the intrinsic dimension instead of the data dimension~\citep{chen2019efficient, chen2020nonparametric, Schmidt_Hieber_2020}.
Our proposed manifold two-sample test is built upon this recent progress on neural network-based learning with the low-dimensional data structure.

In this work, we design an IPM test for manifold data and investigate its statistical performance guarantees, where the critic, a proxy to learn the structure of data, is parameterized using neural networks.
Our theory uncovers that the type-II risk of neural network-based two-sample test on manifold data depends on the intrinsic dimension (and only weakly depends on the data dimension), indicating it does not have a curse of dimensionality issue.
The main results are summarized in the following subsection.

\subsection{Summary of main results}

 \begin{table}[t]
	\vspace{-2px}
	\caption{Summary of our proposed manifold two-sample tests}
	\label{fig:summary:tests}
	\footnotesize
	\begin{center}
			\begin{tabular}{lccccccccc}
			\toprule
				Algorithm  & Atlas? & Target Distributions & Rate of Type-II Risk\\
				\midrule
			 Two-step Test (Section~\ref{Sec:2:A}) & Given & Arbitrary in $\cU$ & $\tilde{O}\left(n^{-1/\max\{d,2\}}\right)$\\
				H{\"o}lder IPM Test (Section~\ref{Sec:test:structured}) & Not given & Smooth densities in $\cH^{s,\beta}(\cU)$ & $O\left(n^{-(s+\beta)/d}\right)$\\
				NN IPM Test (Section~\ref{Sec:estimate:IPM:NN}) & Not given & Smooth densities in $\cH^{s,\beta}(\cU)$ & $O\left(n^{-(s+\beta)/d}\right)$\\
				\bottomrule
			\end{tabular}
	\end{center}
	\end{table}

{Suppose we have two collections of data points $\cv x^n:=\{\cv x_1,\ldots, \cv x_n\}$ and $\cv y^n:=\{\cv y_1,\ldots, \cv y_n\}$ from two distributions $p$ and $q$ respectively.
The two distributions are supported on a $d$-dimensional compact Riemannian manifold $\cU$ isometrically embedded in $\RR^D$ with $d \ll D$.}
{We consider for simplicity that the sample sizes of two collections are the same, yet our analysis can be extended to imbalanced data.}
The goal is to design a test which, given samples $\cv x^n$ and $\cv y^n$, decides whether to accept the null hypothesis $H_0:~p=q$ or the alternative hypothesis $H_1:~p\ne q$.
Given $0<\eta,t<1/2$, we aim to {ensure that} the type-I risk {is} at most $\eta$ (which we call at level $\eta$), and the type-II risk {is} at most $t$ (of power $1-t$); and we aim to {make the specification of $t$ as small as possible while keeping $\eta$ as a fixed number.}

In the following, we present the summary of our results on two-sample testing for manifold data (see Table~\ref{fig:summary:tests}).

\paragraph{Two-step Test}
{
When the structure of the manifold $\cU$, i.e., an atlas describing a collection of local neighborhoods of $\cU$, is exactly known, we design a two-sample test for general distributions supported on $\cU$.}
{Given an atlas, a distribution on $\cU$ can be decomposed as a convex combination of conditional distributions supported on each local neighborhood.
In other words, a distribution can be viewed as a finite mixture distribution with such a decomposition.
We thereby propose a two-step test to examine the difference between two mixture distributions.}
{In the first step}, we examine whether the weight coefficients between them are the same or not using the $\mathcal{L}_2$ divergence as testing statistic.
{In the second step, we examine whether all the conditional distributions are the same. We use a special projected Wasserstein distance~\citep{wang2020twosample, wang2020twosamplekernel}, i.e., the Wasserstein distance between projected probability distributions with a special low-dimensional mapping, to extract the difference between conditional distributions.
We show that the type-II risk of the proposed test with a fixed type-I risk is $\tilde{O}\left( 
n^{-1/\max\{d,2\}}
\right)$.
}

\paragraph{H{\"o}lder IPM Test}
When an atlas is not given, we propose H{\"o}lder IPM test that works for distributions with H{\"o}lder densities with order $(s,\beta)$ on $\cU$.
The null hypothesis $H_0$ is rejected when the H{\"o}lder IPM between two empirical distributions is greater than a certain threshold.
We show that the type-II risk of H{\"o}lder IPM test with a fixed type-I risk is $O\left( 
n^{-\frac{s+\beta}{d}}
\right)$.

\paragraph{NN IPM Test}
{
To lift the heavy computational burden of evaluating the H{\"o}lder IPM, we approximate the H{\"o}lder function class using neural networks, in leveraging the approximation power of neural networks.}
We focus on feedforward neural networks~(FNN) with {entrywise} ReLU activation function, i.e., $\ReLU(x)=\max\{0,x\}$ to {parameterize} the H{\"o}lder IPM.
{Neural} networks are widely used in machine learning {applications}~\citep{nair2010rectified,glorot2011deep,maas2013rectifier} such as computer vision, speech recognition, and natural language processing.
Given an input $\cv x\in\mathbb{R}^D$, an $L$-{layer} FNN returns {an output}
\begin{equation}\label{Eq:NN:function}
f(\cv x)=W_L\cdot\ReLU(W_{L-1}\cdots\ReLU(W_1\cv x+\cv b_1)\cdots+\cv b_{L-1})+\cv b_L,
\end{equation}
where $W_1,\ldots,W_L$, and $\cv b_1,\ldots, \cv b_L$ are weight matrices and vectors of proper sizes, respectively.
Denote by $\FNN(R,\kappa,L,t,K)$ the collection of neural networks with bounded weight parameters and bounded output:
\begin{equation}\label{Eq:specific:function:class}
\begin{aligned}
&\FNN(R,\kappa,L,t,K)=\bigg\{
{f~\bigg|~
\mbox{$f(\cv x)$ has the form $\eqref{Eq:NN:function}$ with $L$-layers and width bounded by $t$},}
\\
&\qquad\qquad\qquad\|f\|_{\infty}\le R,
\|W_i\|_{\infty,\infty}\le\kappa,
\|\cv b_i\|_{\infty}\le\kappa, i=1,2,\ldots,L,
\sum_{i=1}^L\|W_i\|_0+\|\cv b_i\|_0\le K
\bigg\},
\end{aligned}
\end{equation}
where $\|\cdot\|_0$ denotes the number of non-zero entries of the argument, $\|\cdot\|_{\infty}$ denotes the maximum {magnitude} of all entries of the argument.
Given a matrix $H$, denote $\|H\|_{\infty,\infty}=\max_{i,j}~|H_{i,j}|$.
Our results {are} summarized as follows.

\begin{theorem*}[Informal version of Theorem~\ref{Theorem:test:property:app}]
{
Let $\cU$ be a $d$-dimensional compact Riemannian manifold isometrically embedded in $\RR^D$ with mild regularity conditions, and $d\ll D$.
Consider two target distributions $p, q$ supported on $\cU$ with H{\"o}lder $(s,\beta)$-density functions.
When the hyper-parameters of the network function class $\FNN(R,\kappa,L,t,K)$ defined in \eqref{Eq:specific:function:class} is chosen as
\[
\begin{aligned}
R&=1,\quad \kappa=O(\sqrt{d}),\quad L=O\left(
\frac{s+\beta}{2(s+\beta)+d}\log(nD)
\right),\quad t={O}\left(
n^{\frac{d}{2(s+\beta)+d}}+D
\right),\quad\text{and }\\
K&={O}\left(
\frac{s+\beta}{2(s+\beta)+d}n^{\frac{d}{2(s+\beta)+d}}\log n
+\frac{s+\beta}{2(s+\beta)+d}D\log n + D\log D
\right),
\end{aligned}
\]
the type-II risk of the NN IPM test with a fixed type-I risk is of $O\left( 
n^{-\frac{s+\beta}{d}}
\right)$.
}
\end{theorem*}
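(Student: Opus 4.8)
The plan is to reduce the analysis of the NN IPM test to that of the already-established H\"older IPM test (Section~\ref{Sec:test:structured}), paying only for the error incurred by replacing the H\"older critic class $\HU$ with the network class $\FNN(R,\kappa,L,t,K)$. Throughout, write $d_{\mathcal{F}}(\mu,\nu)=\sup_{f\in\mathcal{F}}|\mbE_{\mu}[f]-\mbE_{\nu}[f]|$ for the IPM generated by a class $\mathcal{F}$, and let $\hat p_n,\hat q_n$ denote the two empirical measures. The test rejects $H_0$ when $d_{\FNN}(\hat p_n,\hat q_n)$ exceeds a data-driven threshold $\tau$ calibrated (e.g.\ by a permutation/bootstrap quantile, or by the concentration bound below) so that the type-I risk is at most $\eta$; this calibration is routine once the fluctuation of the statistic is controlled, since every $f\in\FNN$ is uniformly bounded by $R=1$ and a bounded-difference (McDiarmid) inequality applies.

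First I would prove a signal-preserving comparison between $d_{\FNN}$ and $d_{\HU}$ at the population level. The crucial input is the neural network approximation theorem for H\"older functions supported on a $d$-dimensional manifold: with the stated architecture, for every $g\in\HU$ there is a network $\tilde g\in\FNN(R,\kappa,L,t,K)$ with $\|g-\tilde g\|_{\infty,\mathcal{U}}\le\varepsilon_{\mathrm{app}}$. Applying this to the (near-)maximizing H\"older critic yields $d_{\FNN}(p,q)\ge d_{\HU}(p,q)-2\varepsilon_{\mathrm{app}}$. The construction proceeds chart by chart: one covers $\mathcal{U}$ by the charts of the atlas, uses a smooth partition of unity to localize, composes each local piece with the projection onto the $d$ intrinsic coordinates (an affine map from $\RR^D$, which is exactly the source of the additive $D$ in $t$ and the $D\log D$ term in $K$), and then invokes the standard ReLU approximation of H\"older functions on a $d$-dimensional cube, whose depth, width, and sparsity are governed by $d$ and $\varepsilon_{\mathrm{app}}$ rather than by $D$. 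The weight-magnitude bound $\kappa=O(\sqrt d)$ and the depth $L=O\big(\tfrac{s+\beta}{2(s+\beta)+d}\log(nD)\big)$ are read off from this construction.

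Next I would control the statistical error of the network IPM, namely $U_n:=\sup_{f\in\FNN}\big|(\mbE_{\hat p_n}-\mbE_{p})[f]\big|$ and its analogue $U_n'$ for $q$. Using a covering-number/Dudley chaining bound for $\FNN(R,\kappa,L,t,K)$ --- whose metric entropy is controlled by the number of active parameters $K$ and the depth $L$ --- together with the low-dimensional support of $p,q$, I would bound $\mbE[U_n]$ by the same order as the H\"older IPM estimation error. Decomposing under $H_1$,
\[
d_{\FNN}(\hat p_n,\hat q_n)\ \ge\ d_{\FNN}(p,q)-U_n-U_n'\ \ge\ d_{\HU}(p,q)-2\varepsilon_{\mathrm{app}}-U_n-U_n',
\]
so the test fails to reject only when $U_n+U_n'\ge d_{\HU}(p,q)-2\varepsilon_{\mathrm{app}}-\tau$, and a Markov bound gives a type-II risk of order $\mbE[U_n+U_n']/\big(d_{\HU}(p,q)-2\varepsilon_{\mathrm{app}}-\tau\big)$. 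The architecture's role is to balance the two error sources: the choices of $t$ and $K$ (scaling like $n^{d/(2(s+\beta)+d)}$ up to logarithmic and $D$-dependent terms) equate $\varepsilon_{\mathrm{app}}$ with the chaining bound for $\mbE[U_n]$, after which the separation $d_{\HU}(p,q)$ dominates and the type-II risk inherits the rate $O(n^{-(s+\beta)/d})$ established in Section~\ref{Sec:test:structured}.

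The main obstacle is this final balancing step together with the manifold approximation bound: one must choose the architecture so that $\varepsilon_{\mathrm{app}}$ is negligible against the intrinsic separation while the enlarged complexity of $\FNN$ does not inflate $\mbE[U_n]$ beyond the H\"older-class rate. This requires the approximation theorem to hold with explicit, simultaneous control of all five hyper-parameters $(R,\kappa,L,t,K)$ on the manifold --- in particular the sparsity $K$ and the bounded weights $\kappa$ --- and a matching entropy bound for exactly that sparse, bounded network class; reconciling these two so that the network IPM estimation error does not exceed the $n^{-(s+\beta)/d}$ rate of the H\"older IPM is the delicate part, with the $D$-dependent terms tracked throughout to confirm the weak (logarithmic) dependence on the ambient dimension.
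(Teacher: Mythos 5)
There is a genuine gap, and it sits exactly where you flagged the ``delicate part'': your type-II decomposition cannot deliver the claimed rate. You lower-bound the statistic by anchoring at the \emph{population} NN IPM, $d_{\FNN}(\hat p_n,\hat q_n)\ge d_{\FNN}(p,q)-U_n-U_n'$, and then apply Markov to $U_n+U_n'$, so your type-II risk bound is of order $\mbE[U_n+U_n']/\Delta$. But the architecture choice forces the balance $\varepsilon_{\mathrm{app}}\asymp\mbE[U_n]\asymp n^{-\frac{s+\beta}{2(s+\beta)+d}}$ (up to logarithms) --- making the network rich enough to drive $\varepsilon_{\mathrm{app}}$ below this level necessarily inflates the entropy of $\FNN$ and hence $\mbE[U_n]$ above it. Since $\frac{s+\beta}{2(s+\beta)+d}<\frac{s+\beta}{d}$ always, your bound is $O\bigl(n^{-\frac{s+\beta}{2(s+\beta)+d}}\log^2 n\bigr)$, strictly slower than the claimed $O\bigl(n^{-\frac{s+\beta}{d}}\bigr)$. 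The closing assertion that the test ``inherits the rate of Section~5'' does not follow from your decomposition; no amount of re-balancing fixes this, because the NN class's own fluctuation is the bottleneck in your Markov numerator.

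The paper's proof avoids paying the NN fluctuation in the rate by a different reduction: the approximation inequality $d_{\HU}(\mu,\nu)\le d_{\FNN}(\mu,\nu)+2\epsilon$ holds for \emph{arbitrary} distributions supported on $\cU$ --- in particular for the empirical measures $\hat p_n,\hat q_n$, since the uniform bound $\|f-\bar f\|_\infty\le\epsilon$ on $\cU$ transfers to any expectation over points of $\cU$. Hence the acceptance event $\{d_{\FNN}(\hat p_n,\hat q_n)<\ell_n\}$ is contained in $\{d_{\HU}(\hat p_n,\hat q_n)<\ell_n+2\epsilon\}$, and the type-II risk is bounded by the H{\"o}lder test's type-II risk at the inflated threshold $\ell_n+2\epsilon$. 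Markov's inequality is then applied to the \emph{H{\"o}lder-class} fluctuation,
\begin{equation*}
\mbE\bigl|d_{\HU}(p,q)-d_{\HU}(\hat p_n,\hat q_n)\bigr|\le c\,n^{-\frac{s+\beta}{d}},
\end{equation*}
which enjoys the faster rate because the H{\"o}lder class on the $d$-dimensional manifold has smaller metric entropy than the network class needed to approximate it. In this argument the NN approximation error $2\epsilon\asymp n^{-\frac{s+\beta}{2(s+\beta)+d}}$ and the NN estimation error (which sets $\ell_n$) only enter the denominator through the sample-size condition $\Delta_n=d_{\HU}(p,q)-[\ell_n+2\epsilon]>0$; they never appear in the numerator, so the rate $n^{-(s+\beta)/d}$ of the H{\"o}lder test is preserved. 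Your proof would be repaired by replacing your population-level anchoring with this empirical-measure comparison.
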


Our proposed test requires the network size to depend on the intrinsic data dimension $d$ and weakly depend on the input data dimension $D$ to maintain satisfactory testing power.
We show that the NN IPM test has the type-II risk in the order of $n^{-(s+\beta)/d}$, which is in the same order of the type-II risk as the H{\"o}lder IPM test.
We remark that the order of type-II risk only depends on the intrinsic dimension $d$, which is usually much smaller than that with the data dimension $D$.
It, therefore, demonstrates the power of neural networks for tackling the curse of dimensionality challenge of non-parametric tests for data with low-dimensional geometric structure.

\subsection{Related work}

Many existing works have studied manifold two-sample tests.
A series of works leverage kernel functions to learn the structure of data to design tests~\citep{wang2020twosample, xie2021sequential, mueller2015principal, lin2020projection, lin2020projection2, wang2020twosamplekernel}.
The performance of their proposed tests is sensitive to the choice of kernel functions. 
From a theoretical perspective, \citet{cheng2021kernel} show that the performance of MMD tests depends on the intrinsic dimension provided that the bandwidth is properly selected.
The theoretical results of their work are based on an isotropic kernel choice, which may not work well in practical scenarios.

Neural networks in the hypothesis testing area have achieved some success~\citep{liu2020learning, cheng2020classification, cheng2021neural}.
In particular, \cite{liu2020learning} parameterizes the MMD kernel function using neural networks to maximize the testing power.
\cite{cheng2021neural} applies the lazy training technique of neural networks to speed up the MMD tests.
A closely related work~\citep{cheng2020classification} leverages neural network-based classifier tests with performance guarantees under training and testing data split, which is a different framework. Specifically, it was shown that both the neural network complexity and training sampling complexity only depend on the intrinsic dimension rather than the data ambient dimension, and the two-sample test efficiency on the test split is shown by an asymptotic result. In this work, our analysis does not use training and testing split and gives a non-asymptotic characterization of testing power, which reveals the impacts of intrinsic dimension, sample size, and smoothness parameters.

\begin{table}[t!]\footnotesize
	\centering
	\begin{tabular}{c|l||c|l}
		\hline\hline
		Notation & Description & Notation & Description \\[2pt]
		\hline
		$\cU$     & Manifold		& $\{U_{\alpha},\phi_{\alpha}\}$ & Atlas for $\cU$\\[3pt]\hline
		$\{\rho_{\alpha}\}$ & Partition of Unity & $d_{\mathcal{F}}$ & IPM with function class $\cF$\\[3pt]\hline
		$\cH^{s,\beta}(\cU)$ & H{\"o}lder function class on $\cU$ & $\FNN$ & Neural network class\\[3pt]\hline
		$\{\cv x_i,\cv y_i\}_{i=1}^n$ & Given data set & $\ell_n$ & A threshold for two-sample test
		\\[3pt]\hline
		$\|\cv p-\cv q\|_2$ & 
		\makecell[l]{$\mathcal{L}_2$ divergence between discrete\\ distributions $\cv p$ and $\cv q$}
		&$T(p,q)$ & 
		\makecell[l]{maximum of projected Wasserstein\\ distances between distributions $p,q$ on $\cU$}
		\\[6pt]\hline
		$\hat{p},\hat{q}$ & \makecell[l]{Empirical distributions of\\\ $p,q$ from data}
		&$\psi_{\#}p$ & \makecell[l]{Pushforward of distribution\\$p$ under mapping $\psi$}
		\\[6pt]
				\hline\hline
	\end{tabular}
	\caption{Notations used in this paper. }\label{tab:notation}
\end{table}

\subsection{Roadmap and notations}
The rest of the paper is organized as follows: 
Section~\ref{sec:preliminariy} presents a brief introduction to manifolds and distance measures between distributions.
Section~\ref{Sec:main:results} presents a theoretical analysis of two-sample tests for manifold data.
Section~\ref{Sec:proof:Thm:agn:testing} sketches the proof of the type-II risk for a two-step test with a given atlas.
Section~\ref{Thm:test:property:proof} sketches the proof of the type-II risk for H{\"o}lder IPM test.
Section~\ref{Sec:proof:Theorem:test:property:app} sketches the proof of the type-II risk for the NN IPM test that approximates H{\"o}lder IPM test efficiently.
Section~\ref{Sec:conclusion} provides the conclusion of the paper.

We use bold-faced letters to denote vectors, and normal font letters with a subscript to denote its coordinate, i.e., $\cv x\in\mathbb{R}^d$ and $x_k$ being the $k$-th coordinate of $\cv x$.
Given a vector $\cv n=[n_1,\ldots,n_d]\trans\in\mathbb{N}^d$, we define $\cv n!=\prod_{i=1}^d(n_i)!$ and $|\cv n|=\sum_{i=1}^dn_i$.
We define $\cv x^n=\prod_{i=1}^dx_i^{n_i}$.
Given a function $f:~\mathbb{R}^d\to\mathbb{R}$, denote its derivative as $D^{\cv n}f = \frac{\partial^nf}{\partial x_1^{n_1}\cdots\partial x_d^{n_d}}$.
We say a function $f:~\mathbb{R}^d\to\mathbb{R}$ is $\mathcal{C}^s$ if it is continuously differentiable up to order $s$.
We use $\circ$ to represent the function composition operator.
Given a measurable space $(\Omega,\mathcal{F})$, we say the function $p:~\mathcal{F}\to\mathbb{R}$ is a probability distribution if $p$ satisfies measure properties and $p(\Omega)=1$.
With a probability distribution $p$ and an event $\mathcal{E}$, we denote $p\mid\mathcal{E}$ as the distribution $p$ conditioned on the event $\mathcal{E}$.
More notations used in this paper is summarized in Table~\ref{tab:notation}.

\section{Preliminaries}\label{sec:preliminariy}
In this section, we first review some basic concepts about manifolds, and the details can be found in \cite{tu2010introduction}, and \cite{lee2006riemannian}.
Then we define different types of integral probability metrics considered throughout this paper. 

\subsection{Low-dimensional manifold}
Denote by $\mathcal{U}$ a $d$-dimensional Riemannian manifold \emph{isometrically embedded} in $\mathbb{R}^{D}$, where we assume that $d\ll D$.
For instance, a circle is a $1$-dimensional manifold embedded in $\mathbb{R}^2$.
\begin{definition}[Chart]
A chart for $\mathcal{U}$ is represented as a pair $(U,\phi)$ such that $U\subseteq\mathcal{U}$ is an open set, and $\phi:~U\to \mathbb{R}^d$ is a \emph{homeomorphism}.
\end{definition}

A chart essentially defines a local coordinate system from $\mathcal{U}$ to $\mathbb{R}^d$.
Let $(U,\phi)$ and $(V,\psi)$ be two charts on $\cU$.
We say they are $\mathcal{C}^k$-compatible if the following two transition functions are both $\mathcal{C}^k$:
\[
\phi\circ\psi^{-1}:~\psi(U\cap V)\to \phi(U\cap V),\quad
\psi\circ\phi^{-1}:~\phi(U\cap V)\to \psi(U\cap V).
\]
We now present the definition of an atlas.
\begin{definition}[$\mathcal{C}^k$ Atlas]
An atlas for $\mathcal{U}$ is denoted as $\{(U_{\alpha}, \phi_{\alpha})\}_{\alpha\in\mathcal{A}}$ of pairwise $\mathcal{C}^k$-compartible charts such that $\bigcup_{\alpha\in\mathcal{A}}U_{\alpha}=\mathcal{U}$.
An atlas is finite if it contains finitely many charts.
\end{definition}
The existence of an atlas on $\mathcal{U}$ allows us to define $\mathcal{C}^s$ functions on $\mathcal{U}$.
\begin{definition}[$\mathcal{C}^s$ Functions on $\mathcal{U}$]
Consider a smooth manifold $\mathcal{U}$ in $\mathbb{R}^D$.
A function $f:~\mathcal{U}\to\mathbb{R}$ is said to be $\mathcal{C}^s$ if for any chart $(U,\phi)$, the composition $f\circ\phi^{-1}:~\phi(U)\to\mathbb{R}$ is $\mathcal{C}^s$.
\end{definition}

\begin{definition}[H{\"o}lder Function Class on $\cU$]\label{Def:Holder:cU}
Let $\cU$ be a compact and smooth manifold with an atlas $\{(U_{\alpha}, \phi_{\alpha})\}_{\alpha\in\mathcal{A}}$.
A function $f:~\mathcal{U}\to\mathbb{R}$ is said to belong to the H{\"o}lder function class $\HU$ with a positive integer $s$ and $\beta\in(0,1]$ if for any $\alpha\in\mathcal{A}$,
\begin{itemize}
\item
$f\circ\phi^{-1}_{\alpha}\in\mathcal{C}^{s}$ with $|D^{\cv s}(f\circ\phi^{-1}_{\alpha})|\le 1$, $\forall |\cv s|\le s$;
\item
The $s$-th order derivative of $f\circ\phi^{-1}_{\alpha}$ is H{\"o}lder continuous:
\[
\left|
D^{\cv s}(f\circ\phi^{-1}_{\alpha})\mid_{\phi_{\alpha}(\cv x_1)}
-
D^{\cv s}(f\circ\phi^{-1}_{\alpha})\mid_{\phi_{\alpha}(\cv x_2)}
\right|
\le 
\|\phi_{\alpha}(\cv x_1) - \phi_{\alpha}(\cv x_2)\|_2^{\beta},\quad \forall \cv x_1,\cv x_2\in U_{\alpha}, |\cv s|=s.
\]
\end{itemize}
\end{definition}

Throughout this paper, we assume that $\mathcal{U}$ is a smooth manifold, which means it has a $\mathcal{C}^\infty$ atlas.
For instance, the Euclidean space $\mathbb{R}^D$, the torus, and the unit sphere are smooth manifolds.
A smooth and compact manifold always induces a $\mathcal{C}^{\infty}$ partition of unity.
\begin{definition}[Partition of Unity~{\citep[Definition~13.4]{tu2010introduction}}]
A $\mathcal{C}^{\infty}$ partition of unity on a manifold $\cU$ is a collection of non-negative $\mathcal{C}^{\infty}$ functions $\rho_{\alpha}:~\cU\to\mathbb{R}_+$ with $\alpha\in\mathcal{A}$ such that
\begin{enumerate}
\item
the collection of supports, $\{\text{supp}(\rho_{\alpha})\}_{\alpha\in\cA}$, is locally finite, i.e., each point on $\cU$ has a neighborhood that meets only finitely many of $\text{supp}(\rho_{\alpha})$'s;
\item
$\sum_{\alpha\in\cA}\rho_{\alpha}=1$.
\end{enumerate}
\end{definition}
\begin{proposition}[Existence of a $\mathcal{C}^{\infty}$ partition of unity~{\citep[Theorem~13.7]{tu2010introduction}}]\label{Prop:exist:unity:partition}
Let $\{U_{\alpha}\}_{\alpha\in\cA}$ be an open cover of a compact smooth manifold $\cU$.
Then there exists a $\mathcal{C}^{\infty}$ partition of unity $\{\rho_{\alpha}\}_{\alpha\in\cA}$, where each $\rho_{\alpha}$ has a compact support so that $\text{supp}(\rho_{\alpha})\subseteq U_{\alpha}$ for some $\alpha\in\cA$.
\end{proposition}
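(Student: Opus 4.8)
The plan is to reproduce the classical construction of a partition of unity: first build a single smooth bump function on Euclidean space, then transplant it to $\cU$ through charts to obtain finitely many local bumps via compactness, and finally normalize and regroup these bumps into a family $\{\rho_\alpha\}_{\alpha\in\cA}$ subordinate to the given cover.

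The first step is a compactly supported smooth bump on $\mathbb{R}^d$. Starting from the standard non-analytic function $g(t)=e^{-1/t}$ for $t>0$ and $g(t)=0$ for $t\le 0$, which is $\mathcal{C}^\infty$ on $\mathbb{R}$, one assembles a radial cutoff $h:\mathbb{R}^d\to[0,1]$ that equals $1$ on the closed ball $\overline{B_1}$ and vanishes outside the open ball $B_2$, where $B_r$ denotes the open Euclidean ball of radius $r$ about the origin; this $h$ is $\mathcal{C}^\infty$ because $g$ is. Next, for each $p\in\cU$ I would pick $\alpha\in\cA$ with $p\in U_\alpha$ and a chart $(V,\phi)$ about $p$ with $V\subseteq U_\alpha$, normalized so that $\phi(p)=0$ and $B_2\subseteq\phi(V)$. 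Setting $\psi_p=h\circ\phi$ on $V$ and $\psi_p\equiv 0$ on $\cU\setminus V$ produces a function that is $\mathcal{C}^\infty$ on all of $\cU$: the only issue is smoothness along $\partial V$, but $\text{supp}(h\circ\phi)$ is a compact subset of $V$, so $\psi_p$ vanishes on a neighborhood of $\partial V$ and the two branches of the definition patch together smoothly. By construction $\text{supp}(\psi_p)\subseteq V\subseteq U_\alpha$ and $\psi_p\equiv 1$ on the open neighborhood $W_p:=\phi^{-1}(B_1)$ of $p$. Since $\{W_p\}_{p\in\cU}$ is an open cover of the compact manifold $\cU$, finitely many $W_{p_1},\dots,W_{p_m}$ already cover $\cU$; write $\psi_j:=\psi_{p_j}$ for $j=1,\dots,m$.

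The final step is normalization and regrouping. The sum $\Psi:=\sum_{j=1}^m\psi_j$ is $\mathcal{C}^\infty$ and satisfies $\Psi\ge 1>0$ everywhere on $\cU$, because each point lies in some $W_{p_j}$ on which $\psi_j\equiv 1$; hence $\sigma_j:=\psi_j/\Psi$ are well-defined, $\mathcal{C}^\infty$, non-negative, and obey $\sum_{j=1}^m\sigma_j=1$. For each $j$ I would choose $\alpha_j\in\cA$ with $\text{supp}(\psi_j)\subseteq U_{\alpha_j}$ and set $\rho_\alpha:=\sum_{j:\,\alpha_j=\alpha}\sigma_j$, interpreting an empty sum as the zero function. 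Then each $\rho_\alpha$ is a non-negative $\mathcal{C}^\infty$ function with $\text{supp}(\rho_\alpha)\subseteq U_\alpha$ (and compact, since $\cU$ is compact and supports are closed), while $\sum_{\alpha\in\cA}\rho_\alpha=\sum_{j=1}^m\sigma_j=1$; local finiteness of $\{\text{supp}(\rho_\alpha)\}$ is immediate because only finitely many $\rho_\alpha$ are not identically zero.

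I expect the only genuinely delicate point to be the existence of the Euclidean bump — namely the non-analytic $\mathcal{C}^\infty$ function $g$ and the verification that extending $h\circ\phi$ by zero across $\partial V$ preserves smoothness; once these are in hand, the remaining ingredients (compactness of $\cU$, strict positivity of $\Psi$, and the regrouping indexed by $\cA$) are routine bookkeeping. It is worth noting that compactness of $\cU$ is precisely what lets us extract a finite subcover and thereby sidestep the paracompactness and locally finite refinement argument that the non-compact case would require.
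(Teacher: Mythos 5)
Your construction is correct and is exactly the standard argument behind this result, which the paper does not prove itself but imports directly from Tu (Theorem~13.7): a Euclidean bump transported through charts, a finite subcover by compactness of $\mathcal{U}$, normalization by the everywhere-positive sum $\Psi$, and regrouping of the normalized bumps $\sigma_j$ according to a choice of index $\alpha_j$ with $\mathrm{supp}(\psi_j)\subseteq U_{\alpha_j}$. The one point to tighten is the chart normalization: to justify your claim that $\mathrm{supp}(h\circ\phi)$ is a compact subset of $V$ you should require $\overline{B_2}\subseteq\phi(V)$ (or let $h$ vanish outside a strictly smaller ball such as $B_{3/2}$), since with only $B_2\subseteq\phi(V)$ the closure of $\phi^{-1}(\{h\neq 0\})$ taken in $\mathcal{U}$ need not remain inside $V$, and then the extension-by-zero smoothness argument would not go through as stated.
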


We use reach to quantify the local curvature of a manifold $\mathcal{M}$~\citep{niyogi2008finding}.
Roughly speaking, a manifold with a large reach ``bends'' relatively slowly.
\begin{definition}[Reach]
The medial axis of $\mathcal{U}$ is defined as 
\[
\mathcal{T}(\mathcal{U})
=\left\{
\cv x\in\mathbb{R}^D:~
\exists \cv x_1\ne \cv x_2\in\mathcal{U}\text{ such that }\|\cv x-\cv x_1\|_2=\|\cv x-\cv x_2\|_2 = \inf_{\cv y\in\mathcal{U}}\|\cv x-\cv y\|_2
\right\}.
\]
The reach $\tau$ of $\mathcal{U}$ is the minimum distance between $\mathcal{U}$ and $\mathcal{T}(\mathcal{U})$:
\[
\tau = \inf_{\cv x\in\mathcal{T}(\mathcal{U}), \cv y\in\mathcal{U}}~\|\cv x-\cv y\|_2.
\]
\end{definition}

\subsection{Discrepancy between distributions}
{
We review some distance measures quantifying the difference between distributions.
First, we introduce the $\mathcal{L}_2$-divergence.
}
{
Consider an index set $\cA:=\{1,2,\ldots,|\cA|\}$.
For any discrete distribution $p$ on $\cA$, we associate $\cv p$ the corresponding \emph{probability mass vector} belonging to the simplex
 \[
 \Delta_{|\cA|}:=\left\{\cv x:~\sum_{i=1}^{|\cA|}\cv x[i]=1, \cv x[i]\ge0, \forall i\right\},
 \]
 where $\cv x[i]$, the $i$-th entry of $\cv x$, is the probability mass of $p$ at $i\in\cA$.
 \begin{definition}[$\mathcal{L}_2$-divergence]\label{Def:ell:2}
Let $p$ and $q$ be two discrete distributions on $\cA$.
The $\mathcal{L}_2$-divergence between distributions $p$ and $q$ is defined as 
\[
\mathcal{L}_2(p,q)=
\|\cv p-\cv q\|_2:=\left\{\sum_{i=1}^{|\cA|}\left(\cv p[i]- \cv q[i]\right)^2\right\}^{1/2},
\]
where $\cv p$ and $\cv q$ are the probability mass vectors corresponding to $p$ and $q$, respectively.
 \end{definition}
}
{
Next, we introduce the Wasserstein distance.
}
\begin{definition}[Wasserstein distance]\label{Def:Wasserstein}
The Wasserstein distance between two distributions $p$ and $q$ is defined as
\[
\W(p,q) = \inf_{\gamma\in\Gamma(p,q)}~\mathbb{E}_{\gamma}[\|\cv x-\cv y\|_2],
\]
where $\Gamma(p,q)$ denotes the set of joint distributions with marginal distributions being $p$ and $q$, respectively.
\end{definition}

{
Finally, we present the definition of integral probability metric.
\begin{definition}[Integral Probability Metric]\label{Def:int:IPM}
Given a function space $\cF$, the integral probability metric~(IPM) between two distributions $p$ and $q$ is defined as 
\[
d_{\cF}(p,q) = \sup_{f\in \cF}~\mathbb{E}_{p}[f(\cv x)] - \mathbb{E}_{q}[f(\cv x)].
\]
\end{definition}
Here we present some examples of IPM that will be used in the following of this work.
\begin{example}[H{\"o}lder IPM and NN IPM]\label{Example:special:IPM}
Let $\cU$ be a compact and smooth manifold.
When taking the function class in Definition~\ref{Def:int:IPM} as the H{\"o}lder function class on $\cU$, i.e., $\cF=\HU$, we obtain the H{\"o}lder IPM that measures the distance between two distributions $p$ and $q$ supported on $\cU$:
\[
d_{\HU}(p,q) = \sup_{f\in \HU}~\mathbb{E}_{p}[f(\cv x)] - \mathbb{E}_{q}[f(\cv x)].
\]
When taking $\cF$ as the neural network function class $\FNN(R,\kappa,L,t,K)$ defined in \eqref{Eq:specific:function:class}, we obtain the NN IPM that measures the distance between two distributions $p$ and $q$ supported on $\mathbb{R}^D$:
\[
d_{\FNN(R,\kappa,L,t,K)}(p,q) = \sup_{f\in \FNN(R,\kappa,L,t,K)}~\mathbb{E}_{p}[f(\cv x)] - \mathbb{E}_{q}[f(\cv x)].
\]
For notational simplicity, we write $d_{\FNN}(p,q)$ for $d_{\FNN(R,\kappa,L,t,K)}(p,q)$.
\end{example}
}

\section{Main results}\label{Sec:main:results}
This section contains our main theoretical analysis of two-sample tests for manifold data.
We begin with some assumptions on the manifold $\cU$.
\begin{assumption}\label{Assumption:cU}
The data generating distributions $p$ and $q$ are supported on $\cU$, where
\begin{enumerate}
\item[(I)]\label{Assumption:I:cU}
$\cU$ is a $d$-dimensional Riemannian compact and smooth manifold isometrically embedded in $\mathbb{R}^D$.
There exists a constant $B>0$ so that for any $\cv x\in\cU$, $|x_j|\le B$ for $j=1,\ldots,D$.
\item[(II)]
The reach of $\cU$ is $\tau>0$. 
\end{enumerate}
\end{assumption}
In the following, we propose a two-step test that applies to general distributions, provided that an atlas of $\cU$ is given, whose type-II risk scales in the order of $n^{-1/\max\{d,2\}}$.
When an atlas of $\cU$ is not given, we propose an H{\"o}lder IPM that applies for distributions with H{\"o}lder $(s,\beta)$-density functions, in which its type-II risk scales in the order of $n^{-(s+\beta)/d}$.
Then we propose the NN IPM test that approximates H{\"o}lder IPM test efficiently and achieves type-II risk in the order of $n^{-(s+\beta)/d}$, which is in the same order of the type-II risk as the H{\"o}lder IPM test.

\subsection{{Two-sample test with a given atlas}
}\label{Sec:2:A}
In this subsection, we focus on two-sample testing for general distributions where an atlas of $\cU$ is given.
We remark that as long as Assumption~\ref{Assumption:cU}(I) holds, there exists an atlas of $\cU$ containing finitely many charts.

\begin{assumption}\label{Assumption:Atlas}
A finite atlas of $\cU$ is given, denoted as $\{(U_{\alpha}, \phi_{\alpha})\}_{\alpha\in\cA}$ with $|\cA|<\infty$.
\end{assumption}

Based on the given atlas, we first argue that any distribution can be decomposed as a convex combination of distributions supported on each local neighborhood of $\cU$.
By Proposition~\ref{Prop:exist:unity:partition}, there exists a $\mathcal{C}^{\infty}$ partition of unity $\{\rho_{\alpha}\}_{\alpha\in\cA}$ such that $\text{supp}(\rho_{\alpha})\subseteq U_{\alpha}$ for each $\alpha\in\cA$.
Denote the event $\mathcal{E}_{\alpha} = \{\cv x:~\cv x\in\text{supp}(\rho_{\alpha})\}$.
For any distribution $p$ supported on $\cU$, denote the probability of the event $\mathcal{E}_{\alpha}$ as 
\[
p^{(\alpha)}:=
\mbE_{\cv x\sim p}[ 
\dsI\{\cv x\in \mathcal{E}_{\alpha}\}
].
\]
Denote the distribution of $p$ conditioned on the event $\mathcal{E}_{\alpha}$ as $p\mid\mathcal{E}_{\alpha}$.
For any event, $A$ it holds that
\[{
\big(p\mid\mathcal{E}_{\alpha}\big)(A):= 
\frac{
\mbE_{\cv x\sim p}[\dsI\{\cv x\in A\cap \mathcal{E}_{\alpha}\}]
}{p^{(\alpha)}}.}
\]
As a consequence, any probability distribution $p$ supported on $\cU$ can be decomposed as
\begin{equation}
p = \sum_{\alpha\in\cA}p^{(\alpha)}\times \big(p\mid\mathcal{E}_{\alpha}\big),\label{Eq:decomposition:dis:cU}
\end{equation}
where operators $\sum$ and $\times$ represent summation and scalar multiplication for real-valued functions.
The decomposition \eqref{Eq:decomposition:dis:cU} indicates that any distribution supported on the manifold can be viewed as a finite mixture distribution.

Therefore, for any two distributions $p$ and $q$, the null hypothesis $H_0:~p=q$ holds if and only if the following two null hypotheses both hold:
\[
H_0^{\tI}:~p^{(\alpha)}=q^{(\alpha)},\forall\alpha\in\cA,\quad
H_0^{\tII}:~p\mid\mathcal{E}_{\alpha}=q\mid\mathcal{E}_{\alpha}, \forall \alpha\in\cA.
\]
We propose a two-sample test $\mathcal{T}_{\text{Atlas}}$ that decides whether to reject the null hypothesis $H_0:~p=q$, which is performed in two steps.
\begin{enumerate}
\item[\tI]
Reject the null hypothesis $H_0^{\tI}$ if there exists $\alpha\in\cA$ so that $p^{(\alpha)}\ne q^{(\alpha)}$.
\item[\tII]
If the null hypothesis $H_0^{\tI}$ is not rejected, then reject the null hypothesis $H_0^{\tII}$ if there exists $\alpha\in\cA$ so that $p\mid\mathcal{E}_{\alpha}\ne q\mid\mathcal{E}_{\alpha}$.
\end{enumerate}
An overview of the proposed two-sample test $\mathcal{T}_{\text{Atlas}}$ is provided in Figure~\ref{Fig:diagram:agn}.
In the following, we first explain the detailed procedures of Step~\tI~and \tII, and then present the testing properties of $\mathcal{T}_{\text{Atlas}}$.
\paragraph{Step~\tI: Hypothesis Testing on $H_0^{\tI}$.}
Define the probability mass vectors $\cv p:=\{p^{(\alpha)}\}_{\alpha\in\cA}$ and $\cv q:=\{q^{(\alpha)}\}_{\alpha\in\cA}$.
In order to examine the hypothesis $H_0^{\tI}$ that whether the probability mass vectors $\cv p$ and $\cv q$ are the same or not, we first formulate sample estimates of them.
Based on collected observations $\cv x^n$ and $\cv y^n$, we formulate probability mass vectors 
$\hat{\cv p}=\{\hat{p}^{(\alpha)}\}_{\alpha\in\cA}$ and $\hat{\cv q}=\{\hat{q}^{(\alpha)}\}_{\alpha\in\cA}$, respectively, where
\begin{equation}
\hat{p}^{(\alpha)}=\frac{1}{n}\sum_{i=1}^n\dsI\{\cv x_i\in \mathcal{E}_{\alpha}\},\quad
\hat{q}^{(\alpha)}=\frac{1}{n}\sum_{i=1}^n\dsI\{\cv y_i\in \mathcal{E}_{\alpha}\}.
\label{Eq:p:q:alpha}
\end{equation}
We design a two-sample test $\mathcal{T}_{\text{Atlas}}^{\tI}$ using the $\mathcal{L}_2$-divergence (see Definition~\ref{Def:ell:2}) as the testing statistic, in which the null hypothesis $H_0^{\tI}$ is rejected if 
\begin{equation}\label{Eq:two:sample:test:multi:nominal}
\|\hat{\cv p}-\hat{\cv q}\|_2\ge \ell_n^{\tI},
\end{equation}
where the threshold $\ell_n^{\tI}$ is defined as
\begin{equation}
\ell_{n}^{\tI}:=\sqrt{2/n}\left(1 + \sqrt{2\log\eta^{-1}}\right).\label{Eq:threshold:tI}
\end{equation}

 \begin{figure}[t!]
\centering
\includegraphics[width=\textwidth]{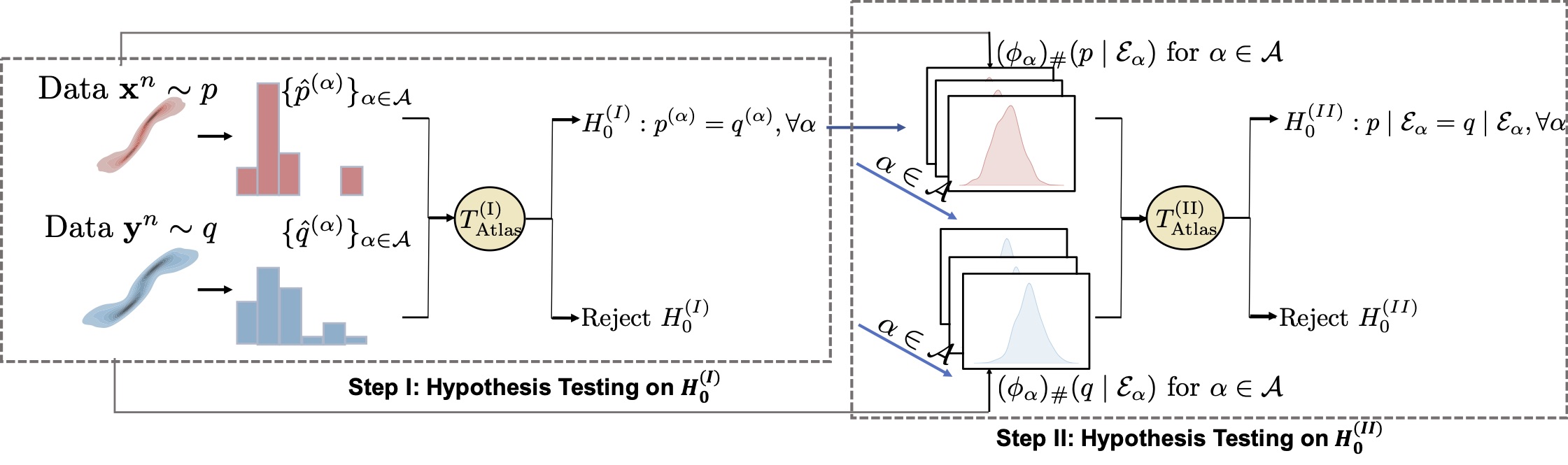}
\caption{
An overview of the proposed two-sample test with agnostic data, which consists of two steps: (I) perform a test that examines whether $p^{(\alpha)}=q^{(\alpha)}, \forall\alpha\in\cA$, i.e., the probabilities of supporting on $\text{supp}(\rho_{\alpha})$ are the same or not.
(II) When the null hypothesis $H_0^{\tI}$ in Step~(I) is not rejected, perform a test to decide whether $p\mid\mathcal{E}_{\alpha}=q\mid\mathcal{E}_{\alpha}, \forall\alpha\in\cA$, i.e., whether these two distributions conditioned on the event of supporting on $\text{supp}(\rho_{\alpha})$ are the same or not.
This step is by transferring these distributions based on chart mappings $\{\phi_{\alpha}\}_{\alpha\in\cA}$.
}
\label{Fig:diagram:agn}
\end{figure}
\paragraph{Step~\tII: Hypothesis Testing on $H_0^{\tII}$.}
For any two distributions $p,q$ supported on $\cU$, in order to examine the null hypothesis $H_0^{\tII}$, we consider the following testing function:
\begin{equation}
T(p,q):=\max_{\alpha\in\cA}~\cW\big(
(\phi_{\alpha})_{\#}(p\mid\mathcal{E}_{\alpha}),
(\phi_{\alpha})_{\#}(q\mid\mathcal{E}_{\alpha})
\big),\label{Eq:cv:p:T:p:q}
\end{equation}
where $\mathcal{W}(\cdot,\cdot)$ is the Wasserstein distance (see Definition~\ref{Def:Wasserstein}).
The testing function $T(p,q)=0$ if and only if $(\phi_{\alpha})_{\#}(p\mid\mathcal{E}_{\alpha})=(\phi_{\alpha})_{\#}(q\mid\mathcal{E}_{\alpha}), \forall\alpha$, i.e, if and only if the null hypothesis $H_0^{\tII}: p\mid\mathcal{E}_{\alpha}=q\mid\mathcal{E}_{\alpha}, \forall\alpha$ holds.
This testing function characterizes the discrepancy between projected distributions, which can be viewed as a special projected Wasserstein distance~\citep{wang2020twosample, wang2020twosamplekernel} with $\phi_{\alpha}, \alpha\in\cA$ being the candidate projection mappings.

Denote by $\hat{p}_n$ and $\hat{q}_n$ the empirical distributions from observations $\cv x^n$ and $\cv y^n$, respectively:
\begin{equation}
\hat{p}_n=\frac{1}{n}\sum_{i=1}^n\delta_{\cv x_i},\quad
\hat{q}_n=\frac{1}{n}\sum_{i=1}^n\delta_{\cv y_i}.\label{Eq:empirical:hp:hq}
\end{equation}
We design a two-sample test $\mathcal{T}_{\text{Atlas}}^{\tII}$ so that the null hypothesis $H_0^{\tII}$ is rejected if the empirical counter-part of the testing function $T(p,q)$ is above a pre-specified threshold:
\begin{equation}
\label{Stat:project:wasserstein}
T(\hat{p}_n, \hat{q}_n) \ge \ell_n^{\tII},
\end{equation}
where the threshold is defined as
\begin{equation}
\ell_{n}^{\tII}:=
\left\{
\begin{aligned}
&\left(
\frac{\log(c_1\eta^{-1})}{c_2n}
\right),&\quad\text{if $n<\frac{\log(c_1\eta^{-1})}{c_2}$},\\
&\left(
\frac{\log(c_1\eta^{-1})}{c_2n}
\right)^{\max\{d,3\}},&\quad\text{if $n\ge\frac{\log(c_1\eta^{-1})}{c_2}$}
\end{aligned}
\right.,\label{Eq:threshold:tII}
\end{equation}
with $c_1,c_2$ being some constants depending on $d,\cU$.

\begin{center}
\ifnum\paperversion=2
\begin{algorithm}[t!]
\caption{
Algorithm for Two-sample testing with a Given Atlas
} 
\fi
\ifnum\paperversion=1
\begin{algorithm}[Algorithm for Two-sample testing with a Given Atlas]
\fi
\label{Alg:agn}
\begin{algorithmic}[1]\label{Alg:ang:test}
\ifnum\paperversion=1
\textbf{\quad}
\fi
\REQUIRE{
Training samples $\cv x^n$ and $\cv y^n$, and a given atlas of $\cU$} 
\STATE{Formulate sample estimates of $\cv p, \cv q$, denoted as $\hat{\cv p}, \hat{\cv q}$, according to \eqref{Eq:p:q:alpha}.}\hfill\COMMENT{Line~1-5: Perform Step~\tI}
\STATE{Compute testing statistic $\|\hat{\cv p} - \hat{\cv q}\|_2$ according to \eqref{Eq:two:sample:test:multi:nominal}.}
\IF{$\|\hat{\cv p} - \hat{\cv q}\|_2\ge\ell_n^{\tI}$} 
\STATE{Reject $H_0$.}
\ELSE
\STATE{Formulate empirical distributions $\hat{p}_n,\hat{q}_n$ according to \eqref{Eq:empirical:hp:hq}.}\hfill\COMMENT{Line~6-12: Perform Step~\tII}
\STATE{
Compute testing statistic $T(\hat{p}_n, \hat{q}_n)$ according to \eqref{Eq:cv:p:T:p:q}.
}
\IF{$T(\hat{p}_n, \hat{q}_n)\ge\ell_n^{\tII}$} 
\STATE{Reject $H_0$.}
\ELSE
\STATE{Not reject $H_0$.}
\ENDIF
\ENDIF
\\
\textbf{Return} the decision of whether rejecting $H_0$ or not.
\end{algorithmic}
\end{algorithm}
\end{center}
With those notations in force, we summarize the overall procedure of $\mathcal{T}_{\text{Atlas}}$ in Algorithm~\ref{Alg:ang:test}.
In the following, we present an analysis of the testing properties of $\mathcal{T}_{\text{Atlas}}$.
\begin{theorem}[Test Properties of $\mathcal{T}_{\text{Atlas}}$]\label{Theorem:testing:agn}
Fix a level $\eta\in(0,1/2)$.
Under Assumption~\ref{Assumption:cU}(I) and \ref{Assumption:Atlas}, suppose 
\[
\min_{\alpha\in\cA}~\left(p^{(\alpha)}\land q^{(\alpha)}\right)\ge c,
\]
where $c$ is a positive constant, then it holds that:
\begin{itemize}
\item
\emph{Risk}: The two-sample test $\mathcal{T}_{\text{Atlas}}$ has type-I risk at most $\eta$;
\item
\emph{Power}: Under $H_1$, additionally assume the sample size $n$ is sufficiently large so that 
\begin{align}
\Delta_n^{\tI}&:=\|{\cv p} - {\cv q}\|_2 - \ell_{n}^{\tI} - 4/\sqrt{n}>0,\quad
\text{ or }\quad
\Delta_n^{\tII}:=T(p,q) - \ell_{n}^{\tII}>0,
\label{Eq:Delta:tI:tII}
\end{align}
then the power of $\mathcal{T}_{\text{Atlas}}$ is $1-c'n^{-1/\max\{d,2\}}\log n$, where the constant $c'$ depends on $c,d,\cU, \Delta_n^{\tI}, \Delta_n^{\tII}$.
\end{itemize}
\end{theorem}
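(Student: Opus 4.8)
The first observation is that the sequential rule collapses to a union: $\mathcal{T}_{\text{Atlas}}$ fails to reject $H_0$ exactly when Step~\tI\ accepts \emph{and} Step~\tII\ accepts. Writing $A=\{\|\hat{\cv p}-\hat{\cv q}\|_2\ge \ell_n^{\tI}\}$ for the Step~\tI\ rejection event and $B=\{T(\hat{p}_n,\hat{q}_n)\ge \ell_n^{\tII}\}$ for the Step~\tII\ rejection event, the test rejects precisely on $A\cup B$. I would therefore bound the type-I risk by $\Pr(A\cup B\mid H_0)\le \Pr(A\mid H_0)+\Pr(B\mid H_0)$ and the type-II risk by $\Pr(A^c\cap B^c\mid H_1)\le\min\{\Pr(A^c\mid H_1),\Pr(B^c\mid H_1)\}$, so that the two steps decouple and the ``or'' in the separation condition \eqref{Eq:Delta:tI:tII} is exactly what is needed for power.

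\textbf{Type-I risk.} Under $H_0$ both $H_0^{\tI}$ and $H_0^{\tII}$ hold. For Step~\tI\ we have $\cv p=\cv q$, so $\hat{\cv p}-\hat{\cv q}$ is centered and a direct variance computation gives $\mbE\|\hat{\cv p}-\hat{\cv q}\|_2\le(\mbE\|\hat{\cv p}-\hat{\cv q}\|_2^2)^{1/2}\le\sqrt{2/n}$. Because replacing one observation perturbs $\|\hat{\cv p}-\hat{\cv q}\|_2$ by $O(1/n)$, a bounded-difference (McDiarmid) inequality yields $\Pr(\|\hat{\cv p}-\hat{\cv q}\|_2\ge\sqrt{2/n}+t)\le\exp(-\Omega(nt^2))$, and calibrating $t$ against the explicit $\ell_n^{\tI}$ of \eqref{Eq:threshold:tI} (splitting the budget as $\eta/2$ per step) gives $\Pr(A\mid H_0)\le\eta/2$. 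For Step~\tII, $H_0^{\tII}$ forces $T(p,q)=0$, i.e.\ $(\phi_{\alpha})_{\#}(p\mid\mathcal{E}_\alpha)=(\phi_{\alpha})_{\#}(q\mid\mathcal{E}_\alpha)$ for each $\alpha$, so by the triangle inequality for $\W$ every term defining $T(\hat{p}_n,\hat{q}_n)$ is at most a sum of two empirical-to-population Wasserstein fluctuations on a bounded subset of $\mathbb{R}^d$. Using the empirical Wasserstein rate $\mbE\,\W(\hat{\mu}_m,\mu)\le C m^{-1/\max\{d,2\}}$ (with a $\log m$ factor at $d=2$) and its concentration, together with $\min_{\alpha}(p^{(\alpha)}\land q^{(\alpha)})\ge c$ to guarantee $\Theta(n)$ points per chart with high probability, I would show that $\ell_n^{\tII}$ dominates the null fluctuation and hence $\Pr(B\mid H_0)\le\eta/2$. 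A union bound then gives type-I risk at most $\eta$.

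\textbf{Power.} Under $H_1$ the type-II event is $A^c\cap B^c$, so it suffices to defeat whichever step the separation condition activates. If $\Delta_n^{\tI}>0$ then $\|\cv p-\cv q\|_2>\ell_n^{\tI}+4/\sqrt{n}$; from $\|\hat{\cv p}-\hat{\cv q}\|_2\ge\|\cv p-\cv q\|_2-\|\hat{\cv p}-\cv p\|_2-\|\hat{\cv q}-\cv q\|_2$ and absorbing the two expected deviations (each at most $1/\sqrt{n}$) into the $4/\sqrt{n}$ slack of $\Delta_n^{\tI}$, the same McDiarmid estimate gives $\Pr(A^c\mid H_1)\le\exp(-\Omega(n(\Delta_n^{\tI})^2))$, which is negligible against any polynomial rate. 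If instead $\Delta_n^{\tII}>0$ then $T(p,q)>\ell_n^{\tII}$; the reverse triangle inequality gives $T(\hat{p}_n,\hat{q}_n)\ge T(p,q)-(\text{sum of per-chart Wasserstein fluctuations})$, so $B^c$ forces the total fluctuation to exceed $\Delta_n^{\tII}$, and bounding it by its expectation $O(n^{-1/\max\{d,2\}}\log n)$ via Markov yields $\Pr(B^c\mid H_1)=O(n^{-1/\max\{d,2\}}\log n)$. Since $\Pr(A^c\cap B^c)$ is dominated by the smaller of the two probabilities, the slower Step~\tII\ term prevails and the power equals $1-c'n^{-1/\max\{d,2\}}\log n$, with $c'$ depending on $c,d,\cU,\Delta_n^{\tI},\Delta_n^{\tII}$.

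\textbf{Main obstacle.} Step~\tI\ is routine, reducing to multinomial concentration; the crux is Step~\tII. Achieving the sharp exponent $1/\max\{d,2\}$ requires (i) the correct expectation and deviation bounds for the empirical Wasserstein distance in $\mathbb{R}^d$, the only place where the \emph{intrinsic} dimension $d$ enters and where the $\log n$ factor appears precisely at $d=2$ (supplied by results of the type in \citep{fournier2015rate, weed2019estimation}); and (ii) careful handling of the conditional empirical measures $(\phi_{\alpha})_{\#}(\hat{p}_n\mid\mathcal{E}_\alpha)$, whose per-chart sample counts are binomial with mean $\Theta(n)$, whose partition-of-unity supports may overlap, and whose chart maps $\phi_\alpha$ must be controlled (e.g.\ Lipschitz) so that pushforward Wasserstein distances relate cleanly to ambient ones. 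Assembling the binomial count concentration, the Lipschitz control of the charts, and the Wasserstein rate into one high-probability bound matching the explicit threshold $\ell_n^{\tII}$ is the delicate step on which the entire power guarantee rests.
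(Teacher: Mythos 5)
Your proposal is correct and follows essentially the same route as the paper: the test rejects on the union of the two step events; Step (I) is controlled by concentration of the $\mathcal{L}_2$ statistic (the paper phrases this as a biased MMD estimate with the linear kernel $k(x,y)=\langle x,y\rangle$ and invokes Gretton's concentration lemmas, which are themselves McDiarmid-based, so your direct variance-plus-McDiarmid argument is equivalent); and Step (II) is controlled per chart via the Fournier--Guillin expectation and concentration bounds for empirical Wasserstein distances on bounded subsets of $\mathbb{R}^d$, combined with a union bound over charts, concentration of the per-chart sample counts $N=\Theta(n)$ (the paper's event $\mathcal{G}$, via Weissman et al.), and Markov's inequality for the power. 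Your case analysis under $H_1$ — exponential type-II decay when $\Delta_n^{\tI}>0$, and $O\bigl(n^{-1/\max\{d,2\}}\log n\bigr)$ via Markov when $\Delta_n^{\tII}>0$, with the slower rate prevailing in the worst case — matches the paper's proof of Theorem~\ref{Theorem:testing:agn}.
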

The proof of Theorem~\ref{Theorem:testing:agn} is provided in Section~\ref{Sec:proof:Thm:agn:testing}.
The following remark explains how to obtain tighter thresholds in practice so that the testing power can be improved.
\begin{remark}[Bootstrap]\label{Remark:bootstrap}
The thresholds developed in $\mathcal{T}_{\text{Atlas}}^{\tI}$ and $\mathcal{T}_{\text{Atlas}}^{\tII}$ are usually conservative as they are based on non-asymptotic concentration inequalities.
In practice, we use the bootstrap approach to obtain tighter thresholds and then perform two-sample tests.
In the following, we discuss how to perform bootstrap for $\mathcal{T}_{\text{Atlas}}^{\tII}$ as an example.
Denote by $\cv g^{2n}=(\cv x_1,\ldots,\cv x_n,\cv y_1,\ldots,\cv y_n)$ the pooled data, and $\hat{\gamma}_{2n}=\frac{1}{2n}\sum_{i=1}^{2n}\delta_{\cv g_i}$.
Then for the $b$-th batch, let $\cv x_1^{(b)}, \ldots, \cv x_n^{(b)}$ and $\cv y_1^{(b)},\ldots,\cv y_n^{(b)}$ be i.i.d. samples from $\hat{\gamma}$, and denote by $\hat{p}_n^{(b)}$ and $\hat{q}_n^{(b)}$ the corresponding empirical measures.
Then define the bootstrap statistic
\[
T_{n}^{(b)} = T(\hat{p}_n^{(b)}, \hat{q}_n^{(b)}).
\]
The threshold to make the test (approximately) at level $\eta$ becomes
\[
T_{n, 1-\eta} = \inf\left\{t:~\text{Pr}\left(
T_{n}^{(b)}\le t
\right)\ge 1-\eta\right\}.
\]
Applying the argument in \cite{van1996weak}
implies that a two-sample test that rejects $H_0$ when 
\[
T(\hat{p}_n, \hat{q}_n) \ge T_{n, 1-\eta}
\]
is asymptotically consistent with level $\eta$.
Moreover, the value of $T_{n, 1-\eta}$ can be estimated by computing $T_{n}^{(b)}, b=1,2,\ldots,N_B$ for a sufficiently large $N_B$ and then taking the empirical $(1-\eta)$-quantile.
\end{remark}

\subsection{Two-sample test when an atlas is not given}
\label{Sec:test:structured}
Now we consider the case where an atlas of $\cU$ is not given, but data-generating distributions $p$ and $q$ supported on $\cU$ have smooth density functions.
In this case, we use the H{\"o}lder IPM $d_{\HU}(\cdot,\cdot)$ in Example~\ref{Example:special:IPM} as the testing statistic.
The framework of the proposed IPM-based two-sample test is provided in Figure~\ref{Fig:structured}.
We find that the H{\"o}lder IPM can identify any two distinct distributions supported on $\cU$ that have smooth density functions.
The formal result is summarized in Proposition~\ref{Proposition:discriminative}, the proof of which is provided in Section~\ref{Thm:test:property:proof}.

\begin{figure}[t!]
\centering
\includegraphics[width=\textwidth]{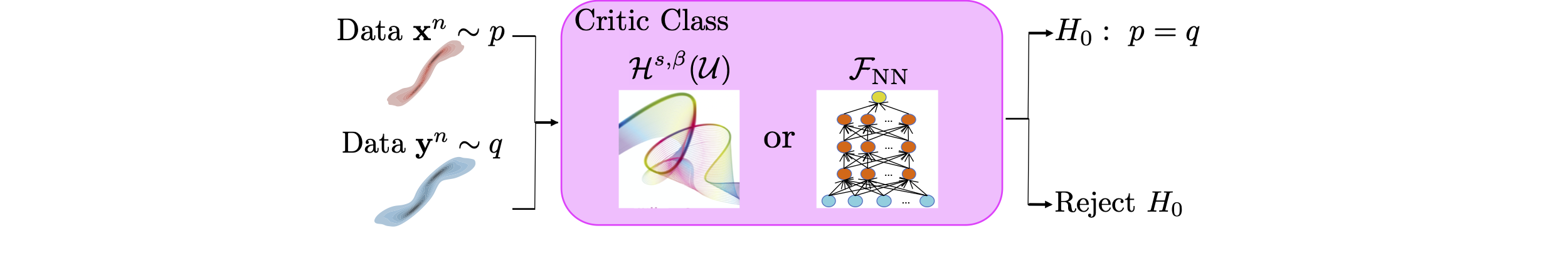}
\caption{The architecture of proposed IPM-based test with Structured data.
We take the critic of IPM as a function in H{\"o}lder space $\cH^{s,\beta}(\cU)$ (see Section~\ref{Sec:test:structured}) or a neural network function (see Section~\ref{Sec:estimate:IPM:NN}).
}
\label{Fig:structured}
\end{figure}

\begin{assumption}[Smooth Densities on $\cU$]\label{Assumption:smooth:density}
The data generating distributions $p$ and $q$ are supported on $\cU$ with the well-defined density functions $\mathfrak{h}_p, \mathfrak{h}_q\in \HU$, respectively.
\end{assumption}
\begin{proposition}[Discriminative Property]
\label{Proposition:discriminative}
Under Assumption \ref{Assumption:cU}(I) and Assumption~\ref{Assumption:smooth:density},
the H{\"o}lder IPM satisfies the discriminative property, i.e., $d_{\HU}(p,q)=0$ if and only if $p=q$.
\end{proposition}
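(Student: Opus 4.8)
The plan is to prove both implications, with the forward direction being immediate and the reverse direction resting on a single well-chosen critic. Recall from Example~\ref{Example:special:IPM} that $d_{\HU}(p,q)=\sup_{f\in\HU}\mathbb{E}_p[f(\cv x)]-\mathbb{E}_q[f(\cv x)]$. The direction $p=q\implies d_{\HU}(p,q)=0$ is trivial, since then $\mathbb{E}_p[f]-\mathbb{E}_q[f]=0$ for every $f\in\HU$ and the supremum is zero. The content is in the converse.

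For the converse, I would first exploit the symmetry of the class. By Definition~\ref{Def:Holder:cU}, the defining constraints on $f$ (the bound $|D^{\cv s}(f\circ\phi_\alpha^{-1})|\le 1$ for $|\cv s|\le s$ and the Hölder seminorm bound for $|\cv s|=s$) are invariant under $f\mapsto -f$, so $f\in\HU$ if and only if $-f\in\HU$. Consequently $d_{\HU}(p,q)=0$ forces both $\mathbb{E}_p[f]-\mathbb{E}_q[f]\le 0$ and $\mathbb{E}_p[-f]-\mathbb{E}_q[-f]\le 0$ for every $f\in\HU$, which upgrades the vanishing supremum to the equality $\mathbb{E}_p[f]=\mathbb{E}_q[f]$ for all $f\in\HU$.

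Next I would insert the density difference itself as the test function. Under Assumption~\ref{Assumption:smooth:density}, $p$ and $q$ admit Hölder densities $\h_p,\h_q\in\HU$ with respect to the Riemannian volume measure $\mu$ on $\cU$, so $\mathbb{E}_p[f]-\mathbb{E}_q[f]=\int_{\cU}f(\cv x)\big(\h_p(\cv x)-\h_q(\cv x)\big)\,\diff\mu(\cv x)$ for every $f$. Setting $g:=\tfrac12(\h_p-\h_q)$, the equality of expectations applied to $f=g$ gives $\tfrac12\int_{\cU}(\h_p-\h_q)^2\,\diff\mu=0$; since $\cU$ is compact (hence $\mu$ finite) and the densities are bounded (the $\cv s=0$ case of the derivative bound forces $|\h_p|,|\h_q|\le 1$), this integral is a legitimate nonnegative quantity, and its vanishing yields $\h_p=\h_q$ $\mu$-almost everywhere, i.e. $p=q$.

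The hard part will be verifying that $g=\tfrac12(\h_p-\h_q)$ genuinely lies in $\HU$, which is what licenses plugging it into the equality of expectations. I would check both bullets of Definition~\ref{Def:Holder:cU} chart by chart: since differentiation and precomposition with $\phi_\alpha^{-1}$ are linear, $D^{\cv s}(g\circ\phi_\alpha^{-1})=\tfrac12\big(D^{\cv s}(\h_p\circ\phi_\alpha^{-1})-D^{\cv s}(\h_q\circ\phi_\alpha^{-1})\big)$, and the factor $\tfrac12$ combined with the triangle inequality keeps both the supremum bound and the Hölder seminorm at most $\tfrac12(1+1)=1$; the unit normalization built into Definition~\ref{Def:Holder:cU} is precisely what makes this constant work. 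A secondary point to handle with care is the global integral representation over $\cU$, which I would make rigorous by decomposing through the partition of unity $\{\rho_\alpha\}$ and the local charts before concluding $\h_p=\h_q$ $\mu$-a.e. (An alternative, less direct route would argue that $\HU$ is rich enough—after rescaling—to separate measures via density of smooth functions in $C(\cU)$ and the Riesz representation theorem, but the explicit critic $g$ gives $p=q$ in one step and is cleaner.)
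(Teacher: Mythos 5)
Your proposal is correct and follows essentially the same route as the paper's own proof: both hinge on inserting the critic $\overline{f}=\tfrac12(\mathfrak{h}_p-\mathfrak{h}_q)$ into the vanishing IPM to obtain $\tfrac12\int_{\cU}(\mathfrak{h}_p-\mathfrak{h}_q)^2=0$ and conclude $\mathfrak{h}_p=\mathfrak{h}_q$ almost everywhere. In fact, you supply two details the paper leaves implicit---the symmetry argument upgrading the zero supremum to equality of expectations, and the chart-by-chart verification that the factor $\tfrac12$ keeps $\overline{f}$ inside $\HU$---so your write-up is a more complete version of the same argument.
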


Proposition~\ref{Proposition:discriminative} reveals that as long as densities of target distributions are H{\"o}lder $(s,\beta)$-smooth, the H{\"o}lder IPM can identify the difference between them.
We thereby design a two-sample test $\mathcal{T}_{\text{H{\"o}lder}}$ so that the null hypothesis $H_0$ is rejected if
\begin{equation}\label{Eq:test:statistic:case:II}
d_{\HU}(\hat{p}_n,\hat{q}_n)\ge \ell_{n}.
\end{equation}
where $\ell_{n}$ is a threshold, and $\hat{p}_n,\hat{q}_n$ are empirical distributions from $\cv x^n, \cv y^n$, respectively. %
The following result summarizes the testing properties of $\mathcal{T}_{\text{H{\"o}lder}}$.
\begin{theorem}[Test Properties of $\mathcal{T}_{\text{H{\"o}lder}}$]
\label{Theorem:test:property}
Fix a level $\eta\in(0,1/2)$.
Under Assumption~\ref{Assumption:cU}(I) and Assumption~\ref{Assumption:smooth:density},
suppose we set the threshold of the two-sample test $\cT_{\text{H{\"o}lder}}$ defined in \eqref{Eq:test:statistic:case:II} as
\[
\ell_{n}=c_1n^{-\frac{s+\beta}{d}}+\sqrt{4\log\frac{2}{\eta}}n^{-\frac{1}{2}},
\]
for some constant $c_1$ depending on $s,\beta,d,|\mathcal{A}|$, 
then it holds that:
\begin{itemize}
\item
{\it Risk}: 
The type-I risk of $\cT_{\text{H{\"o}lder}}$ is at most $\eta$;
\item
{\it Power}:
Under $H_1$, suppose the sample size $n$ is sufficiently large so that 
\[
\Delta_n:=d_{\HU}(p,q)-\ell_{n}>0,
\]
the power of $\mathcal{T}_{\text{H{\"o}lder}}$ is at least $1 - \frac{c_2}{\Delta_n}\cdot n^{-\frac{s+\beta}{d}}$, where $c_2$ is a constant depending on $s,\beta,d,|\mathcal{A}|$.
\end{itemize}
\end{theorem}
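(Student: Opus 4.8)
The plan is to treat $d_{\HU}$ as a pseudometric on distributions (it is symmetric because $\HU$ is closed under $f\mapsto -f$, and it obeys the triangle inequality directly from its definition as a supremum), and to reduce both claims to a single quantitative control of how fast an empirical measure converges to its population counterpart in this metric. The engine of the whole argument is the \emph{master estimate}
\[
\mathbb{E}\big[d_{\HU}(\hat{p}_n, p)\big] \le C\, n^{-\frac{s+\beta}{d}},
\qquad
\mathbb{E}\big[d_{\HU}(\hat{q}_n, q)\big] \le C\, n^{-\frac{s+\beta}{d}},
\]
with $C$ depending only on $s,\beta,d,|\cA|$. Everything else is a short deterministic reduction plus an elementary concentration or Markov step.

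To establish the master estimate I would first apply the standard symmetrization inequality to bound $\mathbb{E}[d_{\HU}(\hat{p}_n,p)]$ by a constant times the Rademacher complexity of $\HU$ on the $n$ samples, and then control that complexity by Dudley's entropy integral. The integral requires the sup-norm metric entropy of the Hölder ball $\HU$ on $\cU$. Here I would use that Assumption~\ref{Assumption:cU}(I) furnishes a \emph{finite} atlas $\{(U_\alpha,\phi_\alpha)\}_{\alpha\in\cA}$ together with a subordinate partition of unity, so that each $f\in\HU$ restricts on a chart to $f\circ\phi_\alpha^{-1}$, a member of a classical Hölder ball on a bounded subset of $\mathbb{R}^d$. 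Gluing the finitely many local covers yields the entropy bound
\[
\log N\big(\epsilon,\HU,\|\cdot\|_{\infty}\big) \le C_0\, \epsilon^{-\frac{d}{s+\beta}},
\]
whose exponent involves the \emph{intrinsic} dimension $d$, not the ambient $D$. In the regime of interest $s+\beta<d/2$ this exponent exceeds $2$, so the entropy integral diverges at the origin; I would therefore truncate it at a scale $\delta$ and balance the two contributions, namely $\delta$ against $n^{-1/2}\int_\delta^1 \epsilon^{-\frac{d}{2(s+\beta)}}\,\mathrm{d}\epsilon$. The optimal choice of $\delta$, of order $n^{-(s+\beta)/d}$, then produces exactly the nonparametric rate $n^{-(s+\beta)/d}$.

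Given the master estimate, the two conclusions drop out. For the type-I risk, under $H_0$ we have $p=q$, so the triangle inequality gives $d_{\HU}(\hat{p}_n,\hat{q}_n)\le d_{\HU}(\hat{p}_n,p)+d_{\HU}(q,\hat{q}_n)$, whose mean is at most $c_1 n^{-(s+\beta)/d}$. Since every $f\in\HU$ satisfies $\|f\|_\infty\le 1$, altering one of the $2n$ samples perturbs $d_{\HU}(\hat{p}_n,\hat{q}_n)$ by at most $2/n$; McDiarmid's bounded-difference inequality then yields a sub-Gaussian deviation of order $\sqrt{\log(2/\eta)}\,n^{-1/2}$, and summing the mean bound and the deviation reproduces the threshold $\ell_n$, so $\text{Pr}(d_{\HU}(\hat{p}_n,\hat{q}_n)\ge \ell_n)\le\eta$. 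For the power, under $H_1$ Proposition~\ref{Proposition:discriminative} guarantees $d_{\HU}(p,q)>0$, and the reverse triangle inequality gives $d_{\HU}(\hat{p}_n,\hat{q}_n)\ge d_{\HU}(p,q)-d_{\HU}(\hat{p}_n,p)-d_{\HU}(\hat{q}_n,q)$. Hence the test rejects whenever $d_{\HU}(\hat{p}_n,p)+d_{\HU}(\hat{q}_n,q)\le \Delta_n$, and a single Markov inequality applied to this nonnegative sum, together with the master estimate, bounds the failure probability by $c_2\Delta_n^{-1}n^{-(s+\beta)/d}$, which is precisely the claimed power.

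The main obstacle is the master estimate, and inside it the transfer of the Euclidean Hölder entropy bound to the manifold and the truncated chaining in the undersmoothed regime $s+\beta<d/2$. The delicate bookkeeping is verifying that the chart maps $\phi_\alpha$ and the partition of unity distort Hölder norms only by constants (controlled through the compactness and smoothness of $\cU$ in Assumption~\ref{Assumption:cU}(I)), so that the entropy exponent is genuinely $d/(s+\beta)$ and the rate depends on $d$ alone; by contrast, the symmetrization, the McDiarmid tail, and the Markov step are entirely routine.
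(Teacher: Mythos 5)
Your proposal is correct and follows essentially the same route as the paper's proof: the identical Dudley entropy-integral master estimate $\mathbb{E}[d_{\HU}(\hat{p}_n,p)]\le C n^{-(s+\beta)/d}$ built from the chart-glued covering number of $\HU$, McDiarmid's bounded-difference inequality with increment $2/n$ for the type-I risk, and a Markov-plus-triangle-inequality argument for the power. The only cosmetic differences are that you insert an explicit symmetrization step before chaining and apply Markov to the sum $d_{\HU}(\hat{p}_n,p)+d_{\HU}(\hat{q}_n,q)$ rather than to $\left|d_{\HU}(p,q)-d_{\HU}(\hat{p}_n,\hat{q}_n)\right|$; your explicit flagging of the regime $s+\beta<d/2$, where the entropy integral diverges and truncation is needed, is if anything more careful than the paper, whose stated rate tacitly assumes it.
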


The proof of Theorem~\ref{Theorem:test:property} is provided in Section~\ref{Thm:test:property:proof}.
This theorem indicates that as long as the sample size $n$ grows sufficiently large so that $\Delta_n=O(1)$, the type-II risk of $\mathcal{T}_{\text{H{\"o}lder}}$ with a fixed level $\eta$ scales of $O\left(n^{-\frac{s+\beta}{d}}\right)$.

\subsection{Estimating H{\"o}lder IPM by neural networks}
\label{Sec:estimate:IPM:NN}
To mitigate the heavy computation burden of directly evaluating the H{\"o}lder IPM, we parameterize the H{\"o}lder function class using neural networks.
In other words, the H{\"o}lder IPM testing statistic in \eqref{Eq:test:statistic:case:II} is approximated using the IPM with a neural network function class
\[
d_{\FNN}(\hat{p}_n,\hat{q}_n),
\]
where the class $\FNN(R,\kappa,L,t,K)$ is defined in \eqref{Eq:specific:function:class}. 
We design a two-sample test $\mathcal{T}_{\text{NN}}$ so that the null hypothesis $H_0$ is rejected if
\begin{equation}\label{Eq:test:statistic:case:II:app}
d_{\FNN}(\hat{p}_n,\hat{q}_n)\ge \ell_{n},
\end{equation}
where $\ell_{n}$ is a threshold.
The following theorem summarizes the testing properties of $\mathcal{T}_{\text{NN}}$.
\begin{theorem}[Test Properties of $\mathcal{T}_{\text{NN}}$]
\label{Theorem:test:property:app}
Fix a level $\eta\in(0,1/2)$.
Under Assumption~\ref{Assumption:cU} and Assumption~\ref{Assumption:smooth:density}, we specify hyper-parameters of $\FNN(R,\kappa,L,t,K)$ as 
\[
\begin{aligned}
R&=1,\quad \kappa=O(\max\{1, B, \sqrt{d}, \tau^2\}),\quad L=O\left(
\frac{s+\beta}{2(s+\beta)+d}\log(nD)
\right),\quad t={O}\left(
n^{\frac{d}{2(s+\beta)+d}}+D
\right),\quad\text{and }\\
K&={O}\left(
\frac{s+\beta}{2(s+\beta)+d}n^{\frac{d}{2(s+\beta)+d}}\log n
+\frac{s+\beta}{2(s+\beta)+d}D\log n + D\log D
\right),
\end{aligned}
\]
and set the threshold of the two-sample test $\cT_{\text{NN}}$ defined in \eqref{Eq:test:statistic:case:II:app} as
\[
\ell_{n}=c_1n^{-\frac{s+\beta}{2(s+\beta)+d}}\log^2n+\sqrt{4\log\frac{2}{\eta}}n^{-\frac{1}{2}},
\]
where $c_1$ is some constant depending on $\tau,\sqrt{d},B$ and linearly depending on $\sqrt{D}$.
Then it holds that:
\begin{itemize}
\item
{\it Risk}: 
The type-I risk of $\cT_{\text{NN}}$ is at most $\eta$;
\item
{\it Power}:
Under $H_1$, suppose the sample size $n$ is sufficiently large so that 
\[
\Delta_n:=d_{\HU}(p,q)- \left[\ell_{n} + c_2n^{-\frac{s+\beta}{2(s+\beta)+d}}\right]>0,
\]
the power of $\mathcal{T}_{\text{NN}}$ is at least $1 - \frac{c_3}{\Delta_n}\cdot n^{-\frac{s+\beta}{d}}$, where $c_2$ is a constant depending on $\tau,\sqrt{d},B$ and linearly depending on $\sqrt{D}$, and $c_3$ is a constant depending on $s,\beta,d,|\mathcal{A}|$.
\end{itemize}
\end{theorem}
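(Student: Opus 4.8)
The plan is to bracket the empirical network statistic $d_{\FNN}(\hat p_n,\hat q_n)$ between quantities governed by the population H{\"o}lder IPM $d_{\HU}(p,q)$, splitting the discrepancy into an \emph{approximation error} from replacing the H{\"o}lder critic class by the ReLU network class and an \emph{estimation error} from replacing population expectations by empirical ones. The one external ingredient I would invoke is the manifold network-approximation theorem: for the architecture $\FNN(R,\kappa,L,t,K)$ specified in the statement, every $f\in\HU$ admits a surrogate $\tilde f\in\FNN$ with $\|f-\tilde f\|_{\infty,\mathcal{U}}\le\varepsilon_{\mathrm{approx}}=O\big(n^{-(s+\beta)/(2(s+\beta)+d)}\big)$. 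Because a partition of unity and the chart maps reduce uniform approximation on $\mathcal{U}$ to approximating $d$-variate H{\"o}lder functions, the rate is controlled by the intrinsic dimension $d$, while only the width $t$ and sparsity $K$ absorb the linear-in-$D$ cost of encoding the embedding and the local coordinates.

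For the type-I risk I would work under $H_0\colon p=q$. Using $p=q$ to recenter, a symmetrization step bounds the mean $\mathbb{E}[d_{\FNN}(\hat p_n,\hat q_n)]$ by a constant multiple of the Rademacher complexity of $\FNN$ restricted to $\mathcal{U}$; estimating this through a Dudley entropy integral against a covering number that scales like $K$ (up to logarithms) yields $\mathbb{E}[d_{\FNN}(\hat p_n,\hat q_n)]\le c_1 n^{-(s+\beta)/(2(s+\beta)+d)}\log^2 n$, which is precisely the first term of $\ell_{n}$. Since $\|f\|_\infty\le R=1$, replacing a single sample perturbs the statistic by at most $2/n$, so McDiarmid's bounded-difference inequality gives a sub-Gaussian tail; calibrating the deviation term $\sqrt{4\log(2/\eta)}\,n^{-1/2}$ (the second term of $\ell_{n}$) then forces the rejection probability under $H_0$ below $\eta/2\le\eta$.

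The power analysis is the subtle part, since the faster rate $n^{-(s+\beta)/d}$ must be extracted even though the threshold is built from the slower uniform network rate. The key is to never take a supremum over the whole network class when bounding fluctuations. I would fix a near-maximizing H{\"o}lder critic $f^\ast\in\HU$ with $\mathbb{E}_p[f^\ast]-\mathbb{E}_q[f^\ast]\ge d_{\HU}(p,q)-\delta$, take its surrogate $\tilde f\in\FNN$, and lower bound $d_{\FNN}(\hat p_n,\hat q_n)\ge\mathbb{E}_{\hat p_n}[\tilde f]-\mathbb{E}_{\hat q_n}[\tilde f]$. Swapping $\tilde f$ back to $f^\ast$ costs only $2\varepsilon_{\mathrm{approx}}$, and since $f^\ast$ lies in the symmetric class $\HU$, the remaining empirical fluctuations are dominated by $d_{\HU}(\hat p_n,p)$ and $d_{\HU}(\hat q_n,q)$, giving
\[
d_{\FNN}(\hat p_n,\hat q_n)\ge d_{\HU}(p,q)-\delta-2\varepsilon_{\mathrm{approx}}-d_{\HU}(\hat p_n,p)-d_{\HU}(\hat q_n,q).
\]
Absorbing $\delta+2\varepsilon_{\mathrm{approx}}$ into the $c_2 n^{-(s+\beta)/(2(s+\beta)+d)}$ term inside $\Delta_n$, a failure of the test forces $d_{\HU}(\hat p_n,p)+d_{\HU}(\hat q_n,q)>\Delta_n$; Markov's inequality together with the H{\"o}lder-IPM estimation rate $\mathbb{E}[d_{\HU}(\hat p_n,p)]=O(n^{-(s+\beta)/d})$ established for Theorem~\ref{Theorem:test:property} then bounds the type-II risk by $\frac{c_3}{\Delta_n}n^{-(s+\beta)/d}$, as claimed.

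I expect the principal obstacle to be the network-approximation theorem itself: realizing $\HU$ with the exact architecture in the statement while keeping $t$ and $K$ linear in $D$ (up to logarithms) yet making $\varepsilon_{\mathrm{approx}}$ depend only on the intrinsic dimension $d$, which requires carefully composing chart projections, a partition of unity, and local monomial approximations as sparse ReLU networks. On the statistical side, the delicate point is conceptual rather than computational: the type-I bound unavoidably pays the uniform network rate $n^{-(s+\beta)/(2(s+\beta)+d)}$, whereas the power bound sidesteps it by testing against a single fixed smooth critic and charging its fluctuation to the intrinsic-dimension H{\"o}lder rate $n^{-(s+\beta)/d}$. Reconciling this asymmetry—and verifying that $c_1,c_2$ retain only a $\sqrt{D}$ dependence through the output and covering-number bounds—is what makes the argument go through.
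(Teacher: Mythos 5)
Your proposal is correct and follows essentially the same route as the paper: the same manifold network-approximation theorem with the same balance $\epsilon = O\big(n^{-\frac{s+\beta}{2(s+\beta)+d}}\big)$, McDiarmid plus a Dudley entropy integral against the network covering number for the type-I risk, and a reduction of the type-II event to the empirical H{\"o}lder-IPM fluctuations $d_{\HU}(\hat{p}_n,p)+d_{\HU}(\hat{q}_n,q)$ controlled by Markov's inequality at the intrinsic rate $n^{-\frac{s+\beta}{d}}$. The only cosmetic difference is that you inline the paper's Proposition~\ref{proposition:Holder:IPM:error} by fixing a single near-optimal population critic and its network surrogate, whereas the paper applies that proposition to the empirical pair $(\hat{p}_n,\hat{q}_n)$ and then invokes Theorem~\ref{Theorem:test:property} with the inflated threshold $\ell_n+2\epsilon$; the resulting inequality chains are equivalent.
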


The proof of Theorem~\ref{Theorem:test:property:app} is provided in Section~\ref{Sec:proof:Theorem:test:property:app}.
This theorem reveals that the type-II risk of $\mathcal{T}_{\text{NN}}$ with a fixed level $\eta$ scales in the order of $n^{-(s+\beta)/d}$, which is at the same rate of the type-II risk of $\mathcal{T}_{\text{H{\"o}lder}}$.
Moreover, we note that the thresholds in Theorem~\ref{Theorem:test:property} and Theorem~\ref{Theorem:test:property:app} are conservative bounds.
To design practical tests, we compute those thresholds using the bootstrap idea similar to Remark~\ref{Remark:bootstrap}.

\section{Proof of Theorem~\ref{Theorem:testing:agn}}
\label{Sec:proof:Thm:agn:testing}

We first investigate the testing properties of $\mathcal{T}_{\text{Atlas}}^{\tI}$ and $\mathcal{T}_{\text{Atlas}}^{\tII}$, which are summarized in Lemma~\ref{Lemma:testing:Ang:I}, \ref{Lemma:N:lower:bound}, and \ref{Lemma:test:property:projected:samples}.
Based on those technical lemmas, we give the proof of 
Theorem~\ref{Theorem:testing:agn}.

\subsection{Testing performance of $\mathcal{T}_{\text{Atlas}}^{\tI}$}
In the following lemma, we characterize the testing performance of $\mathcal{T}_{\text{Atlas}}^{\tI}$.
\begin{lemma}[Test Properties of $\mathcal{T}_{\text{Atlas}}^{\tI}$]
\label{Lemma:testing:Ang:I}
Fix a level $\eta\in(0,1/2)$.
Under Assumption~\ref{Assumption:cU}(I) and \ref{Assumption:Atlas},
it holds that:
\begin{itemize}
\item
{\it Risk}: 
The type-I risk of $\cT_{\text{Atlas}}^{\tI}$ is at most $\eta$;
\item
{\it Power}:
Under $H_1$, suppose the sample size $n$ is sufficiently large so that $\Delta_n^{\tI}>0$ (see \eqref{Eq:Delta:tI:tII}),
the power of $\mathcal{T}_{\text{Atlas}}^{\tI}$ is at least $1 - 2\exp\left(
-\frac{(\Delta_n^{\tI})^2n}{4}
\right)$.
\end{itemize}
\end{lemma}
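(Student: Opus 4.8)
The plan is to reduce both claims to two ingredients about the $\ell_2$ statistic $W:=\|\hat{\cv p}-\hat{\cv q}\|_2$: a sharp bound on its expectation from a second-moment computation, and a concentration of $W$ about its mean via the bounded-differences (McDiarmid) inequality. Throughout I use that $n\hat{\cv p}$ and $n\hat{\cv q}$ are independent multinomial counts with cell probabilities $\cv p$ and $\cv q$, so each coordinate $\hat p^{(\alpha)}$ is an average of i.i.d.\ Bernoulli$(p^{(\alpha)})$ variables, and that modifying a single one of the $2n$ samples relocates exactly one $1/n$ unit of mass, hence perturbs $\hat{\cv p}$ (or $\hat{\cv q}$) by at most $O(1/n)$ in $\ell_2$; this is precisely the bounded-differences condition McDiarmid requires.

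For the \emph{risk}, I work under $H_0^{\tI}$, where $\cv p=\cv q$. Since the two samples are independent and the coordinatewise estimators are unbiased, a direct expansion gives $\mbE[W^2]=\frac{2}{n}\sum_{\alpha\in\cA}p^{(\alpha)}(1-p^{(\alpha)})\le \frac{2}{n}$, using $\sum_{\alpha}p^{(\alpha)}=1$; by Jensen's inequality $\mbE[W]\le\sqrt{2/n}$, which is exactly the leading term of $\ell_n^{\tI}$. Applying McDiarmid to $W$ as a function of the $2n$ samples then yields a sub-Gaussian upper tail $\Pr(W\ge \mbE[W]+t)\le \exp(-c\,n t^2)$; choosing $t$ equal to the second term $\sqrt{2/n}\sqrt{2\log\eta^{-1}}$ of the threshold makes this probability at most $\eta$. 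Combining, $\Pr_{H_0}(W\ge\ell_n^{\tI})\le\eta$, which is the type-I bound.

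For the \emph{power}, I work under $H_1$ and use the reverse triangle inequality $W\ge \|\cv p-\cv q\|_2-A-B$ with $A:=\|\hat{\cv p}-\cv p\|_2$ and $B:=\|\hat{\cv q}-\cv q\|_2$. A type-II error ($W<\ell_n^{\tI}$) therefore forces $A+B> \|\cv p-\cv q\|_2-\ell_n^{\tI}=\Delta_n^{\tI}+4/\sqrt{n}$. The same second-moment bound gives $\mbE[A],\mbE[B]\le 1/\sqrt{n}$, so the $4/\sqrt{n}$ slack built into the definition of $\Delta_n^{\tI}$ absorbs both means; splitting the remaining gap evenly between $A$ and $B$ and applying McDiarmid to each (now a function of only its own $n$ samples), together with a union bound, gives $\Pr_{H_1}(W<\ell_n^{\tI})\le 2\exp(-n(\Delta_n^{\tI})^2/4)$, i.e.\ power at least $1-2\exp(-n(\Delta_n^{\tI})^2/4)$.

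The main obstacle is the concentration of the non-smooth $\ell_2$-norm statistic: $W$ is not coordinatewise Lipschitz in the naive sense, but it \emph{is} a bounded-differences functional of the i.i.d.\ samples, and the crux is verifying the correct per-sample sensitivity and then tracking constants so that $\mbE[W]$ and the McDiarmid deviation term combine into precisely the stated threshold $\sqrt{2/n}\,(1+\sqrt{2\log\eta^{-1}})$. The second-moment identity for $\mbE[W^2]$ and the bookkeeping of the $4/\sqrt{n}$ correction—which is what aligns the empirical fluctuations $A,B$ with the population gap $\|\cv p-\cv q\|_2$—are the delicate steps; the remainder is routine.
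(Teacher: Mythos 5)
Your proposal is correct: the constants check out exactly (with the $\sqrt{2}/n$ per-sample $\ell_2$-sensitivity, McDiarmid gives $\Pr\{W\ge \mbE[W]+t\}\le \exp(-nt^2/2)$ over $2n$ samples, so $t=2\sqrt{\log\eta^{-1}/n}$ yields type-I risk at most $\eta^2\le\eta$; and splitting the deviation $\Delta_n^{\tI}$ evenly between $\|\hat{\cv p}-\cv p\|_2$ and $\|\hat{\cv q}-\cv q\|_2$ with one-sided McDiarmid on each plus a union bound gives exactly $2\exp(-n(\Delta_n^{\tI})^2/4)$). However, your route is genuinely different in presentation from the paper's. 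The paper does not run any concentration argument itself: it observes that $\|\cv p-\cv q\|_2$ is the MMD with the linear kernel $k(x,y)=\inp{x}{y}$ on the indicator vectors (bounded by $1$), that $\|\hat{\cv p}-\hat{\cv q}\|_2$ is the corresponding biased MMD estimate, and then cites two black-box results of Gretton et al.\ (2012) — Corollary~9 for the null (which directly produces the threshold $\sqrt{2/n}(1+\sqrt{2\log\eta^{-1}})$) and Theorem~7 for the alternative, applied with $\epsilon=\Delta_n^{\tI}$ (which directly produces the $4/\sqrt{n}$ term and the bound $2\exp(-n(\Delta_n^{\tI})^2/4)$). Your proof essentially re-derives those two cited results in this special case: the second-moment-plus-Jensen bound $\mbE[W]\le\sqrt{2/n}$ plays the role of Gretton's expectation bound, and your reverse-triangle-inequality decomposition into two one-sample deviations with a union bound replaces the paper's single two-sided concentration of $\big|\|\cv p-\cv q\|_2-\|\hat{\cv p}-\hat{\cv q}\|_2\big|$. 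What the paper's reduction buys is brevity; what your derivation buys is a self-contained argument that makes transparent where each piece of the threshold and of the slack comes from — in particular it reveals that the $4/\sqrt{n}$ correction in $\Delta_n^{\tI}$ is twice what is actually needed (the two empirical fluctuations contribute only $2/\sqrt{n}$ in expectation) and that the null bound is really $\eta^2$, both of which are invisible when the Gretton results are used as black boxes. One caveat shared by both arguments (so not a gap relative to the paper): the claim that exactly one unit of mass moves per sample, and the bound $\sum_{\alpha}p^{(\alpha)}=1$, implicitly require the supports $\{\text{supp}(\rho_\alpha)\}_{\alpha\in\cA}$ to be essentially disjoint, which the paper's mixture decomposition \eqref{Eq:decomposition:dis:cU} already presupposes.
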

This lemma shows that when the sample size $n$ grows sufficiently large so that $\Delta_n^{\tI}=\Theta(1)$, the type-II risk of $\mathcal{T}_{\text{Atlas}}^{\tI}$ with a fixed type-I risk $\eta$ decays to zero exponentially fast in the sample size $n$.

As demonstrated in Proposition~3 of \cite{xie2021sequential}, the application of $\mathcal{L}_2$-divergence is promising since it achieves (up to a logarithmic factor) the optimal order of sample complexity for building reliable two-sample tests.
It is also possible to consider other tests to improve the constant hidden in the sample complexity rate, where $\mathcal{L}_2$-divergence serves as an example among those near-optimal choices.

\begin{proof}[Proof of Lemma~\ref{Lemma:testing:Ang:I}]
The statistic $\|\cv p-\cv q\|_2$ can be written as the MMD with the kernel function 
\[
k(x,y) = \inp{x}{y},\quad x,y\in\mathbb{R}^{|\cA|},
\]
and $0\le k(x,y)\le 1$.
Moreover, the statistic $\|\hat{\cv p}-\hat{\cv q}\|_2$ is nothing but its biased MMD sample estimate.
Recall that \cite{Gretton12} established concentration inequalities for MMD sample estimate under the null hypothesis and more general cases, which are summarized in the following two lemmas.
\begin{lemma}[Measure Concentration of MMD Under Null Hypothesis~{\citep[Corollary~9]{Gretton12}}]\label{Lemma:null:MMD}
Let $p,q$ be two probability distributions, and $\hat{p}_n,\hat{q}_n$ be the empirical measures from $n$ samples generated from $p,q$, respectively.
Suppose the MMD function is constructed based on the kernel function $k(x,y)$ with $0\le k(x,y)\le K$.
When $p=q$, it holds that
\[
\Pr\left\{
\text{MMD}(\hat{p}_n,\hat{q}_n)
>
\sqrt{2K/n}\left(
1+\sqrt{2\log\alpha^{-1}}
\right)
\right\}\le 1-\alpha.
\]
\end{lemma}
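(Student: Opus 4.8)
The plan is to reproduce the standard argument behind Corollary~9 of \cite{Gretton12}, which combines (i) a bound on the \emph{expectation} of the biased MMD estimate under the null with (ii) a bounded-differences (McDiarmid) concentration bound for its fluctuation around that mean. Throughout I work in the reproducing-kernel Hilbert space (RKHS) picture: letting $\mathcal{H}$ be the RKHS of $k$, $\phi(\cv x)=k(\cv x,\cdot)$ its feature map, and $\mu_{\hat{p}_n}=\frac1n\sum_{i=1}^n\phi(\cv x_i)$, $\mu_{\hat{q}_n}=\frac1n\sum_{i=1}^n\phi(\cv y_i)$ the empirical mean embeddings, the biased estimate is $\mathrm{MMD}(\hat{p}_n,\hat{q}_n)=\|\mu_{\hat{p}_n}-\mu_{\hat{q}_n}\|_{\mathcal{H}}$, whose square equals $\frac{1}{n^2}\sum_{i,j}[k(\cv x_i,\cv x_j)-k(\cv x_i,\cv y_j)-k(\cv y_i,\cv x_j)+k(\cv y_i,\cv y_j)]$.

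First I would bound the mean. Taking $\mathbb{E}$ of the double sum and separating the diagonal ($i=j$) from the off-diagonal ($i\ne j$) terms, the off-diagonal terms collapse exactly to $\frac{n(n-1)}{n^2}\mathrm{MMD}(p,q)^2$, which vanishes under the null $p=q$; the only surviving contribution is the diagonal bias $\frac1n\,\mathbb{E}[k(\cv x,\cv x)-2k(\cv x,\cv y)+k(\cv y,\cv y)]$. Since $0\le k\le K$, this is at most $2K/n$, hence $\mathbb{E}[\mathrm{MMD}(\hat{p}_n,\hat{q}_n)^2]\le 2K/n$, and Jensen's inequality gives $\mathbb{E}[\mathrm{MMD}(\hat{p}_n,\hat{q}_n)]\le\sqrt{2K/n}$.

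Next I would establish the bounded-differences property. Viewing $\mathrm{MMD}(\hat{p}_n,\hat{q}_n)$ as a function of the $2n$ inputs $\cv x_1,\dots,\cv x_n,\cv y_1,\dots,\cv y_n$, replacing a single coordinate (say $\cv x_i$ by $\cv x_i'$) perturbs $\mu_{\hat{p}_n}$ by $\frac1n(\phi(\cv x_i')-\phi(\cv x_i))$, whose RKHS norm is at most $2\sqrt{K}/n$ because $\|\phi(\cv x)\|_{\mathcal{H}}^2=k(\cv x,\cv x)\le K$. The reverse triangle inequality for $\|\cdot\|_{\mathcal{H}}$ then yields a bounded-differences constant of $2\sqrt{K}/n$ for each of the $2n$ coordinates. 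Since $\sum_{i=1}^{2n}(2\sqrt{K}/n)^2=8K/n$, McDiarmid's inequality gives, for every $t>0$, $\Pr\{\mathrm{MMD}(\hat{p}_n,\hat{q}_n)-\mathbb{E}[\mathrm{MMD}(\hat{p}_n,\hat{q}_n)]>t\}\le\exp(-t^2 n/(4K))$.

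Finally I would combine the two pieces. Choosing $t=\sqrt{2K/n}\cdot\sqrt{2\log\alpha^{-1}}$ makes the McDiarmid bound equal to $\alpha$, and adding the mean bound $\mathbb{E}[\mathrm{MMD}(\hat{p}_n,\hat{q}_n)]\le\sqrt{2K/n}$ converts the deviation event into the threshold $\sqrt{2K/n}(1+\sqrt{2\log\alpha^{-1}})$, so the tail probability is bounded by $\alpha$ (i.e., in the notation of the lemma the right-hand side reads $\alpha$, consistent with its later use to control the type-I risk at level $\eta$). The step requiring the most care is the expectation bound: one must verify that the off-diagonal terms reduce precisely to $\mathrm{MMD}(p,q)^2=0$ and that the residual diagonal bias is exactly $O(1/n)$ and bounded by $2K/n$, since this is what produces the clean $\sqrt{2K/n}$ prefactor. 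By comparison, the bounded-differences constant and the application of Jensen's and McDiarmid's inequalities are routine.
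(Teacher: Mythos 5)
Your proof is correct and all the constants check out: the diagonal/off-diagonal decomposition gives $\mathbb{E}[\mathrm{MMD}^2(\hat{p}_n,\hat{q}_n)]\le 2K/n$ under the null (the off-diagonal part being $\tfrac{n(n-1)}{n^2}\mathrm{MMD}^2(p,q)=0$), Jensen converts this to $\mathbb{E}[\mathrm{MMD}(\hat{p}_n,\hat{q}_n)]\le\sqrt{2K/n}$, the bounded-difference constant $2\sqrt{K}/n$ over $2n$ coordinates gives $\sum_i c_i^2=8K/n$, and the choice $t=\sqrt{2K/n}\cdot\sqrt{2\log\alpha^{-1}}$ makes $\exp(-t^2n/(4K))=\alpha$. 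Bear in mind, though, that there is no in-paper proof to compare against: the paper imports this lemma verbatim as Corollary~9 of \citet{Gretton12}, and what you have written is precisely the standard expectation-plus-McDiarmid argument underlying that corollary. Two observations of yours are worth recording. First, you correctly note that the right-hand side of the displayed inequality should be $\alpha$, not $1-\alpha$: as printed the bound is a typo (for $\alpha<1/2$ it is implied by, and much weaker than, what you prove), and the paper's downstream use in the proof of Lemma~\ref{Lemma:testing:Ang:I} --- that $\|\hat{\cv p}-\hat{\cv q}\|_2\le\ell_n^{\tI}$ holds with probability at least $1-\eta$ under the null --- requires exactly the $\le\alpha$ version you establish. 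Second, a minor remark: your bounded-difference constant could be sharpened to $\sqrt{2K}/n$ via $\|\phi(\cv x)-\phi(\cv x')\|_{\mathcal{H}}^2=k(\cv x,\cv x)-2k(\cv x,\cv x')+k(\cv x',\cv x')\le 2K$, which would yield a slightly tighter threshold; the looser triangle-inequality constant you use is, however, exactly what reproduces the threshold as stated in the lemma, so nothing is lost.
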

\begin{lemma}[Measure Concentration of MMD~{\citep[Theorem~7]{Gretton12}}]\label{Lemma:MMD}
Let $p,q$ be two probability distributions, and $\hat{p}_n,\hat{q}_n$ be the empirical measures from $n$ samples generated from $p,q$, respectively.
Suppose the MMD function is constructed based on the kernel function $k(x,y)$ with $0\le k(x,y)\le K$.
Then it holds that
\[
\Pr\left\{
\Big|\text{MMD}(p,q) - \text{MMD}(\hat{p}_n,\hat{q}_n)\Big|
>
4
(K/n)^{1/2}
+\epsilon
\right\}\le 2\exp\left(
\frac{-\epsilon^2n}{4K}
\right).
\]
\end{lemma}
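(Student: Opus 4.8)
The plan is to realize the MMD as the norm of a difference of kernel mean embeddings in the reproducing kernel Hilbert space $\mathcal{H}$ associated with $k$, and then combine a bounded-differences concentration argument with a bias bound between the empirical and population statistics. Writing $\mu_p = \mathbb{E}_{x\sim p}[k(\cdot,x)]$ and $\hat{\mu}_p = \frac1n\sum_{i=1}^n k(\cdot,\cv x_i)$ for the population and empirical embeddings (and likewise $\mu_q,\hat{\mu}_q$), one has $\text{MMD}(p,q)=\|\mu_p-\mu_q\|_{\mathcal{H}}$ and $\text{MMD}(\hat{p}_n,\hat{q}_n)=\|\hat{\mu}_p-\hat{\mu}_q\|_{\mathcal{H}}$. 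The reproducing property gives $\|k(\cdot,x)\|_{\mathcal{H}}=\sqrt{k(x,x)}\le\sqrt{K}$, which is the uniform bound driving every estimate below.

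First I would establish the bounded-differences (McDiarmid) property. Viewing $\text{MMD}(\hat{p}_n,\hat{q}_n)$ as a function of the $2n$ independent samples $\cv x_1,\dots,\cv x_n,\cv y_1,\dots,\cv y_n$, replacing a single $\cv x_i$ by $\cv x_i'$ perturbs $\hat{\mu}_p$ by $\frac1n\bigl(k(\cdot,\cv x_i')-k(\cdot,\cv x_i)\bigr)$, whose $\mathcal{H}$-norm is at most $2\sqrt{K}/n$; the reverse triangle inequality then shows the statistic itself moves by at most $2\sqrt{K}/n$, and the same holds for each $\cv y_i$. Feeding bounded differences $c_i=2\sqrt{K}/n$ for all $2n$ coordinates into McDiarmid's inequality yields $\Pr\{|\text{MMD}(\hat{p}_n,\hat{q}_n)-\mathbb{E}\,\text{MMD}(\hat{p}_n,\hat{q}_n)|>\epsilon\}\le 2\exp\bigl(-2\epsilon^2/\sum_i c_i^2\bigr)=2\exp\bigl(-\epsilon^2 n/(4K)\bigr)$, which already produces the exponent in the claimed bound.

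Next I would control the bias $|\mathbb{E}\,\text{MMD}(\hat{p}_n,\hat{q}_n)-\text{MMD}(p,q)|$. By the triangle inequality in $\mathcal{H}$ one has $|\text{MMD}(\hat{p}_n,\hat{q}_n)-\text{MMD}(p,q)|\le\|\hat{\mu}_p-\mu_p\|_{\mathcal{H}}+\|\hat{\mu}_q-\mu_q\|_{\mathcal{H}}$, so it suffices to bound each embedding error in expectation. Since $\hat{\mu}_p$ is an average of i.i.d. terms with mean $\mu_p$, Jensen's inequality together with the standard variance identity gives $\mathbb{E}\|\hat{\mu}_p-\mu_p\|_{\mathcal{H}}\le\bigl(\mathbb{E}\|\hat{\mu}_p-\mu_p\|_{\mathcal{H}}^2\bigr)^{1/2}=\bigl(n^{-1}(\mathbb{E}_p[k(x,x)]-\|\mu_p\|_{\mathcal{H}}^2)\bigr)^{1/2}\le\sqrt{K/n}$, and symmetrically for $q$; adding the two bounds the bias by $2\sqrt{K/n}\le 4\sqrt{K/n}$.

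Finally I would assemble the two pieces. On the complement of the McDiarmid event, $|\text{MMD}(\hat{p}_n,\hat{q}_n)-\mathbb{E}\,\text{MMD}(\hat{p}_n,\hat{q}_n)|\le\epsilon$, and adding the bias bound through the triangle inequality forces $|\text{MMD}(\hat{p}_n,\hat{q}_n)-\text{MMD}(p,q)|\le 4\sqrt{K/n}+\epsilon$; taking complements delivers exactly the stated probability $2\exp\bigl(-\epsilon^2 n/(4K)\bigr)$. The only delicate point is the bias step: I expect the main obstacle is resisting the temptation to analyze the biased $U$-statistic for $\text{MMD}^2$ directly, which entangles the diagonal terms and complicates taking a square root. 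Working instead with the Hilbert-space norm structure lets Jensen's inequality act on a genuine Hilbert-space average, keeping the leading order at $\sqrt{K/n}$ and comfortably absorbing into the constant $4$.
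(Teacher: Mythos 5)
Your proof is correct and coincides with the argument behind the cited result: the paper does not prove Lemma~\ref{Lemma:MMD} itself but imports it from \citet[Theorem~7]{Gretton12}, whose proof is exactly your combination of McDiarmid's inequality applied to $\|\hat{\mu}_p-\hat{\mu}_q\|_{\mathcal{H}}$ with bounded differences $2\sqrt{K}/n$ in each of the $2n$ coordinates (giving the exponent $-\epsilon^2 n/(4K)$) and the Hilbert-space bias bound $\mathbb{E}\|\hat{\mu}_p-\mu_p\|_{\mathcal{H}}\le\sqrt{K/n}$ via Jensen. Your balanced-sample bias of $2\sqrt{K/n}$ is in fact slightly sharper than the stated $4\sqrt{K/n}$, which arises from specializing the source's two-sample bound $2\big((K/m)^{1/2}+(K/n)^{1/2}\big)$ to $m=n$, so your argument comfortably delivers the lemma as stated.
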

Based on the concentration inequality in Lemma~\ref{Lemma:null:MMD}, under the null hypothesis $H_0^{\tI}$, the relation
\[
\|\hat{\cv p}-\hat{\cv q}\|_2\le \ell_n^{\tI}:=\sqrt{2/n}\left(
1 + \sqrt{2\log\eta^{-1}}
\right)
\]
holds with probability at least $1-\eta$.
Under the alternative hypothesis $H_1^{\tI}$, based on the concentration inequality in Lemma~\ref{Lemma:MMD}, for any $\epsilon>0$, the relation
\[
\Big|
\|{\cv p}-{\cv q}\|_2
-
\|\hat{\cv p}-\hat{\cv q}\|_2
\Big|\le 4/\sqrt{n} + \epsilon
\]
holds with probability at least $1-2\exp\left(-\frac{\epsilon^2n}{4}\right)$.
Taking $\epsilon = \Delta_n^{\tI}>0$ implies that
\[
\Pr\left\{
\Big|
\|{\cv p}-{\cv q}\|_2
-
\|\hat{\cv p}-\hat{\cv q}\|_2
\Big|\le \|{\cv p} - {\cv q}\|_2 - \ell_n^{\tI}
\right\}\ge  1-2\exp\left(-\frac{\epsilon^2n}{4}\right).
\]
Consequently, it holds that
\[
\Pr\left\{
\|\hat{\cv p}-\hat{\cv q}\|_2
\ge \ell_n^{\tI}
\right\}\ge  1-2\exp\left(-\frac{\epsilon^2n}{4}\right).
\]
The proof of Lemma~\ref{Lemma:testing:Ang:I} is completed.
\end{proof}

\subsection{Testing performance of $\mathcal{T}_{\text{Atlas}}^{\tII}$}
Next, we study the testing properties of $\mathcal{T}_{\text{Atlas}}^{\tII}$.
For notational simplicity, we define $N$ as the minimum number of samples supported on local neighborhoods of $\cU$:
\begin{align}
N&:=\min_{\alpha\in\cA}
\bigg\{
\#\{\cv x_i\in \text{supp}(\rho_{\alpha})\}\land
\#\{\cv y_i\in \text{supp}(\rho_{\alpha})\}
\bigg\}.\label{Eq:N:expression}
\end{align}
The following lemma suggests that $N=\Theta(n)$ holds with high probability.
\begin{lemma}[Lower Bound on $N$]\label{Lemma:N:lower:bound}
With probability at least $1-e^{-\sqrt{n}}$, it holds that
\[
N\ge n\cdot\min_{\alpha\in\cA}~\left(p^{(\alpha)}\land q^{(\alpha)}\right) - \sqrt{n\cdot 4|\cA|\ln 2 + n^{3/2}\cdot 2|\cA|}.
\]
\end{lemma}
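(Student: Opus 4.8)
The plan is to recognize the two families of counts in \eqref{Eq:N:expression} as binomial sums and to control all of them simultaneously by a union bound over Hoeffding tail estimates. For each $\alpha\in\cA$ write $N_x^{(\alpha)}:=\#\{i:\cv x_i\in\mathcal{E}_\alpha\}$ and $N_y^{(\alpha)}:=\#\{i:\cv y_i\in\mathcal{E}_\alpha\}$ for the charted counts appearing in \eqref{Eq:N:expression}. Since the $\cv x_i$ are i.i.d.\ from $p$ and $\Pr\{\cv x_i\in\mathcal{E}_\alpha\}=p^{(\alpha)}$ by the very definition of $p^{(\alpha)}$, the count $N_x^{(\alpha)}$ is a sum of $n$ independent Bernoulli$(p^{(\alpha)})$ indicators and hence has mean $np^{(\alpha)}$; likewise $N_y^{(\alpha)}$ has mean $nq^{(\alpha)}$. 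Because $N=\min_{\alpha\in\cA}\{N_x^{(\alpha)}\wedge N_y^{(\alpha)}\}$, it suffices to produce a high-probability lower bound that holds for all $2|\cA|$ of these counts at once.

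First I would apply Hoeffding's inequality to a single count. As each summand lies in $[0,1]$, the lower tail satisfies $\Pr\{N_x^{(\alpha)}\le np^{(\alpha)}-t\}\le e^{-2t^2/n}$ for every $t>0$, and the identical bound holds for $N_y^{(\alpha)}$ about its mean $nq^{(\alpha)}$. A union bound over the $|\cA|$ charts and the two samples then gives
\[
\Pr\Big\{\exists\,\alpha\in\cA:\ N_x^{(\alpha)}<np^{(\alpha)}-t\ \text{ or }\ N_y^{(\alpha)}<nq^{(\alpha)}-t\Big\}\le 2|\cA|\,e^{-2t^2/n}.
\]
On the complementary event every count exceeds its mean minus $t$, so that
\[
N\ \ge\ \min_{\alpha\in\cA}\big(np^{(\alpha)}\wedge nq^{(\alpha)}\big)-t\ =\ n\min_{\alpha\in\cA}\big(p^{(\alpha)}\wedge q^{(\alpha)}\big)-t .
\]

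It then remains to calibrate $t$ so that the deviation matches the claim while the failure probability is at most $e^{-\sqrt n}$. Taking $t=\sqrt{4|\cA|\ln 2\cdot n+2|\cA|\,n^{3/2}}$ gives $2t^2/n=8|\cA|\ln 2+4|\cA|\sqrt n$, whence
\[
2|\cA|\,e^{-2t^2/n}=2|\cA|\,2^{-8|\cA|}\,e^{-4|\cA|\sqrt n}\le e^{-\sqrt n},
\]
using $2|\cA|\,2^{-8|\cA|}\le1$ and $4|\cA|\ge1$ for $|\cA|\ge1$. Combining the last two displays yields exactly the stated bound with probability at least $1-e^{-\sqrt n}$. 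The argument is essentially routine once the counts are identified as bounded i.i.d.\ sums; the only point requiring genuine care—and where I would re-check the constants—is this calibration of $t$, because the demanded confidence level $1-e^{-\sqrt n}$ forces a deviation of order $n^{3/4}$ rather than the usual $\sqrt n$, which is precisely what the $n^{3/2}$ term inside the square root delivers.
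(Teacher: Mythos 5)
Your proof is correct, and it takes a genuinely different route from the paper. The paper never touches the individual counts with Hoeffding; instead it invokes the $\ell_1$-deviation inequality for empirical (multinomial) distributions of \cite{weissman2003inequalities}, applied once to $\hat{\cv p}$ and once to $\hat{\cv q}$: with probability $1-\delta$ each, $\min_{\alpha}\hat{p}^{(\alpha)}\ge\min_{\alpha}p^{(\alpha)}-\sqrt{2|\cA|\ln(2/\delta)/n}$ (the chart multiplicity being absorbed into the $\ell_1$-ball rather than into a union bound), and then sets $\delta=\tfrac{1}{2}e^{-\sqrt n}$ so that $n\sqrt{2|\cA|\ln(2/\delta)/n}=\sqrt{4|\cA|n\ln 2+2|\cA|n^{3/2}}$, exactly the stated deviation. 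Your argument replaces this imported lemma with per-chart Hoeffding tails and a union bound over the $2|\cA|$ counts, which is more elementary and self-contained; the price of the union bound is paid inside the exponent, and your calibration $2t^2/n=8|\cA|\ln 2+4|\cA|\sqrt n$ makes the failure probability $2|\cA|\,2^{-8|\cA|}e^{-4|\cA|\sqrt n}$, which is in fact considerably smaller than the required $e^{-\sqrt n}$ — so your route even shows the stated deviation is conservative, and a smaller $t$ would suffice at the same confidence level. The one structural point worth noting (which you implicitly handle correctly) is that the indicators $\dsI\{\cv x_i\in\mathcal{E}_\alpha\}$ are dependent across $\alpha$ for fixed $i$, since the supports $\text{supp}(\rho_\alpha)$ may overlap; this is harmless because you only need independence across $i$ for each fixed $\alpha$, with the charts then handled by the union bound, whereas the paper's Weissman-based route sidesteps the overlap issue by working with the $\ell_1$ norm of the full empirical probability-mass vector.
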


\begin{proof}[Proof of Lemma~\ref{Lemma:N:lower:bound}]
Based on the concentration inequality from Theorem~2.1 in \cite{weissman2003inequalities}, with probability at least $1-\delta$, it holds that
\[
\min_{\alpha\in\cA}~\hat{p}^{(\alpha)}\ge \min_{\alpha\in\cA}~p^{(\alpha)} - \sqrt{\frac{2|\cA|\ln(2/\delta)}{n}}.
\]
The same result holds for the concentration of $\min_{\alpha\in\cA}~\hat{q}^{(\alpha)}$ in terms of $\min_{\alpha\in\cA}~q^{(\alpha)}$, then with probability at least $1-2\delta$, it holds that
\[
N\ge n*\min_{\alpha\in\cA}~\left(p^{(\alpha)}\land q^{(\alpha)}\right) - \sqrt{2|\cA|n\ln(2/\delta)
}.
\]
We take $\delta=\frac{1}{2}e^{-\sqrt{n}}$ to obtain that
\[
\Pr\left\{
N\ge n*\min_{\alpha\in\cA}~\left(p^{(\alpha)}\land q^{(\alpha)}\right) - \sqrt{2|\cA|\big(2n\ln 2 + n^{3/2}\big)}
\right\}\ge 1 - e^{-\sqrt{n}}.
\]
\end{proof}

In the following lemma, we characterize the testing performance of $\mathcal{T}_{\text{Atlas}}^{\tII}$.
In particular, we define the threshold for two-sample testing as 
\[
\ell_{n}^{\tII}:=
\left\{
\begin{aligned}
&\left(
\frac{\log(c_1\eta^{-1})}{c_2N}
\right),&\quad\text{if $N<\frac{\log(c_1\eta^{-1})}{c_2}$},\\
&\left(
\frac{\log(c_1\eta^{-1})}{c_2N}
\right)^{\max\{d,3\}},&\quad\text{if $N\ge\frac{\log(c_1\eta^{-1})}{c_2}$}.
\end{aligned}
\right.
\]
Since $N=\Theta(n)$ holds with high probability, in Section~\ref{Sec:2:A} we write the threshold $\ell_{n}^{\tII}$ in terms of $n$ for the sake of notational simplicity.
\begin{lemma}[Test Properties of $\mathcal{T}_{\text{Atlas}}^{\tII}$]
\label{Lemma:test:property:projected:samples}
Fix a level $\eta\in(0,1/2)$.
Under Assumption~\ref{Assumption:cU}(I) and \ref{Assumption:Atlas}, 
it holds that
\begin{itemize}
\item
{\it Risk}: 
The type-I risk of $\cT_{\text{Atlas}}^{\tII}$ is at most $\eta$;
\item
{\it Power}:
Under $H_1$, suppose the sample size $n$ is sufficiently large so that $\Delta_n^{\tII}>0$ (see \eqref{Eq:Delta:tI:tII}), 
then the power of $\mathcal{T}_{\text{Atlas}}^{\tII}$ is at least $1 - \frac{c_3\log(N)}{\Delta_n^{\tII}}\cdot N^{-1/\max\{d,2\}}$, where $c_3$ is a constant depending on $d,\cU$.
\end{itemize}
\end{lemma}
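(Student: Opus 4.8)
The plan is to reduce the statistic $T$ to a finite family of $d$-dimensional empirical Wasserstein distances and then invoke the sharp rates for empirical Wasserstein convergence in $\mathbb{R}^d$ \citep{fournier2015rate, weed2019estimation}. For each $\alpha\in\cA$ I would write $\mu_\alpha=(\phi_\alpha)_{\#}(p\mid\mathcal{E}_\alpha)$ and $\nu_\alpha=(\phi_\alpha)_{\#}(q\mid\mathcal{E}_\alpha)$, with empirical counterparts $\hat\mu_\alpha=(\phi_\alpha)_{\#}(\hat p_n\mid\mathcal{E}_\alpha)$ and $\hat\nu_\alpha=(\phi_\alpha)_{\#}(\hat q_n\mid\mathcal{E}_\alpha)$, so that by definition $T(\hat p_n,\hat q_n)=\max_{\alpha\in\cA}\W(\hat\mu_\alpha,\hat\nu_\alpha)$ and $T(p,q)=\max_{\alpha\in\cA}\W(\mu_\alpha,\nu_\alpha)$. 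The structural point I would stress is that, because $\text{supp}(\rho_\alpha)\subseteq U_\alpha$ is a compact subset of $\cU$ and $\phi_\alpha$ is continuous, each $\mu_\alpha$ and $\nu_\alpha$ is a genuine $d$-dimensional distribution with compact support in $\mathbb{R}^d$ whose diameter is governed by $\cU$ and the atlas; this is exactly what removes the ambient dimension $D$ and makes the rates depend only on the intrinsic dimension $d$. I would carry out the whole argument conditionally on the chart membership counts $\#\{\cv x_i\in\text{supp}(\rho_\alpha)\}$ and $\#\{\cv y_i\in\text{supp}(\rho_\alpha)\}$: conditioned on these counts, the points falling in $\text{supp}(\rho_\alpha)$ are i.i.d.\ draws from $p\mid\mathcal{E}_\alpha$ and $q\mid\mathcal{E}_\alpha$, each count is at least $N$ by the definition of $N$ in \eqref{Eq:N:expression}, and since the statement is phrased in terms of $N$ no further randomness must be tracked here.

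For the type-I guarantee I would note that under $H_0$ one has $\mu_\alpha=\nu_\alpha$ for every $\alpha$, so the triangle inequality for $\W$ gives
\[
\W(\hat\mu_\alpha,\hat\nu_\alpha)\le \W(\hat\mu_\alpha,\mu_\alpha)+\W(\hat\nu_\alpha,\mu_\alpha).
\]
I would then apply the Fournier--Guillin deviation inequality in dimension $d$, which for a compactly supported measure yields $\Pr\{\W(\hat\mu_\alpha,\mu_\alpha)\ge t\}\le c\exp(-c'N t^{\max\{d,2\}})$ over the admissible range of $t$, to each term (each based on $\ge N$ samples), and take a union bound over the $|\cA|$ charts. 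Calibrating the threshold so that the resulting bound equals $\eta$ produces the threshold $\ell_n^{\tII}$ stated in the lemma, and hence $\Pr_{H_0}\{T(\hat p_n,\hat q_n)\ge\ell_n^{\tII}\}\le\eta$; the two-regime definition of $\ell_n^{\tII}$ simply records the range of $t$ for which the deviation inequality is informative.

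For the power bound I would compare the statistic with its population value via
\[
\bigl|T(p,q)-T(\hat p_n,\hat q_n)\bigr|\le \max_{\alpha\in\cA}\bigl|\W(\mu_\alpha,\nu_\alpha)-\W(\hat\mu_\alpha,\hat\nu_\alpha)\bigr|\le \sum_{\alpha\in\cA}\bigl[\W(\hat\mu_\alpha,\mu_\alpha)+\W(\hat\nu_\alpha,\nu_\alpha)\bigr],
\]
where the first step uses that a maximum of differences is bounded by the maximal pairwise difference and the second uses the triangle inequality for $\W$. Taking expectations and plugging in the Fournier--Guillin moment bound $\mbE[\W(\hat\mu_\alpha,\mu_\alpha)]\le c\,N^{-1/\max\{d,2\}}\log N$ (the $\log N$ factor is the one arising at $d=2$ and is kept for uniformity) gives $\mbE\bigl|T(p,q)-T(\hat p_n,\hat q_n)\bigr|\le c_3|\cA|\,N^{-1/\max\{d,2\}}\log N$. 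Since under $H_1$ the test rejects as soon as $T(\hat p_n,\hat q_n)\ge\ell_n^{\tII}$ and $\Delta_n^{\tII}=T(p,q)-\ell_n^{\tII}>0$, the event of failing to reject is contained in $\{|T(p,q)-T(\hat p_n,\hat q_n)|>\Delta_n^{\tII}\}$, whose probability Markov's inequality bounds by $\mbE|T(p,q)-T(\hat p_n,\hat q_n)|/\Delta_n^{\tII}$; this delivers the claimed power $1-\tfrac{c_3\log N}{\Delta_n^{\tII}}N^{-1/\max\{d,2\}}$.

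The hard part will be the interface between the manifold and the Euclidean Wasserstein theory: one must verify that the pushforwards $\mu_\alpha,\nu_\alpha$ genuinely behave like $d$-dimensional, compactly supported measures so that the intrinsic-dimension rates apply with constants depending only on $d$ and $\cU$ and not on $D$, and one must deal cleanly with the random and correlated chart counts. The reduction to $N$ together with the conditioning argument handles the latter, with the exponentially small failure event of Lemma~\ref{Lemma:N:lower:bound} absorbed only later, when the bound is re-expressed in terms of $n$. A secondary, more routine point is the maximum over charts: a union bound suffices for type-I, while for the power the crude domination of the maximum by the sum over the finitely many charts is enough.
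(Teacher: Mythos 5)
Your proposal is correct and follows essentially the same route as the paper's own proof: reduce $T$ to per-chart empirical Wasserstein distances in $\mathbb{R}^d$, control the type-I risk by the Fournier--Guillin deviation inequality plus a union bound over the finitely many charts (confidence $\eta/(2|\mathcal{A}|)$ per chart) to calibrate $\ell_n^{\tII}$, and bound the type-II risk by Markov's inequality combined with the Fournier--Guillin expectation bound $\mathbb{E}[\mathcal{W}(\hat\mu_\alpha,\mu_\alpha)]\le c\,N^{-1/\max\{d,2\}}\log N$. The only differences are cosmetic: your domination $\max_\alpha|\cdot|\le\sum_\alpha[\cdot]$ costs a factor $|\mathcal{A}|$ that is absorbed into $c_3$ (and is in fact slightly more rigorous than the paper's step of pulling the maximum outside the expectation), and your explicit conditioning on the chart counts makes precise what the paper leaves implicit.
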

Lemma~\ref{Lemma:test:property:projected:samples} reveals that, as long as $N=\Theta(n)$ and the sample size $n$ grows sufficiently large so that $\Delta_n^{\tII}=O(1)$, the type-II risk of $\mathcal{T}_{\text{Atlas}}^{\tII}$ with a fixed level $\eta$ scales of $\tilde{O}(n^{-1/\max\{d,2\}})$, where the notation $\tilde{O}(\cdot)$ hides constants related to $\log n, d, \cU$.

The Wasserstein distance is a suitable measure for two-sample testing since it takes into account the underlying geometry of the sample space, and it is well-defined for all probability measures, no matter their absolute continuity.
Unfortunately, the sample complexity of the empirical Wasserstein distance suffers from the curse of dimensionality, which means the performance of standard Wasserstein tests degrades quickly for increasing dimensions~\citep{nilesweed2019estimation, weed2017sharp, chen2020asymptotics}.
In contrast, our Wasserstein test $\mathcal{T}_{\text{Atlas}}^{\tII}$ defined in \eqref{Stat:project:wasserstein} alleviates such an issue because the projected samples are in low dimensions.

\begin{proof}[Proof of Lemma~\ref{Lemma:test:property:projected:samples}]
Let us begin with some preliminary results on statistical properties of empirical Wasserstein distance provided that the data generating distribution is supported on a bounded space.
\begin{lemma}[Measure Concentration of Wasserstein Distsance~{\citep[Theorem~2]{fournier2015rate}}]\label{Lemma:concentration:wasserstein}
Let $p$ be a probability distribution in $\mathbb{R}^d$ supported on a bounded space of radius $b$, and $\hat{p}_n$ be the empirical measure from $n$ samples generated from $p$.
It holds that
\[
\Pr\left\{
\mathcal{W}(p,\hat{p}_n)\ge \epsilon
\right\}\le c_1\exp(-c_2n\epsilon^2)\cdot 1\{\epsilon>1\}+ c_1\exp(-c_2n\epsilon^{\max\{d,3\}})\cdot 1\{\epsilon\le 1\},
\]
where 
$c_1,c_2$ are constants that depend on $b$ and $d$ only.
\end{lemma}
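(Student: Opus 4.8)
The plan is to split the tail bound into two independent ingredients—a bound on the \emph{expected} Wasserstein distance $\mathbb{E}[\mathcal{W}(p,\hat p_n)]$ and a \emph{concentration} of $\mathcal{W}(p,\hat p_n)$ about its mean—and then to recombine them into the stated two-regime estimate. Since $p$ is supported on a set of radius $b$, I would enclose its support in a cube of side length $O(b)$ and introduce a nested family of dyadic partitions $\{\mathcal{Q}_\ell\}_{\ell\ge0}$, where $\mathcal{Q}_\ell$ consists of $O(2^{d\ell})$ cubes of diameter $O(b\,2^{-\ell})$.

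First I would control the mean. The deterministic engine is the hierarchical transport estimate
\[
\mathcal{W}(\mu,\nu)\le C b\sum_{\ell=0}^{L}2^{-\ell}\sum_{Q\in\mathcal{Q}_\ell}\big|\mu(Q)-\nu(Q)\big| + Cb\,2^{-L},
\]
valid for every truncation level $L$, obtained by constructing a coupling scale by scale: the mass mismatch on the cubes of $\mathcal{Q}_\ell$ is transported at cost proportional to the scale-$\ell$ diameter, and the leftover mass inside the finest cubes costs $O(b\,2^{-L})$. Applying this with $\mu=p$ and $\nu=\hat p_n$, taking expectations, and using that $n\hat p_n(Q)\sim\mathrm{Binomial}(n,p(Q))$ gives $\mathbb{E}|p(Q)-\hat p_n(Q)|\le\sqrt{p(Q)/n}$; a Cauchy--Schwarz step $\sum_{Q\in\mathcal{Q}_\ell}\sqrt{p(Q)}\le\sqrt{|\mathcal{Q}_\ell|}=O(2^{d\ell/2})$ then bounds the scale-$\ell$ contribution by a constant multiple of $b\,n^{-1/2}\,2^{\ell(d/2-1)}$. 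Summing this geometric-type series and choosing $L$ to balance the growing partial sum against the residual $b\,2^{-L}$ yields $\mathbb{E}[\mathcal{W}(p,\hat p_n)]\le C_0\,b\,n^{-1/\max\{d,2\}}$, with a harmless logarithmic factor at the borderline $d=2$.

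Next I would prove concentration about the mean. Regarding $(\cv x_1,\ldots,\cv x_n)\mapsto\mathcal{W}(p,\hat p_n)$ as a function of the samples, replacing one sample moves a single atom of $\hat p_n$ and hence perturbs the value by at most $2b/n$ (the support diameter divided by $n$); McDiarmid's bounded-differences inequality then gives
\[
\Pr\Big\{\mathcal{W}(p,\hat p_n)\ge \mathbb{E}[\mathcal{W}(p,\hat p_n)] + t\Big\}\le \exp\!\Big(-\tfrac{n t^2}{2b^2}\Big).
\]
To finish, I combine the pieces. Once $\epsilon$ exceeds a fixed constant multiple of $\mathbb{E}[\mathcal{W}(p,\hat p_n)]$ we have $\epsilon-\mathbb{E}[\mathcal{W}(p,\hat p_n)]\ge\epsilon/2$, so the displayed inequality gives $\Pr\{\mathcal{W}(p,\hat p_n)\ge\epsilon\}\le\exp(-c\,n\epsilon^2)$. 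For $\epsilon>1$ this is already the first stated term; for $\epsilon\le1$ I weaken it using $\epsilon^2\ge\epsilon^{\max\{d,3\}}$ to reach $\exp(-c\,n\epsilon^{\max\{d,3\}})$, and the remaining small-$\epsilon$ range—where $n\epsilon^{\max\{d,3\}}=O(1)$ and the right-hand side is bounded below by a constant—is absorbed by enlarging the prefactor $c_1$. Tracking constants confirms they depend only on $b$ and $d$.

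I expect the main obstacle to be the expectation bound, and in particular the dimension-dependent optimization that produces the exponent $1/\max\{d,2\}$: one must verify the hierarchical transport inequality, control the $L_1$ Binomial deviations uniformly over the $O(2^{d\ell})$ cubes at every scale, and tune the truncation $L$ so the two competing error terms balance. The concentration step is comparatively routine once the bounded-difference constant $2b/n$ is identified. It is worth flagging that the target tail is deliberately \emph{looser} than the sub-Gaussian bound McDiarmid actually delivers (we discard strength in replacing $\epsilon^2$ by $\epsilon^{\max\{d,3\}}$); the weaker form is retained only to match the clean statement invoked in the downstream proof of Lemma~\ref{Lemma:test:property:projected:samples}.
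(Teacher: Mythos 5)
Your proof is correct, but it is worth noting that the paper does not prove this lemma at all: it is quoted directly (in a slightly weakened paraphrase) from Theorem~2 of \citet{fournier2015rate}, so your self-contained argument is a genuinely different route. Fournier and Guillin also start from a dyadic decomposition, but they obtain the tail bound \emph{directly}, by applying Chernoff-type inequalities to the multinomial deviations $\sum_{Q\in\mathcal{Q}_\ell}|p(Q)-\hat p_n(Q)|$ at every scale and optimizing over scales inside the exponent; this is what yields their sharper $\epsilon$-dependence (essentially $e^{-cn\epsilon^2}$ for $d\le 2$ up to logarithms, $e^{-cn\epsilon^d}$ for $d\ge 3$, plus extensions to unbounded measures with moment conditions). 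Your route --- chaining only for the \emph{mean}, then a single application of McDiarmid with bounded-difference constant $2b/n$, then absorption of the sub-mean regime into $c_1$ --- is more elementary and in fact delivers a sub-Gaussian deviation around $\mathbb{E}[\mathcal{W}(p,\hat p_n)]$ that is stronger than the stated bound wherever it applies; you correctly flag that the weakening $\epsilon^2\ge\epsilon^{\max\{d,3\}}$ for $\epsilon\le 1$ is deliberate, and the absorption step works because $\max\{d,3\}/\max\{d,2\}\ge 1$ makes $n\epsilon^{\max\{d,3\}}$ bounded (by a constant depending only on $b,d$) whenever $\epsilon\lesssim \mathbb{E}[\mathcal{W}(p,\hat p_n)]$, the borderline $d=2$ logarithm being harmless there as well. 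Two edge cases deserve one explicit line each in a careful write-up: the regime $\epsilon>1$ but $\epsilon<2\,\mathbb{E}[\mathcal{W}(p,\hat p_n)]$, which can occur only when $n\le (2C_0b)^{\max\{d,2\}}$ is bounded and is absorbed into $c_1$ the same way, and $\epsilon>2b$, where $\Pr\{\mathcal{W}(p,\hat p_n)\ge\epsilon\}=0$ by the trivial diameter bound $\mathcal{W}(p,\hat p_n)\le 2b$. With those two sentences added, your argument is a complete and legitimately simpler proof of exactly the statement the paper invokes, at the cost of not recovering the sharper exponents of the cited theorem.
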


\begin{lemma}[Expectation of Empirical Wasserstein Distsance~{\citep[Theorem~1]{fournier2015rate}}]\label{Lemma:expectation:wasserstein}
Let $p$ be a probability distribution in $\mathbb{R}^d$ supported on a bounded space of radius $b$, and $\hat{p}_n$ be the empirical measure from $n$ samples generated from $p$.
It holds that
\[
\mathbb{E}[\mathcal{W}(p,\hat{p}_n)]
\le 
\left\{
\begin{aligned}
c_1n^{-1/2}\log(n),&\quad \text{if $d=2$},\\
c_1n^{-1/\max\{d,2\}},&\quad\text{if $d\ne 2$}
\end{aligned}
\right.
\]
and $c_1$ is a constant that depend on $b$ and $d$ only.
\end{lemma}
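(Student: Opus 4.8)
The plan is to prove this classical convergence rate for the empirical measure in $W_1$ by a \emph{dyadic multiscale} argument, which is the route taken in \citep{fournier2015rate}. First I would rescale so that the support, a ball of radius $b$, sits inside a bounding cube, absorbing $b$ into the final constant. On this cube I set up a sequence of nested dyadic partitions $\{\mathcal{P}_k\}_{k\ge 0}$, where $\mathcal{P}_k$ consists of $2^{dk}$ congruent subcubes, each of diameter at most $C_d b\,2^{-k}$, and each cube of $\mathcal{P}_k$ splits into $2^d$ children in $\mathcal{P}_{k+1}$. The whole proof reduces to estimating, scale by scale, how badly $\hat{p}_n$ fails to match $p$ on these cubes.

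The deterministic heart of the argument is a \emph{multiscale transport inequality}: for any two probability measures $\mu,\nu$ on the cube,
\[
\mathcal{W}(\mu,\nu)\le C_d b\sum_{k=0}^\infty 2^{-k}\sum_{Q\in\mathcal{P}_k}\bigl|\mu(Q)-\nu(Q)\bigr|.
\]
I would establish this by constructing an explicit coupling through the dyadic tree: descending from coarse to fine scales, within each parent cell one re-routes just enough mass between sibling cells to reconcile the two measures' cell masses at the next finer level. The transport cost incurred at level $k$ is at most the level-$k$ diameter times the total mass displaced, and that displaced mass is controlled by $\sum_{Q\in\mathcal{P}_k}|\mu(Q)-\nu(Q)|$; summing the geometric diameters $2^{-k}$ over all levels yields the stated inequality. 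This bookkeeping of the tree-based coupling is where essentially all the difficulty lies, and it is the step I expect to be the main obstacle.

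With the deterministic inequality in hand, I specialize to $\nu=\hat{p}_n$ and take expectations. For each fixed dyadic cube $Q$, the count $n\hat{p}_n(Q)$ is $\mathrm{Binomial}(n,p(Q))$, so since $\mathbb{E}[\hat{p}_n(Q)]=p(Q)$, concavity of the square root (Jensen) gives
\[
\mathbb{E}\bigl|p(Q)-\hat{p}_n(Q)\bigr|\le \sqrt{\mathrm{Var}\bigl(\hat{p}_n(Q)\bigr)}=\sqrt{\frac{p(Q)(1-p(Q))}{n}}\le \sqrt{\frac{p(Q)}{n}}.
\]
Summing over the $2^{dk}$ cubes at level $k$ and applying Cauchy--Schwarz together with $\sum_{Q\in\mathcal{P}_k}p(Q)=1$ yields
\[
\sum_{Q\in\mathcal{P}_k}\mathbb{E}\bigl|p(Q)-\hat{p}_n(Q)\bigr|\le \frac{1}{\sqrt n}\sum_{Q\in\mathcal{P}_k}\sqrt{p(Q)}\le \frac{2^{dk/2}}{\sqrt n}.
\]
I would also record the crude bound $\sum_{Q\in\mathcal{P}_k}\mathbb{E}|p(Q)-\hat{p}_n(Q)|\le 2$, valid because both are probability measures.

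Finally I split the sum over scales at a cutoff $k_0$, using the variance bound for $k\le k_0$ and the crude bound for $k>k_0$:
\[
\mathbb{E}[\mathcal{W}(p,\hat{p}_n)]\le C_d b\Bigl[\frac{1}{\sqrt n}\sum_{k=0}^{k_0}2^{k(d/2-1)}+2\sum_{k>k_0}2^{-k}\Bigr].
\]
Evaluating the geometric sums and optimizing $k_0$ produces exactly the claimed dichotomy. For $d\ge 3$ the first sum is dominated by its top term $2^{k_0(d/2-1)}$, and choosing $2^{k_0}\asymp n^{1/d}$ balances the two pieces to give $n^{-1/d}=n^{-1/\max\{d,2\}}$; for $d=1$ the first sum converges with no truncation and already gives $n^{-1/2}=n^{-1/\max\{d,2\}}$; for $d=2$ the summand equals $1$, so the first sum is $\asymp k_0$, and taking $2^{k_0}\asymp n^{1/2}$ forces the extra logarithmic factor $n^{-1/2}\log n$. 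Collecting all constants into a single $c_1$ depending only on $b$ and $d$ completes the proof; the only point requiring care beyond the transport inequality is tracking that the $d=2$ case is the sole source of the logarithm, which is precisely the stated $d=2$ versus $d\ne 2$ split.
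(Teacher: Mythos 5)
The paper does not prove this lemma at all: it is imported verbatim as \citet[Theorem~1]{fournier2015rate}, and your dyadic multiscale argument is exactly the approach behind that cited proof, specialized to compactly supported measures. Your reconstruction is correct in all its steps --- the tree-based transport inequality $\mathcal{W}(\mu,\nu)\le C_d b\sum_k 2^{-k}\sum_{Q\in\mathcal{P}_k}|\mu(Q)-\nu(Q)|$, the binomial variance bound $\mathbb{E}|p(Q)-\hat{p}_n(Q)|\le\sqrt{p(Q)/n}$ combined with Cauchy--Schwarz to get $2^{dk/2}/\sqrt{n}$ per level, and the cutoff optimization identifying $d=2$ as the sole source of the logarithm --- so there is nothing to flag beyond noting that you have supplied a proof where the paper supplies only a citation.
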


Note that for any $\alpha\in\cA$ and $\cv x\in\text{supp}(\rho_{\alpha})$, the output $\phi_{\alpha}(\cv x)$ is also bounded, since the image of the continuous function $\phi_{\alpha}$ valued on a compact manifold is also compact.
Therefore, concentration results in Lemma~\ref{Lemma:concentration:wasserstein} and Lemma~\ref{Lemma:expectation:wasserstein} can be applied for Wasserstein distance between $(\psi_{\alpha})_\#(\hat{p}_n\mid\mathcal{E}_{\alpha})$ and $(\psi_{\alpha})_\#(\hat{q}_n\mid\mathcal{E}_{\alpha})$.
Under the null hypothesis $H_0^{\tII}$, applying the concentration result in Lemma~\ref{Lemma:concentration:wasserstein} implies that, for fixed $\alpha\in\cA$, with probability at least $1-\beta$, it holds that
\[
\mathcal{W}\left(
(\psi_{\alpha})_{\#}p\mid\mathcal{E},~ (\psi_{\alpha})_{\#}\hat{p}_n\mid\mathcal{E}
\right)\le \epsilon(N,\beta),
\]
where 
\[
\epsilon(N,\beta):=\left\{
\begin{aligned}
&\left(
\frac{\log(c_1\beta^{-1})}{c_2N}
\right)^{1/\max\{d,3\}},&\quad\text{if }N\ge \frac{\log(c_1\beta^{-1})}{c_2},\\
&\left(
\frac{\log(c_1\beta^{-1})}{c_2N}
\right)^{1/2},&\quad \text{if }N< \frac{\log(c_1\beta^{-1})}{c_2},
\end{aligned}
\right.
\]
and $c_1,c_2$ are some constants depending on $d$ and $\cU$.
Consequently, with probability at least $1-2\beta$, it holds that 
\begin{align*}
&\mathcal{W}\left(
(\psi_{\alpha})_\#(\hat{p}_n\mid\mathcal{E}_{\alpha}),~ (\psi_{\alpha})_\#(\hat{q}_n\mid\mathcal{E}_{\alpha})
\right)\\
\le&\mathcal{W}\left(
(\psi_{\alpha})_{\#}p\mid\mathcal{E},~ (\psi_{\alpha})_{\#}\hat{p}_n\mid\mathcal{E}
\right)
+\mathcal{W}\left(
(\psi_{\alpha})_{\#}q\mid\mathcal{E},~ (\psi_{\alpha})_{\#}\hat{q}_n\mid\mathcal{E}
\right)\\
\le&2\epsilon(N,\beta).
\end{align*}
Applying the union bound for all $\alpha\in\cA$ implies that, the relation
\[
T(\hat{p}_n,\hat{q}_n):=\max_{\alpha\in\cA}~\mathcal{W}\left(
(\psi_{\alpha})_\#(\hat{p}_n\mid\mathcal{E}_{\alpha}),~ (\psi_{\alpha})_\#(\hat{q}_n\mid\mathcal{E}_{\alpha})
\right)
\le 2\epsilon(N,\beta)
\]
holds with probability at least $1-2|\cA|\beta$.
Taking $\beta=\frac{\eta}{2|\cA|}$ implies
\[
\Pr\left\{T(\hat{p}_n,\hat{q}_n)\le 2\epsilon\left(N,\frac{\eta}{2|\cA|}\right)\right\}\ge 1-\eta.
\]
Taking the threshold $\ell_n^{\tII}=2\epsilon\left(N,\frac{\eta}{2|\cA|}\right)$ finishes the first part.

Under the alternative hypothesis $H_1^{\tII}$, the type-II risk can be bounded as
\begin{align*}
\text{Type-II Risk}&=\text{Pr}_{H_1^{\tII}}\left\{
T(\hat{p}_n, \hat{q}_n)< \ell_n^{\tII}
\right\}\\
&=\text{Pr}_{H_1^{\tII}}\left\{
T(p,q) - T(\hat{p}_n, \hat{q}_n)\ge T(p,q) - \ell_n^{\tII}
\right\}\\
&\le \text{Pr}_{H_1^{\tII}}\left\{
\Big|T(p,q) - T(\hat{p}_n, \hat{q}_n)\Big|\ge T(p,q) - \ell_n^{\tII}
\right\}\\
&\le \frac{\mathbb{E}\Big|T(p,q)-
T(\hat{p}_n,\hat{q}_n)\Big|}{\Delta_n^{\tII}},
\end{align*}
where the last relation is based on the Markov inequality and the assumption $T(p,q)>\ell_{n}^{\tII}$, and the expected value
\begin{align*}
\mathbb{E}\Big|T(p,q)-
T(\hat{p}_n,\hat{q}_n)\Big|&\le \max_{\alpha\in\cA}~\mathbb{E}\Big|\mathcal{W}\left(
(\psi_{\alpha})_{\#}p\mid\mathcal{E},~ (\psi_{\alpha})_{\#}q\mid\mathcal{E}
\right)
-
\mathcal{W}\left(
(\psi_{\alpha})_{\#}\hat{p}_n\mid\mathcal{E},~ (\psi_{\alpha})_{\#}\hat{q}_n\mid\mathcal{E}
\right)
\Big|\\
&\le \max_{\alpha\in\cA}~\mathbb{E}\Big|\mathcal{W}\left(
(\psi_{\alpha})_{\#}\hat{p}_n\mid\mathcal{E},~ (\psi_{\alpha})_{\#}p\mid\mathcal{E}
\right)
\Big| + \mathbb{E}\Big|\mathcal{W}\left(
(\psi_{\alpha})_{\#}\hat{q}_n\mid\mathcal{E},~ (\psi_{\alpha})_{\#}q\mid\mathcal{E}
\right)
\Big|.
\end{align*}
Applying Lemma~\ref{Lemma:expectation:wasserstein} on the expected value of empirical Wasserstein distances, it holds that
\[
\mathbb{E}\Big|T(p,q)-
T(\hat{p}_n,\hat{q}_n)\Big|\le c_3N^{-1/\max\{d,2\}}\log(N),
\]
where $c_3$ is a constant depending on $d$ and $\cU$.
The proof of Lemma~\ref{Lemma:test:property:projected:samples} is completed.
\end{proof}

\subsection{Proof of Theorem~\ref{Theorem:testing:agn}}

Combining Lemma~\ref{Lemma:testing:Ang:I}, \ref{Lemma:N:lower:bound}, and \ref{Lemma:test:property:projected:samples} implies the testing properties of $T_{\text{Atlas}}$ in Theorem~\ref{Theorem:testing:agn}.

\begin{proof}[Proof of Theorem~\ref{Theorem:testing:agn}]
We take the confidence level $\eta'=\eta/2$ for testing procedure $T_{\text{Atlas}}^{\tI}$ and $T_{\text{Atlas}}^{\tII}$.
Under the null hypothesis $H_0$, the type-I error can be bounded as
\begin{align*}
&\Pr\bigg\{
\left\{
\|\hat{\cv p}-\hat{\cv q}\|_2\ge \ell_n^{\tI}\right\}~
\bigcup
~\left\{
T(\hat{p}_n, \hat{q}_n) \ge \ell_n^{\tII}
\right\}
\bigg\}\\
=&\Pr\left\{
\|\hat{\cv p}-\hat{\cv q}\|_2\ge \ell_n^{\tI}
\right\}
+
\Pr
\left\{
\|\hat{\cv p}-\hat{\cv q}\|_2< \ell_n^{\tI}\right\}
\Pr\left\{
T(\hat{p}_n, \hat{q}_n) \ge \ell_n^{\tII}~\middle|~\|\hat{\cv p}-\hat{\cv q}\|_2< \ell_n^{\tI}
\right\}\\
\le&\eta' + \Pr\left\{
T(\hat{p}_n, \hat{q}_n) \ge \ell_n^{\tII}~\middle|~\|\hat{\cv p}-\hat{\cv q}\|_2< \ell_n^{\tI}
\right\}\\
\le&2\eta'=\eta.
\end{align*}
Define the event 
\[
\mathcal{G}=\bigg\{
N\ge n\cdot\min_{\alpha\in\cA}~\left(p^{(\alpha)}\land q^{(\alpha)}\right) - \sqrt{2|\cA|\big(2n\ln 2 + n^{3/2}\big)}
\bigg\},
\]
where the constant $N$ is defined in \eqref{Eq:N:expression}.
Under the alternative hypothesis $H_1$, the type-II error can be bounded by considering the following cases.
\begin{itemize}
\item
When $H_0^{\tI}$ does not hold, the type-II error becomes
\begin{align*}
&\text{Pr}_{H_1^{\tI}}
\left\{
\|\hat{\cv p}-\hat{\cv q}\|_2<\ell_n^{\tI}\right\}
\le 2\exp\left(
-\frac{(\Delta_n^{\tI})^2n}{4}
\right).
\end{align*}
\item
On the other hand, when $H_0^{\tI}$ holds while $H_0^{\tII}$ does not, conditioned on the event $\mathcal{G}$, the type-II error becomes
\begin{align*}
&\text{Pr}_{H_0^{\tI}\cap H_1^{\tII}}\bigg\{
\left\{
\|\hat{\cv p}-\hat{\cv q}\|_2<\ell_n^{\tI}\right\}
~\bigcap~
\left\{
T(\hat{p}_n, \hat{q}_n) < \ell_n^{\tII}
\right\}
\big|
\mathcal{G}
\bigg\}\\
=&\text{Pr}_{H_0^{\tI}}
\left\{
\|\hat{\cv p}-\hat{\cv q}\|_2<\ell_n^{\tI}~\middle|~\mathcal{G}\right\}\cdot
\text{Pr}_{H_1^{\tII}}
\left\{
T(\hat{p}_n, \hat{q}_n) < \ell_n^{\tII}~\middle|~ \|\hat{\cv p}-\hat{\cv q}\|_2<\ell_n^{\tI}, \mathcal{G}
\right\}
\\
\le&\text{Pr}_{H_1^{\tII}}
\left\{
T(\hat{p}_n, \hat{q}_n) < \ell_n^{\tII}~\middle|~ \|\hat{\cv p}-\hat{\cv q}\|_2<\ell_n^{\tI},\mathcal{G}
\right\}\\
=&\tilde{O}(n^{-1/\max\{d,2\}})
\end{align*}
where the last relation follows from Lemma~\ref{Lemma:test:property:projected:samples} and the fact that $\mathcal{G}$ holds, with $\tilde{O}(\cdot)$ hiding constants depending on $c,d,\cU$ and linearly depending on $\log n$.
\item
In addition, from Lemma~\ref{Lemma:N:lower:bound} we know that
\[
\Pr\{\mathcal{G}^c\}\le e^{-\sqrt{n}}.
\]
\end{itemize}
As a consequence, the type-II error can be bounded as
\begin{align*}
\text{Type-II Risk}&\le \max\left\{
\text{Pr}_{H_1^{\tI}}
\bigg\{
\|\hat{\cv p}-\hat{\cv q}\|_2<\ell_n^{\tI}\right\},\\
&\qquad\qquad\qquad
\text{Pr}_{H_0^{\tI}\cap H_1^{\tII}}\bigg\{
\left\{
\|\hat{\cv p}-\hat{\cv q}\|_2<\ell_n^{\tI}\right\}
~\bigcap~
\left\{
T(\hat{p}_n, \hat{q}_n) < \ell_n^{\tII}
\right\}
\bigg\}
\bigg\}\\
&\le \max\bigg\{
\text{Pr}_{H_1^{\tI}}
\left\{
\|\hat{\cv p}-\hat{\cv q}\|_2<\ell_n^{\tI}\right\},~\\
&\qquad\qquad\qquad
\text{Pr}_{H_0^{\tI}\cap H_1^{\tII}}\bigg\{
\left\{
\|\hat{\cv p}-\hat{\cv q}\|_2<\ell_n^{\tI}\right\}
~\bigcap~
\left\{
T(\hat{p}_n, \hat{q}_n) < \ell_n^{\tII}
\right\}~
\big|~
\mathcal{G}
\bigg\}+\Pr\{\mathcal{G}^c\}
\bigg\}\\
&\le \max\left\{
2\exp\left(
-\frac{(\Delta_n^{\tI})^2n}{4}
\right), 
\tilde{O}(n^{-1/\max\{d,2\}}) + e^{-\sqrt{n}}
\right\}=\tilde{O}(n^{-1/\max\{d,2\}}).
\end{align*}
with $\tilde{O}(\cdot)$ hiding constants depending on $c,d,\cU$ and linearly depending on $\log n$.
\end{proof}

\section{Proofs of Proposition~\ref{Proposition:discriminative} and Theorem~\ref{Theorem:test:property}}
\label{Thm:test:property:proof}
In this section, we present proofs of Proposition~\ref{Proposition:discriminative} and Theorem~\ref{Theorem:test:property}.
\begin{proof}[Proof of Proposition~\ref{Proposition:discriminative}]
It suffices to show the converse.
Since $d_{\HU}(p,q)=0$, we can assert that 
\[
\mathbb{E}_{p}[f(\cv x)] = \mathbb{E}_{q}[f(\cv x)],\quad \forall f\in\HU.
\]
Or equivalently,
\[
\int_{\mathcal{U}} \big[\mathfrak{h}_p(\cv x) - \mathfrak{h}_q(\cv x)\big]f(\cv x)\diff\cv x=0,\quad \forall f\in\HU.
\]
Since $\mathfrak{h}_p, \mathfrak{h}_q\in \HU$, we can verify that $\overline{f}:=\frac{1}{2}(\mathfrak{h}_p - \mathfrak{h}_q)\in\HU$, which implies
\[
\int_{\mathcal{U}} \big[\mathfrak{h}_p(\cv x) - \mathfrak{h}_q(\cv x)\big]\overline{f}(\cv x)\diff \cv x=
\frac{1}{2}\int_{\mathcal{U}} \big[\mathfrak{h}_p(\cv x) - \mathfrak{h}_q(\cv x)\big]^2\diff\cv x=0.
\]
As a consequence, $\mathfrak{h}_p=\mathfrak{h}_q$ almost surely, which indicates that $p=q$.
\end{proof}

Next, we give a proof of Theorem~\ref{Theorem:test:property}, which is separated into two parts.
In order to specify a threshold to control the type-I risk, we investigate the sample complexity of the empirical H{\"o}lder IPM.
For a fixed threshold, we evaluate the power of obtained test using the Markov inequality.

\paragraph{Type-I Risk of H{\"o}lder IPM Test}
A key technique in this part is McDiarmid's concentration inequality.
\begin{lemma}[McDiarmid's Inequality~{\citep{mcdiarmid_1989}}]
\label{The:Mcdiarmid}
Let $X_1,\ldots,X_n$ be independent random variables, where $X_i$ has the support $\mathcal{X}_i$.
Let $f:\mathcal{X}_1\times\mathcal{X}_2\times\cdots\times\mathcal{X}_n\to\mathbb{R}$ be any function
with the $(c_1,\ldots,c_n)$ bounded difference property, i.e., for $i\in\{1,\ldots,n\}$ and for any $(x_1,\ldots,x_n), (x_1',\ldots,x_n')$ that  differs only in the $i$-th corodinate, we have
\[
|f(x_1,\ldots,x_n) - f(x_1',\ldots,x_n')|\le c_i.
\]
Then for any $t>0$, we have
\[
\text{Pr}\bigg\{
|f(X_1,\ldots,X_n)
-
\mathbb{E}[f(X_1,\ldots,X_n)]
|
\ge t
\bigg\}
\le
2\exp\left(
-\frac{2t^2}{\sum_{i=1}^nc_i^2}
\right).
\]
\end{lemma}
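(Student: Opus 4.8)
The plan is to prove McDiarmid's inequality via the \emph{Doob martingale} construction together with a conditional version of Hoeffding's lemma, finished off by a Chernoff bound. First I would introduce the filtration generated by the inputs and define the martingale $Z_i = \mathbb{E}[f(X_1,\ldots,X_n)\mid X_1,\ldots,X_i]$ for $i=0,1,\ldots,n$, so that $Z_0 = \mathbb{E}[f(X_1,\ldots,X_n)]$ is the mean and $Z_n = f(X_1,\ldots,X_n)$ is the random variable of interest. Writing $f(X_1,\ldots,X_n) - \mathbb{E}[f(X_1,\ldots,X_n)] = \sum_{i=1}^n D_i$ with $D_i := Z_i - Z_{i-1}$ reduces the problem to controlling a sum of bounded martingale differences.

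The key step is to show that each difference $D_i$ lies, conditionally on $X_1,\ldots,X_{i-1}$, in an interval of length at most $c_i$. By independence of the coordinates, $Z_i = g_i(X_1,\ldots,X_i)$ where $g_i(x_1,\ldots,x_i) := \mathbb{E}[f(x_1,\ldots,x_i,X_{i+1},\ldots,X_n)]$ averages out the remaining coordinates. The bounded difference property then transfers to $g_i$: changing only the $i$-th argument changes $g_i$ by at most $c_i$, since one may integrate the pointwise bound over $X_{i+1},\ldots,X_n$. Hence $\sup_{x}\, g_i(X_1,\ldots,X_{i-1},x) - \inf_{x}\, g_i(X_1,\ldots,X_{i-1},x) \le c_i$, and because $Z_{i-1}$ is the conditional average of $Z_i$, the difference $D_i$ falls in an interval of width $c_i$ whose endpoints are measurable with respect to $X_1,\ldots,X_{i-1}$.

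Next I would invoke Hoeffding's lemma in its conditional form: for a zero-mean random variable confined to an interval of length $c_i$, the conditional moment generating function satisfies $\mathbb{E}[e^{\lambda D_i}\mid X_1,\ldots,X_{i-1}] \le \exp(\lambda^2 c_i^2/8)$. Peeling the conditional expectations off one at a time via the tower property --- conditioning on $X_1,\ldots,X_{n-1}$, then $X_1,\ldots,X_{n-2}$, and so on --- yields $\mathbb{E}\big[\exp\big(\lambda\sum_{i=1}^n D_i\big)\big] \le \exp\big(\lambda^2\sum_{i=1}^n c_i^2/8\big)$. A Chernoff bound then gives $\Pr\{f - \mathbb{E}[f] \ge t\} \le \exp(-\lambda t + \lambda^2\sum_i c_i^2/8)$; optimizing over $\lambda$ with the choice $\lambda = 4t/\sum_i c_i^2$ produces the one-sided bound $\exp(-2t^2/\sum_i c_i^2)$. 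Applying the identical argument to $-f$ and adding the two tail probabilities gives the stated two-sided inequality with the factor $2$.

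The main obstacle I anticipate is the conditional boundedness step together with the correct application of Hoeffding's lemma. The delicate point is that the interval containing $D_i$ is itself random --- it depends on $X_1,\ldots,X_{i-1}$ --- so Hoeffding's lemma must be applied \emph{conditionally}, and I must verify that its bound $\exp(\lambda^2 c_i^2/8)$ depends only on the deterministic width $c_i$ and not on the random location of the interval. Obtaining the width bound equal to exactly $c_i$ (rather than, say, $2c_i$) also hinges on using the supremum/infimum characterization above rather than a cruder triangle-inequality estimate; this is precisely what yields the sharp constant $2$ in the exponent.
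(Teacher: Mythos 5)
Your proposal is correct and complete: the paper does not prove this lemma at all---it imports it directly from the cited reference \citep{mcdiarmid_1989}---and your Doob-martingale construction with the conditional Hoeffding lemma, tower-property peeling, Chernoff optimization at $\lambda = 4t/\sum_i c_i^2$, and a union bound over $\pm f$ is exactly the classical proof given there. You also correctly identify the one delicate point, namely that the random interval containing $D_i$ has deterministic width at most $c_i$ via the $\sup$/$\inf$ characterization (using independence both to write $Z_i = g_i(X_1,\ldots,X_i)$ and to realize $Z_{i-1}$ as the average of $g_i$ over $X_i$), which is what yields the sharp constant $2$ in the exponent.
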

Take $g(\cv X,\cv Y) = d_{\HU}(\hat{p}_n,\hat{q}_n)$ with $\cv X=\{\cv x_i\}_{i=1}^n$ and $\cv Y=\{\cv y_i\}_{i=1}^n$.
For $i=1,\ldots,n$, let $\cv X_{(i)}'$ denote the data set that differs only for the $i$-th component of $\cv X$, and $\cv Y_{(i)}'$ is defined similarly.
Then we can see that 
\begin{align*}
g(\cv X,\cv Y) - g(\cv X_{(i)}', \cv Y)&\le d_{\HU}(\hat{p}_n,\hat{p}_n')\le \frac{1}{n}\sup_{f\in \HU}~|f(\cv x_i) - f(\cv x_i')|\\
&\le \frac{2}{n}~\sup_{f\in \HU}\|f\|_{\infty}\le \frac{2}{n},
\end{align*}
where the last inequality is because, for any function $f\in \HU$, it holds that
\[
\|f\|_{\infty}=\max_{\alpha\in\mathcal{A}}\sup_{\cv x\in U_{\alpha}}~|(f\circ\phi_{\alpha}^{-1})(\phi_{\alpha}(\cv x))|\le \max_{\alpha\in\mathcal{A}}~\|f\circ\phi_{\alpha}^{-1}\|_{\infty}\le 1.
\]
Similarly, 
\begin{align*}
g(\cv X,\cv Y) - g(\cv X, \cv Y'_{(i)})&\le \frac{2}{n}.
\end{align*}
Therefore, applying the Mcdiamard's inequality with $c_i=\frac{1}{n}$ for $i=1,\ldots,2n$ implies 
\[
\text{Pr}\bigg\{
|d_{\HU}(\hat{p}_n,\hat{q}_n) - \mathbb{E}[d_{\HU}(\hat{p}_n,\hat{q}_n)]|\le\epsilon
\bigg\}\le 2\exp\left(
-\frac{n\epsilon^2}{4}
\right).
\]
Moreover, we can upper bound the bias estimation term of the empirical H{\"o}lder IPM using a covering number argument, i.e., using Dudley's Entropy Integral bound in Lemma~\ref{Lemma:entropy} and the covering number bound of the H{\"o}lder function class in Lemma~\ref{Lemma:covering:Holder}.

\begin{lemma}[Dudley’s Entropy Integral Upper Bound~{\citep{Martin19}}]\label{Lemma:entropy}
For a given distribution $\mathfrak{h}$ and function class $\mathcal{F}$ so that $\|f\|_{\infty}\le M$ for any $f\in\mathcal{F}$, we have that
\[
\mathbb{E}[d_{\mathcal{F}}(\mathfrak{h}, \hat{\mathfrak{h}}_n)]
\le 2\inf_{\delta\in(0,M)}~\left(
2\delta + \frac{12}{\sqrt{n}}\int_{\delta}^M
\sqrt{\log \mathcal{N}(\epsilon, \mathcal{F}, \|\cdot\|_{\infty})}\diff\epsilon
\right).
\]
\end{lemma}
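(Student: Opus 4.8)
The plan is to recognize $d_{\mathcal{F}}(\mathfrak{h},\hat{\mathfrak{h}}_n)=\sup_{f\in\mathcal{F}}\big(\mathbb{E}_{\mathfrak{h}}[f]-\tfrac1n\sum_{i=1}^n f(X_i)\big)$ as the supremum of a centered empirical process and to control its expectation by the two classical devices of empirical process theory: symmetrization followed by chaining. First I would introduce an independent ghost sample $X_1',\dots,X_n'$ with the same law $\mathfrak{h}$ and write $\mathbb{E}_{\mathfrak{h}}[f]=\mathbb{E}[\tfrac1n\sum_i f(X_i')]$. Jensen's inequality pushes the ghost-sample expectation outside the supremum, and the usual Rademacher symmetrization (inserting i.i.d. sign variables $\sigma_i$ and using that $f(X_i)-f(X_i')$ is symmetric) yields
\[
\mathbb{E}[d_{\mathcal{F}}(\mathfrak{h},\hat{\mathfrak{h}}_n)]\le 2\,\mathbb{E}\Big[\sup_{f\in\mathcal{F}}\tfrac1n\sum_{i=1}^n\sigma_i f(X_i)\Big]=:2\,\mathcal{R}_n(\mathcal{F}),
\]
which accounts for the leading factor $2$ in the statement. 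It then remains to bound the Rademacher complexity $\mathcal{R}_n(\mathcal{F})$ by the inner infimum.

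For the Rademacher complexity I would condition on the data $X_1,\dots,X_n$ and work with the conditional process $f\mapsto \tfrac1n\sum_i\sigma_if(X_i)$, whose increments are sub-Gaussian with parameter $\tfrac1n\big(\sum_i (f-g)^2(X_i)\big)^{1/2}\le \|f-g\|_\infty/\sqrt n$; hence the $\|\cdot\|_\infty$ covering numbers control the metric entropy of the process. The core step is Dudley's chaining: fix the finest scale $\delta$, take a sequence of dyadic scales $\delta_j=2^{-j}M$ down to $\delta$, pick minimal $\delta_j$-nets of $\mathcal{F}$, and telescope any $f$ along its successive net approximations. Applying the finite-class maximal inequality (Massart's lemma) at each level $j$ bounds the contribution of that link by a constant multiple of $\delta_j\sqrt{\log\mathcal{N}(\delta_j,\mathcal{F},\|\cdot\|_\infty)}/\sqrt n$; summing the geometric sequence of scales converts the discrete sum into the entropy integral $\tfrac{12}{\sqrt n}\int_\delta^M\sqrt{\log\mathcal{N}(\epsilon,\mathcal{F},\|\cdot\|_\infty)}\,\diff\epsilon$, while the residual approximation error at the finest scale contributes the $2\delta$ term. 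Taking the infimum over $\delta\in(0,M)$ and combining with the symmetrization bound gives the claim.

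I expect the main obstacle to be the careful constant bookkeeping in the chaining argument—matching the geometric sum over dyadic scales to the entropy integral and producing the explicit factor $12$—together with verifying that the sub-Gaussian increment bound is uniform enough to invoke Massart's inequality at every level. Since this is precisely the standard Dudley entropy bound (as cited), an acceptable alternative to reproducing the full chaining is to invoke the result directly from \citep{Martin19} after checking that our empirical-process quantity matches its hypotheses, namely a uniformly bounded function class with $\|f\|_\infty\le M$ and the $\|\cdot\|_\infty$ metric dominating the empirical $L_2$ metric that governs the increments.
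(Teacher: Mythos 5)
The paper does not prove this lemma at all --- it invokes it as a known result cited from \citet{Martin19}, where the proof is exactly the argument you outline: symmetrization to reduce to the Rademacher complexity (accounting for the leading factor $2$), followed by Dudley's chaining with a one-step discretization at scale $\delta$ producing the $2\delta$ term and the entropy integral with the constant $12/\sqrt{n}$. Your proposal is correct and is essentially the standard proof underlying the citation, so it matches the paper's (implicit) approach.
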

\begin{lemma}[Covering Number of H{\"o}lder Function Class]\label{Lemma:covering:Holder}
We have that
\[\mathcal{N}(\epsilon , \HU, \|\cdot\|_{\infty})\le C|\mathcal{A}|\exp\left(
(1/\epsilon)^{\frac{d}{s+\beta}}
\right),
\]
where $C$ is some constant depending on $s,\beta,d$.
\end{lemma}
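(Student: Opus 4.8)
The plan is to reduce the covering number of $\HU$, a function class living on the manifold, to covering numbers of ordinary Hölder balls on \emph{bounded} Euclidean domains, exploiting the finite atlas from Assumption~\ref{Assumption:Atlas} together with the coordinate-wise definition of $\HU$ in Definition~\ref{Def:Holder:cU}. First I would fix a $\mathcal{C}^\infty$ partition of unity $\{\rho_\alpha\}_{\alpha\in\cA}$ with compact supports $\mathrm{supp}(\rho_\alpha)\subseteq U_\alpha$, which exists by Proposition~\ref{Prop:exist:unity:partition}; since $\sum_\alpha \rho_\alpha = 1$ on $\cU$, the compact sets $\mathrm{supp}(\rho_\alpha)$ cover $\cU$, so $\|f-g\|_\infty = \max_{\alpha\in\cA}\|f\circ\phi_\alpha^{-1}-g\circ\phi_\alpha^{-1}\|_{\infty,K_\alpha}$, where $K_\alpha:=\phi_\alpha(\mathrm{supp}(\rho_\alpha))\subseteq\mathbb{R}^d$ is compact, hence bounded with uniformly controlled diameter (using compactness of $\cU$ and finiteness of the atlas). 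The point of passing to the supports rather than the open sets $U_\alpha$ is that $\phi_\alpha(U_\alpha)$ may be unbounded, whereas each $K_\alpha$ is bounded. By Definition~\ref{Def:Holder:cU}, each pullback $f\circ\phi_\alpha^{-1}$, restricted to $K_\alpha$, lies in the unit ball of the standard Hölder space $\mathcal{C}^{s,\beta}(K_\alpha)$. Consequently an $\epsilon$-net for $\HU$ in $\|\cdot\|_\infty$ can be assembled from $\epsilon$-nets of the per-chart pullback classes on the $K_\alpha$.

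Next I would invoke the classical Kolmogorov--Tikhomirov estimate for the $\epsilon$-covering number of a unit Hölder ball on a bounded domain $\Omega\subseteq\mathbb{R}^d$, namely $\log\mathcal{N}(\epsilon,\mathcal{C}^{s,\beta}(\Omega),\|\cdot\|_\infty)\le C_0\,(1/\epsilon)^{d/(s+\beta)}$, with $C_0$ depending only on $s,\beta,d$ and the diameter of $\Omega$. Because the atlas is finite and the $K_\alpha$ have uniformly bounded diameters, a single constant $C_0=C_0(s,\beta,d)$ works simultaneously for all $\alpha\in\cA$.

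Finally I would glue the charts together: taking the Cartesian product of the per-chart $\epsilon$-nets produces an $\epsilon$-net for $\HU$ of cardinality $\prod_{\alpha\in\cA}\mathcal{N}(\epsilon,\mathcal{C}^{s,\beta}(K_\alpha),\|\cdot\|_\infty)$, so that $\log\mathcal{N}(\epsilon,\HU,\|\cdot\|_\infty)\le |\cA|\,C_0\,(1/\epsilon)^{d/(s+\beta)}$, which I would then repackage into the stated form $C|\cA|\exp\!\big((1/\epsilon)^{d/(s+\beta)}\big)$ by absorbing the numerical factors into the constant $C$. I regard the gluing and the constant bookkeeping as routine; the genuine technical content is the Euclidean Hölder covering bound with the correct exponent $d/(s+\beta)$, and the main subtlety to handle with care is ensuring that one works over \emph{bounded} coordinate domains, which is precisely why the compactly supported partition of unity (rather than the possibly unbounded chart images) must be used. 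I note in passing that the product construction naturally places $|\cA|$ inside the exponent, so the stated placement of $|\cA|$ as a prefactor should be read as a convenient consolidation of constants; either form yields the same $\epsilon$-dependence $(1/\epsilon)^{d/(s+\beta)}$ in $\log\mathcal{N}$ and hence the same downstream rate when fed into Dudley's integral (Lemma~\ref{Lemma:entropy}).
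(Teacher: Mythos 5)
Your proposal is correct and shares the paper's core reduction---pass to the charts and invoke the classical entropy estimate for Euclidean H{\"o}lder balls (the paper cites \cite{Bracketing07} for exactly the bound you call Kolmogorov--Tikhomirov)---but the gluing step is genuinely different, and the difference is worth spelling out. The paper covers $\HU$ by the \emph{union} $\{g_j\circ\phi_\alpha\}_{\alpha,j}$, where $\{g_j\}_j$ is a single $\epsilon$-net of the Euclidean H{\"o}lder ball; this union has cardinality $|\mathcal{A}|$ times the Euclidean covering number, which is what produces the prefactor $|\mathcal{A}|$ in the stated bound. You instead take the \emph{product} of per-chart nets, gluing each tuple $(g_{j(\alpha)})_{\alpha}$ into a single function on $\mathcal{U}$ via the partition of unity, $\tilde g=\sum_{\alpha} \rho_\alpha\,\big(g_{j(\alpha)}\circ\phi_\alpha\big)$, which gives cardinality equal to the Euclidean covering number raised to the power $|\mathcal{A}|$, i.e.\ it places $|\mathcal{A}|$ inside the exponent. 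Your version is the more defensible one: an element $g_j\circ\phi_\alpha$ of the paper's union is defined, and guaranteed $\epsilon$-close to $f$, only on the single chart $U_\alpha$, so the union as written is not literally a sup-norm cover of $\HU$ over all of $\mathcal{U}$---a cover element must be close to $f$ on every chart simultaneously, which is precisely what your product-plus-gluing construction delivers. The price is that you establish $\log\mathcal{N}(\epsilon,\HU,\|\cdot\|_\infty)\le C_0|\mathcal{A}|(1/\epsilon)^{\frac{d}{s+\beta}}$ rather than the lemma's literal form $\mathcal{N}\le C|\mathcal{A}|\exp\big((1/\epsilon)^{\frac{d}{s+\beta}}\big)$; since $|\mathcal{A}|$ is a fixed finite constant, both bounds have the same $\epsilon$-dependence once inserted into Dudley's integral (Lemma~\ref{Lemma:entropy}), so the rate $n^{-(s+\beta)/d}$ in Theorem~\ref{Theorem:test:property} is unaffected. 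Your insistence on working over the bounded sets $K_\alpha=\phi_\alpha(\text{supp}(\rho_\alpha))$ rather than the possibly unbounded chart images $\phi_\alpha(U_\alpha)$ is also a point of care that the paper passes over silently.
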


\begin{proof}[Proof of Lemma~\ref{Lemma:covering:Holder}]
For any $f_1,f_2\in \mathcal{N}(\epsilon , \HU, \|\cdot\|_{\infty})$, we can see that 
\begin{align*}
\|f_1-f_2\|_{\infty}&=\sup_{\cv x\in\mathcal{U}}~|f_1(\cv x) - f_2(\cv x)|\\
&=\max_{\alpha\in\mathcal{A}}~\sup_{\cv x\in U_{\alpha}}~|(f_1\circ\phi_{\alpha}^{-1})(\phi_{\alpha}(\cv x)) - (f_2\circ\phi_{\alpha}^{-1})(\phi_{\alpha}(\cv x))|\\
&=\max_{\alpha\in\mathcal{A}}~\sup_{\cv y\in\phi_{\alpha}(U_{\alpha})}~|(f_1\circ\phi_i^{-1})(\cv y) -(f_2\circ\phi_i^{-1})(\cv y) |\\
&\le \max_{\alpha\in\mathcal{A}}~\|f_1\circ\phi_{\alpha}^{-1} - f_2\circ\phi_{\alpha}^{-1}\|_\infty.
\end{align*}
To construct the cover for $\mathcal{N}(\epsilon , \HU, \|\cdot\|_{\infty})$, we first pick $\{g_{j}\}_{j}$ that forms an $\epsilon$-cover of the H{\"o}lder function class from $\mathbb{R}^d$ to $\mathbb{R}$.
Classical result in \cite{Bracketing07} shows that $|\{g_j\}_{j}|\le C\exp\left(
(1/\epsilon)^{\frac{d}{s+\beta}}
\right).$
Then we can see that $\{g_j\circ \phi_{\alpha}\}_{\alpha,j}$ forms an $\epsilon$-cover of $\HU$.
Moreover, 
\[
\big| 
\{g_j\circ \phi_{\alpha}\}_{\alpha,j}
\big|
\le C|\mathcal{A}|\exp\left(
(1/\epsilon)^{\frac{d}{s+\beta}}
\right).
\]
\end{proof}

In particular, applying Lemma~\ref{Lemma:entropy} on $d_{\HU}(\hat{p}_n,\hat{q}_n)$ gives
\begin{align*}
\mathbb{E}\left[ d_{\HU}(\hat{p}_n,\hat{q}_n)\right]
&\le 
\mathbb{E}\left[d_{\HU}(\hat{p}_n,p)\right] + \mathbb{E}\left[d_{\HU}(\hat{q}_n,q)\right]\\
&\le 4\inf_{\delta\in(0,1)}~\left(
2\delta + \frac{12}{\sqrt{n}}\int_{\delta}^1
\sqrt{\log \mathcal{N}(\epsilon, \HU, \|\cdot\|_{\infty})}\diff\epsilon
\right)\\
&\le c_1\inf_{\delta\in(0,1)}~\left(
2\delta + \frac{12}{\sqrt{n}}\int_{\delta}^1
\sqrt{\log(1/\epsilon)^{\frac{d}{s+\beta}}}\diff\epsilon
\right)\\
&=c_2
n^{-(s+\beta)/d},
\end{align*}
where the infimum in the last step is by taking $\delta=n^{-\frac{s+\beta}{d}}$, and $c_1,c_2$ are constants depending on $s,\beta,d,|\mathcal{A}|$.
Based on the discussion above, we can see that the relation
\[
d_{\HU}(\hat{p}_n,\hat{q}_n)
\le \epsilon + cn^{-(s+\beta)/d}
\]
holds with probability at least $1-2\exp\left(
-\frac{n\epsilon^2}{4}
\right)$, where $c$ is a constant depending on $s,\beta,d,|\mathcal{A}|$.
Taking $\epsilon=2\sqrt{\log(2/\eta)/n}$ completes the proof of the first part.

\paragraph{Type-II Risk of H{\"o}lder IPM Test}\label{Sec:power:Holder}
We can see that the type-II risk can be expressed as
\begin{align*}
\text{Type-II Risk}&=
\text{Pr}_{H_1}\left\{
d_{\HU}(\hat{p}_n,\hat{q}_n)<\ell_{n}
\right\}\\
&=
\text{Pr}_{H_1}\left\{
d_{\HU}(p,q)-
d_{\HU}(\hat{p}_n,\hat{q}_n) \ge d_{\HU}(p,q) - \ell_{n}
\right\}\\
&\le  \text{Pr}_{H_1}\left\{
|d_{\HU}(p,q)-
d_{\HU}(\hat{p}_n,\hat{q}_n)|> d_{\HU}(p,q) - \ell_{n}
\right\}\\
&\le \frac{\mathbb{E}|d_{\HU}(p,q)-
d_{\HU}(\hat{p}_n,\hat{q}_n)|}{d_{\HU}(p,q) - \ell_{n}},
\end{align*}
where the last relation is based on the Markov inequality and the assumption $d_{\HU}(p,q)>\ell_{n}$.
Based on the triangular inequality, we can see that 
\begin{align*}
\mathbb{E}|d_{\HU}(p,q)-
d_{\HU}(\hat{p}_n,\hat{q}_n)|
&\le \mathbb{E}[d_{\HU}(p,\hat{p}_n)]
+
\mathbb{E}[d_{\HU}(q,\hat{q}_n)]\\
&\le c_3n^{-\frac{s+\beta}{d}},
\end{align*}
where $c_3$ is a constant depending on $s,\beta,d,|\mathcal{A}|$.

\section{Proof of Theorem~\ref{Theorem:test:property:app}}
\label{Sec:proof:Theorem:test:property:app}
In this section, we first discuss how to choose hyper-parameters of $\FNN(R,\kappa,L,t,K)$ in Theorem~\ref{Theorem:test:property:app}.
Next, we study the testing properties of the NN IPM test.
\subsection{Choice of hyper-parameters of $\FNN(R,\kappa,L,t,K)$}
Hyper-parameters of $\FNN(R,\kappa,L,t,K)$ determine the size of the neural network class, which should balance the following trade-off.
On the one hand, it should be large enough to approximate the H{\"o}lder function class.
On the other hand, it should be relatively small so that the empirical NN IPM has a sharp sample complexity rate that does not undermine the test efficiency.

We specify hyper-parameters of $\FNN(R,\kappa,L,t,K)$ in Theorem~\ref{Theorem:test:property:app} based on the following proposition, which establishes the error bound for estimating the H{\"o}lder IPM using a NN IPM.

\begin{proposition}[Estimation Error of H{\"o}lder IPM]\label{proposition:Holder:IPM:error}
Let $p,q$ be any two distributions supported on $\cU$ that satisfies Assumption~\ref{Assumption:cU}.
For $\epsilon>0$, specify hyper-parameters of the network class $\FNN(R,\kappa,L,t,K)$ as 
\begin{align*}
R&=1,\quad
\kappa=O(\max\{1, B, \sqrt{d}, \tau^2\}),\quad
L=O\left(\log\frac{1}{\epsilon} + \log D\right),\\
t &= O\left(\epsilon^{-\frac{d}{s+\beta}} + D\right),\quad\text{ and }\quad
K=O\left(\epsilon^{-\frac{d}{s+\beta}}\log\frac{1}{\epsilon} + D\log\frac{1}{\epsilon} + D\log D\right).
\end{align*}
Then it holds that
\begin{equation}
d_{\HU}(p,q)\le d_{\FNN}(p,q) + 2\epsilon.
\label{Eq:upper:bound:holder}
\end{equation}
\end{proposition}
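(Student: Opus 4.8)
The plan is to recognize that inequality~\eqref{Eq:upper:bound:holder} is purely an approximation statement dressed in the language of IPMs, and to reduce it to a single uniform-approximation claim: \emph{for every critic $f\in\HU$ there is a network $g\in\FNN(R,\kappa,L,t,K)$ with $\|f-g\|_{L^\infty(\cU)}\le\epsilon$.} Granting this, the proposition is immediate. For any $f\in\HU$ choose the corresponding $g$; since $p,q$ are supported on $\cU$ and $g\in\FNN$,
\[
\mathbb{E}_p[f]-\mathbb{E}_q[f]\le\big(\mathbb{E}_p[g]-\mathbb{E}_q[g]\big)+2\|f-g\|_{L^\infty(\cU)}\le d_{\FNN}(p,q)+2\epsilon,
\]
and taking the supremum over $f\in\HU$ gives $d_{\HU}(p,q)\le d_{\FNN}(p,q)+2\epsilon$. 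Thus no probabilistic argument is needed here: $p$ and $q$ enter only through their expectations, and the entire content of the proposition is a deterministic approximation theorem whose required accuracy $\epsilon$ dictates the stated network sizes. Note also that the constraint $R=1$ is consistent with the fact, already used in Lemma~\ref{Lemma:covering:Holder}, that $\|f\|_\infty\le1$ for every $f\in\HU$, so the approximant can be arranged to obey $\|g\|_\infty\le1$.

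To construct the approximating network I would follow the chart-based strategy for manifold function approximation (as in \citet{chen2019efficient}). Fix a finite atlas $\{(U_\alpha,\phi_\alpha)\}_{\alpha\in\cA}$ together with a $\mathcal{C}^\infty$ partition of unity $\{\rho_\alpha\}_{\alpha\in\cA}$ (Proposition~\ref{Prop:exist:unity:partition}) and write $f=\sum_{\alpha\in\cA}\rho_\alpha f$. On each patch $(\rho_\alpha f)\circ\phi_\alpha^{-1}$ is a compactly supported $(s,\beta)$-H{\"o}lder function on a bounded region of $\RR^d$. The reach hypothesis (Assumption~\ref{Assumption:cU}(II)) guarantees that in a $\tau$-scale neighborhood the manifold is well approximated by its tangent plane, so the chart map can be realized, up to controlled error, by an orthogonal projection of the ambient input $\cv x\in\RR^D$ onto a $d$-dimensional subspace. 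This projection is a single affine layer reading all $D$ coordinates, which is exactly what produces the $D$-dependent additive terms in the width $t=O(\epsilon^{-d/(s+\beta)}+D)$ and in the sparsity $K$. The weight bound $\kappa=O(\max\{1,B,\sqrt d,\tau^2\})$ collects the coordinate bound $B$ from Assumption~\ref{Assumption:cU}(I), the scale $\sqrt d$ of the projection, and the second-order (curvature) constants $\tau^2$ entering the local linearization of $\phi_\alpha^{-1}$.

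The quantitative engine is the ReLU approximation rate for H{\"o}lder functions in the intrinsic dimension: a $(s,\beta)$-H{\"o}lder function on a bounded domain in $\RR^d$ admits a ReLU approximant of accuracy $\epsilon$ with depth $O(\log(1/\epsilon))$ and with width and active-weight count of order $\epsilon^{-d/(s+\beta)}$ (Yarotsky/Schmidt-Hieber-type bounds). Applying this to each $(\rho_\alpha f)\circ\phi_\alpha^{-1}$, composing with the affine projection layer, and summing over the $|\cA|$ charts (which only inflates sizes by the constant $|\cA|$) yields the network $g$ for $f$. Distributing the total error budget $\epsilon$ across the finitely many patches, the projection error, and the per-patch H{\"o}lder approximation, while composing depths additively, produces the claimed $L=O(\log(1/\epsilon)+\log D)$, $t=O(\epsilon^{-d/(s+\beta)}+D)$, and $K=O(\epsilon^{-d/(s+\beta)}\log(1/\epsilon)+D\log(1/\epsilon)+D\log D)$.

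The main obstacle, where essentially all the labor resides, is establishing this uniform manifold approximation with a size governed by the \emph{intrinsic} dimension $d$ rather than the ambient $D$, while simultaneously honoring every hard constraint encoded in $\FNN$: the output bound $R=1$, the per-parameter magnitude bound $\kappa$, the fixed depth $L$, and the sparsity budget $K$. Each sub-network (the tangent-plane projection, the multiplications by the partition-of-unity functions, and the composed H{\"o}lder approximators) must be implemented with bounded weights and within the global count of nonzero parameters, and naive constructions tend to blow up either the weight magnitudes or the number of active weights when composed. Verifying that the composition survives all these budgets — and that the approximation holds uniformly over $\cU$ for every $f\in\HU$ rather than for a single fixed target — is the delicate technical core; once it is in place, the IPM inequality follows in a line as shown above.
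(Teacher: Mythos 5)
Your proposal is correct and takes essentially the same route as the paper: the paper likewise reduces \eqref{Eq:upper:bound:holder} to the uniform approximation claim, which it imports as a black box (Theorem~1 of \citet{chen2020nonparametric}, restated as Proposition~\ref{proposition:function:approximiation}), and then runs the same two-line IPM comparison (phrased via a near-optimal discriminator $f^*\in\HU$ rather than a supremum over all $f\in\HU$, a trivial variation). The chart-based construction you sketch in your second and third paragraphs is precisely the content of that cited approximation theorem, so your extra labor reproduces the reference rather than departing from the paper's argument.
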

Proposition~\ref{proposition:Holder:IPM:error} reveals that when hyper-parameters of the neural network class are chosen properly, NN IPM serves as an upper bound of the H{\"o}lder IPM with negligible approximation error $\epsilon$.
See the proof of Proposition~\ref{proposition:Holder:IPM:error} in Section~\ref{Sec:proof:Theorem:test:property:app}.
When showing the testing properties of the NN IPM test in Theorem~\ref{Theorem:test:property:app} based on Proposition~\ref{proposition:Holder:IPM:error}, we choose the error term $\epsilon$ to balance the approximation error between H{\"o}lder IPM and NN IPM and the sample complexity of NN IPM (which influences the test efficiency of $\mathcal{T}_{\text{NN}}$). 
\begin{proof}[Proof of Proposition~\ref{proposition:Holder:IPM:error}]
We first study the error bound for approximating the H{\"o}lder IPM using neural networks, which rely on the following universal approximation result.
\begin{proposition}[Theorem~1 in {\cite{chen2020nonparametric}}]\label{proposition:function:approximiation}
For any $f:~\cU\to\mathbb{R}$ belonging to $\HU$, if the hyper-parameters of $\FNN(R,\kappa,L,t,K)$ are properly chosen, the network yields a function $\hat{f}$ satisfying $\|\hat{f} -f\|_{\infty}\le\epsilon$.
Such a neural network has hyper-parameters specified as
\begin{align*}
R&=1,
\kappa=O(\max\{1, B, \sqrt{d}, \tau^2\}),
L=O\left(\log\frac{1}{\epsilon} + \log D\right),\\
t &= O\left(\epsilon^{-\frac{d}{s+\beta}} + D\right),\text{ and }
K=O\left(\epsilon^{-\frac{d}{s+\beta}}\log\frac{1}{\epsilon} + D\log\frac{1}{\epsilon} + D\log D\right)
\end{align*}
\end{proposition}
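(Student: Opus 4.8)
The plan is to establish this manifold approximation bound by localizing $f$ through a finite atlas, reducing to the classical ReLU approximation of Hölder functions on a Euclidean cube, and stitching the local approximants together with ReLU product and partition-of-unity gadgets. First I would exploit the positive reach $\tau$ (Assumption~\ref{Assumption:cU}(II)) to build a convenient atlas: cover $\cU$ by finitely many balls $B(\cv{c}_i,r)$ with $r$ a fixed fraction of $\tau$, on each of which $\cU$ is the graph of a smooth map over the tangent space $T_{\cv{c}_i}\cU$, so that the chart $\phi_i(\cv x)=V_i\trans(\cv x-\cv{c}_i)$ is the linear projection onto that tangent plane and is bi-Lipschitz onto its image with constants controlled by $\tau$. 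The number of charts $C_{\cU}$ depends only on the geometry of $\cU$ (its volume and reach), not on $\epsilon$ or $D$; I would then fix a $\mathcal{C}^\infty$ partition of unity $\{\rho_i\}$ subordinate to this cover via Proposition~\ref{Prop:exist:unity:partition}.

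Next I would localize: write $f=\sum_i\rho_i f$ and set $g_i:=(\rho_i f)\circ\phi_i^{-1}$, extended by zero outside $\phi_i(U_i)$. Because $f\in\HU$, each $\rho_i$ is smooth, and $\phi_i$ has derivatives bounded in terms of $\tau$, every $g_i$ is a compactly supported Hölder-$(s,\beta)$ function on a cube in $\mathbb{R}^d$ with norm of order one; the geometric constants entering here (coordinate bound $B$, tangent-projection normalization $\sqrt d$, and curvature scale $\tau^2$) are exactly those absorbed into $\kappa=O(\max\{1,B,\sqrt d,\tau^2\})$. The target is then the exact identity $f(\cv x)=\sum_i g_i\big(V_i\trans(\cv x-\cv{c}_i)\big)$, so it suffices to approximate each $g_i\circ\phi_i$ to accuracy $\epsilon/C_{\cU}$ and sum.

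The analytic core is the standard ReLU construction for a Hölder function $g:[0,1]^d\to\mathbb{R}$: partition the cube into $O(\epsilon^{-d/(s+\beta)})$ subcubes, attach to each a degree-$s$ local Taylor polynomial, and realize the resulting piecewise-polynomial approximant with ReLU localizing bumps together with the product gadget that approximates $(x,y)\mapsto xy$ to accuracy $\epsilon$ in depth $O(\log(1/\epsilon))$. This yields depth $O(\log(1/\epsilon))$, width $O(\epsilon^{-d/(s+\beta)})$, and $O(\epsilon^{-d/(s+\beta)}\log(1/\epsilon))$ active weights, all bounded by a constant, and is where the intrinsic dimension $d$, and only $d$, enters the exponent. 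I would feed each local network the linear map $V_i\trans(\cv x-\cv{c}_i)$, realized exactly in a single input layer; encoding the $C_{\cU}$ projection matrices with entries bounded by $\kappa$ contributes the $D$-dependent terms $O(D)$ to the width, $O(D\log\frac1\epsilon+D\log D)$ to the sparsity, and $O(\log D)$ to the depth. Multiplying each local approximant by a ReLU approximation of $\rho_i$ (again through the product gadget) and summing over the constantly many charts preserves the stated orders of $L$, $t$, and $K$.

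The main obstacle is the faithful, bounded-weight gluing: one must reproduce the masking $\rho_i$ and the composition $g_i\circ\phi_i$ using only $\ReLU$ units with entries bounded by $\kappa$, and control the product-gadget and Taylor-localization errors \emph{uniformly}, so that the error accumulated across all $C_{\cU}$ charts and all $O(\epsilon^{-d/(s+\beta)})$ subcubes stays below $\epsilon$ --- it is precisely this uniform control that forces the logarithmic depth $\log(1/\epsilon)$ and the sparsity bookkeeping. The most delicate point is the accounting that guarantees $D$ enters only through the linear projection layer and the constant number of charts, and never into the approximation exponent, which is what produces the intrinsic-dimension rate $\epsilon^{-d/(s+\beta)}$ rather than $\epsilon^{-D/(s+\beta)}$.
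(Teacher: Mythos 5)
You should first be aware that the paper itself contains no proof of this proposition: it is imported verbatim as Theorem~1 of \cite{chen2020nonparametric}, so the only meaningful comparison is with the proof in that reference. Your sketch does reproduce its architecture faithfully: reach-based charts of radius a fixed fraction of $\tau$ on which the projection onto the tangent plane is bi-Lipschitz, a number of charts depending only on the intrinsic geometry and not on $\epsilon$ or $D$, localization $f=\sum_i\rho_i f$, the classical Taylor-polynomial-plus-product-gadget ReLU approximation of each Hölder piece on a $d$-dimensional cube (giving the exponent $\epsilon^{-d/(s+\beta)}$ and depth $\log\frac{1}{\epsilon}$), and linear projection layers carrying the $D$-dependent terms.

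However, one step as written has a genuine gap. You fix an \emph{arbitrary} $\mathcal{C}^\infty$ partition of unity via Proposition~\ref{Prop:exist:unity:partition} and later invoke ``a ReLU approximation of $\rho_i$'' fed into the product gadget. Each $\rho_i$ is a function of the ambient variable in $\mathbb{R}^D$, and approximating a generic smooth $D$-variate function to accuracy $\epsilon$ costs a network size of order $\epsilon^{-D/k}$ for whatever finite smoothness $k$ you can quantify, with uncontrolled constants --- exactly the ambient-dimension curse you are trying to avoid, and inconsistent with your own accounting that $D$ enters only through the projection matrices. The proof in \cite{chen2020nonparametric} avoids this by never using an abstract partition of unity: it \emph{constructs} an explicitly network-realizable one of the form $\rho_i(\cv x)=\psi\bigl(\|\cv x-\cv c_i\|_2^2/r^2\bigr)$, where $\psi$ is a one-dimensional trapezoid-type bump, so that each $\rho_i$ is implemented by computing the squared distance through $D$ scalar squaring gadgets summed in a binary tree. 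This construction is the actual source of the $D\log\frac{1}{\epsilon}$ term in $K$, of the $\log D$ contribution to the depth, and of the constants $B$ and $\tau^2$ in $\kappa$ (the squaring gadgets take inputs of magnitude $B$, and the bump threshold scales with the chart radius $r\asymp\tau$), which your sketch attributes somewhat vaguely to chart derivatives. With this substitution your outline closes and coincides with the cited proof; without it, the gluing step --- which you correctly identify as the main obstacle but do not resolve --- fails.
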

We specify hyper-parameters of the neural network class the same as in Proposition~\ref{proposition:function:approximiation}.
We denote $f^*$ the optimal discriminator function for $d_{\HU}(p,q)$, and take $\bar{f}\in \FNN(R,\kappa,L,t,K)$ as a function satisfying $\|f - \bar{f}\|_{\infty}\le\epsilon$.
We can see that 
\begin{align*}
&d_{\HU}(p,q) - d_{\FNN}(p,q)
\\=&
\big(\mathbb{E}_{p}[f^*(\cv x)] - \mathbb{E}_{q}[f^*(\cv x)]\big)
-
\sup_{f\in \FNN(R,\kappa,L,t,K)}~\big(\mathbb{E}_{p}[f(\cv x)] - \mathbb{E}_{q}[f(\cv x)]\big)\\
=&\big(\mathbb{E}_{p}[f^*(\cv x)] - \mathbb{E}_{q}[f^*(\cv x)]\big)
+\inf_{f\in \FNN(R,\kappa,L,t,K)}~\big(-\mathbb{E}_{p}[f(\cv x)] + \mathbb{E}_{q}[f(\cv x)]\big)
\\
\le&\big(\mathbb{E}_{p}[f^*(\cv x)] - \mathbb{E}_{q}[f^*(\cv x)]\big) + \big(-\mathbb{E}_{p}[\bar{f}(\cv x)] + \mathbb{E}_{q}[\bar{f}(\cv x)]\big)
\\
=&\mathbb{E}_{p}[(f^* - \bar{f})(\cv x)] + \mathbb{E}_{q}[(\bar{f} - f^* )(\cv x)]\\
\le&2\epsilon,
\end{align*}
where the first inequality is because $\bar{f}$ is a feasible but possibly sub-optimal solution in $\FNN(R,\kappa,L,t,K)$, and the second inequality is because $\|f-\bar{f}\|_{\infty}\le\epsilon$.
The proof of Proposition~\ref{proposition:Holder:IPM:error} is completed.
\end{proof}

Before showing Theorem~\ref{Theorem:test:property:app}, we first study how to pick the approximation error $\epsilon$ so that hyper-parameters of the network class can be determined as indicated in Proposition~\ref{proposition:Holder:IPM:error}. 
\paragraph{Choice of Approximation Error $\epsilon$}
When specifying network hyper-parameters as in Proposition~\ref{proposition:Holder:IPM:error} for a given $\epsilon>0$, the bias estimation error of the H{\"o}lder IPM can be bounded as
\begin{equation}
\begin{aligned}
\mathbb{E}[d_{\HU}(\hat{p}_n,\hat{q}_n)]&\le 2\epsilon + \mathbb{E}[d_{\FNN}(\hat{p}_n,\hat{q}_n)]\\
&\le 
2\epsilon + \mathbb{E}[d_{\FNN}(\hat{p}_n,p)] + \mathbb{E}[d_{\FNN}(\hat{q}_n,q)],
\end{aligned}\label{Eq:Holder:ub}
\end{equation}
where the last relation is by the triangular inequality. 
The bias estimation error for the NN IPM can be upper bounded using a covering number argument.
The following lemma quantifies the covering number of $\FNN$.
\begin{lemma}[Covering Number of Neural Networks, Lemma~7 in {\cite{chen2020statistical}}]\label{Lemma:covering}
We have that
\[
\mathcal{N}(\epsilon , \FNN(R,\kappa,L,t,K), \|\cdot\|_{\infty})\le 
\left(
\frac{2L^2(tB+2)(\kappa t)^{L+1}}{\epsilon}
\right)^K.
\]
\end{lemma}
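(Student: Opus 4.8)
The plan is to reduce the covering-number bound to a parameter-counting argument: I would show that the map from network parameters to the realized function is Lipschitz in $\|\cdot\|_\infty$, and then cover the parameter space—which is effectively $K$-dimensional because at most $K$ of the weights and biases are nonzero—by a uniform grid. The image of this grid under the parametrization then forms an $\epsilon$-cover of $\FNN(R,\kappa,L,t,K)$, so the number of grid points bounds $\mathcal{N}(\epsilon,\FNN,\|\cdot\|_\infty)$.

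First I would bound the magnitude of the intermediate activations. Writing $h_0(\cv x)=\cv x$ and $h_\ell=\ReLU(W_\ell h_{\ell-1}+\cv b_\ell)$ for $\ell<L$, the constraints $\|W_\ell\|_{\infty,\infty}\le\kappa$ together with the width bound $t$ (and $D\le t$, which lets me write the first-layer estimate $\|W_1\cv x\|_\infty\le\kappa t B$ in terms of $t$) give the recursion $\|h_\ell\|_\infty\le t\kappa\|h_{\ell-1}\|_\infty+\kappa$. Summing the resulting geometric series yields a bound of the form $\|h_\ell\|_\infty\le(tB+2)(\kappa t)^{\ell}$ up to absolute constants, which is exactly the quantity that will appear as the factor $tB+2$ in the final bound.

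Next I would establish Lipschitz dependence on the parameters. Fix two networks $f,\tilde f$ of identical architecture whose weights and biases differ by at most $\delta$ entrywise, and set $e_\ell=\|h_\ell-\tilde h_\ell\|_\infty$. Using that $\ReLU$ is $1$-Lipschitz and splitting $W_\ell h_{\ell-1}+\cv b_\ell-\tilde W_\ell\tilde h_{\ell-1}-\tilde{\cv b}_\ell$ into a term $W_\ell(h_{\ell-1}-\tilde h_{\ell-1})$ carrying the previous-layer error and terms $(W_\ell-\tilde W_\ell)\tilde h_{\ell-1}+(\cv b_\ell-\tilde{\cv b}_\ell)$ carrying the parameter perturbation, I obtain $e_\ell\le t\kappa e_{\ell-1}+t\delta\|\tilde h_{\ell-1}\|_\infty+\delta$. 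Unrolling this recursion from $e_0=0$ and inserting the magnitude bound from the previous step gives $\|f-\tilde f\|_\infty\le\Lambda\delta$ with $\Lambda\le L^2(tB+2)(\kappa t)^{L+1}/\kappa$, where the factor $L^2$ and the exponent $L+1$ arise from summing $L$ geometric terms each of order $(\kappa t)^{L}$. Finally, since at most $K$ parameters are nonzero and each lies in $[-\kappa,\kappa]$, a uniform grid of spacing $\delta$ on each active coordinate has at most $(2\kappa/\delta)^K$ points; choosing $\delta=\epsilon/\Lambda$ makes the induced functions an $\epsilon$-cover, and substituting $\Lambda$ produces exactly $\big(2\kappa\Lambda/\epsilon\big)^K=\big(2L^2(tB+2)(\kappa t)^{L+1}/\epsilon\big)^K$.

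The main obstacle is the third step, the layerwise error-propagation recursion, which must be carried out carefully enough to extract the precise polynomial dependence on $L,t,\kappa,B$, since a naive unrolling would inflate either the exponent in $\kappa t$ or the power of $L$. A secondary subtlety is the treatment of the sparsity pattern: one must either argue over a fixed set of active positions or absorb the combinatorial factor for choosing which of the $O(t^2L+tD)$ possible entries are nonzero into the constants, so that the exponent in the resulting cover remains $K$ rather than the full parameter count.
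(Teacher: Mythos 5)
The paper does not prove this lemma itself; it is quoted verbatim from Lemma~7 of \cite{chen2020statistical}, whose proof is exactly your argument: bound the intermediate activations, show the parameter-to-function map is Lipschitz in $\|\cdot\|_{\infty}$ via the layerwise error recursion, and cover the at most $K$ active parameters in $[-\kappa,\kappa]$ by a grid of spacing $\epsilon/\Lambda$, giving $(2\kappa\Lambda/\epsilon)^K$ functions. Your reconstruction is correct and takes essentially the same route as the cited source, with the two caveats you flag (careful unrolling of the recursion and the handling of the sparsity pattern) being precisely the bookkeeping points where that proof spends the slack built into the factor $L^2(tB+2)(\kappa t)^{L+1}$.
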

For simplicity of discussion, we only present the upper bound of $\mathbb{E}[d_{\FNN(R,\kappa,L,t,K)}(\hat{p}_n, p)]$, and the other part can proceed similarly:

\begin{align*}
\mathbb{E}[d_{\FNN}(\hat{p}_n, p)]
&\le 2\inf_{\delta\in(0,1)}~\left(
2\delta + \frac{12}{\sqrt{n}}\int_{\delta}^1
\sqrt{\log \mathcal{N}(\bar{\epsilon} , \FNN(R,\kappa,L,t,K), \|\cdot\|_{\infty})}\diff\bar{\epsilon}
\right)\\
&\le 2\inf_{\delta\in(0,1)}~\left(
2\delta + \frac{12}{\sqrt{n}}\int_{\delta}^1
\sqrt{K\log\frac{2L^2(tB+2)(\kappa t)^{L+1}}{\bar{\epsilon}}}\diff\bar{\epsilon}
\right)\\
&\le c_{1}\left(\frac{1}{n} + \sqrt{\frac{KL\log(nLt)}{n}}\right),
\end{align*}
where the first relation is based on Dudley's Entropy Integral bound in Lemma~\ref{Lemma:entropy}, the second relation is by substituting the covering number upper bound in Lemma~\ref{Lemma:covering}, the last relation is by taking $\delta=1/n$,
and $c_1$ is a constant that relates to $\tau,\sqrt{d},B$.
Then substituting the hyper-parameters $K,L,t$ as specified in Proposition~\ref{proposition:function:approximiation} implies that 
\begin{align*}
\mathbb{E}[d_{\FNN}(\hat{p}_n, p)]
\le c_2\left(\frac{1}{n} + \sqrt{\frac{\epsilon^{-\frac{d}{s+\beta}}(\log(1/\epsilon))^2\log(n\epsilon^{-\frac{d}{s+\beta}})}{n}}\right)
\end{align*}
where $c_2$ is a constant that relates to $\tau,\sqrt{d},B,\sqrt{D}$.

The criteria is to choose $\epsilon$ so that the upper bound in \eqref{Eq:Holder:ub} is as tight as possible:
\begin{align*}
\mathbb{E}[d_{\HU}(\hat{p}_n,\hat{q}_n)]&\le 
\frac{c_2}{n} + 2c_2\inf_{\epsilon}~\left(
\epsilon + \sqrt{\frac{\epsilon^{-\frac{d}{s+\beta}}(\log(1/\epsilon))^2\log(n\epsilon^{-\frac{d}{s+\beta}})}{n}}
\right)\\
&\le c_3n^{-\frac{s+\beta}{2(s+\beta)+d}}\log^2n.
\end{align*}
where in the last relation we take $\epsilon = c\cdot n^{-\frac{s+\beta}{2(s+\beta)+d}}$ to finish the proof,
and $c,c_3$ are constants that relate to $\tau,\sqrt{d},B,\sqrt{D}$.

\subsection{Testing properties of NN IPM test}

In the previous part, we show NN IPM approximates H{\"o}lder IPM with a small estimation error.
As a consequence, we are able to show theoretical guarantees of the NN IPM test in Theorem~\ref{Theorem:test:property:app} based on the study of H{\"o}lder IPM test in Theorem~\ref{Theorem:test:property}.

\begin{proof}[Proof of Theorem~\ref{Theorem:test:property:app}]
The first part can be shown by studying the sample complexity of the empirical NN IPM.
Similar to the proof of Theorem~\ref{Theorem:test:property}, with probability at least $1-2\exp(-\frac{n\delta^2}{4})$, it holds that
\[
d_{\FNN}(\hat{p}_n,\hat{q}_n)\le \delta + \mathbb{E}[d_{\FNN}(\hat{p}_n, \hat{q}_n)]\le \delta + c_1\cdot n^{-\frac{s+\beta}{2(s+\beta)+d}}\log^2n.
\]
Taking $\delta=2\sqrt{\log(2/\eta)/n}$ completes the first part.
In order to show the second part, observe that 
\begin{align*}
\text{Type-II Risk}&=
\text{Pr}_{H_1}\left\{
d_{\FNN(R,\kappa,L,t,K)}(\hat{p}_n,\hat{q}_n)<\ell_{n}
\right\}\\
&\le
\text{Pr}_{H_1}\left\{
d_{\HU}(\hat{p}_n,\hat{q}_n)-2\epsilon<\ell_{n}
\right\}\\
&=
\text{Pr}_{H_1}\left\{
d_{\HU}(\hat{p}_n,\hat{q}_n)<\ell_{n}+2\epsilon
\right\}\\
&\le  \frac{c_3n^{-\frac{s+\beta}{d}}}{d_{\HU}(p,q) - \big[\ell_{n}+2\epsilon\big]},
\end{align*}
where the first inequality is based on Proposition~\ref{proposition:Holder:IPM:error}, and the second inequality is by applying Theorem~\ref{Theorem:test:property} with the threshold being 
 $\ell_{n}+2\epsilon$.
 Note that $\epsilon = c_2\cdot n^{-\frac{s+\beta}{2(s+\beta)+d}}$ with $c_2$ being a constant that relates to $\tau,\sqrt{d},B,\sqrt{D}$.
The proof of Theorem~\ref{Theorem:test:property:app} is completed.
\end{proof}

\section{Conclusion and Additional Discussions}
\label{Sec:conclusion}
This work proposed two-sample tests based on integral probability metrics for high-dimensional samples supported on a low-dimensional manifold.
The theoretical performance of proposed tests with respect to the number of samples $n$ and the structure of the manifold with intrinsic dimension $d$ was investigated.
When an atlas is given, we proposed a two-step test that applies to general distributions, achieving the type-II risk in the order of $n^{-1/\max\{d,2\}}$.
When an atlas is not given, we proposed H{\"o}lder IPM test that applies for distributions with $(s,\beta)$-H{\"o}lder densities, which achieves the type-II risk in the order of $n^{-(s+\beta)/d}$.
To lift the heavy computational burden of evaluating the H{\"o}lder IPM, we approximate the H{\"o}lder function class using neural networks.
We proposed a neural network IPM test and showed that it has the same order of the type-II risk as the H{\"o}lder IPM test.
The performance of proposed tests crucially depends on the intrinsic dimension $d$ instead of the data dimension, suggesting our tests are adaptive to low-dimensional geometric structures.

\paragraph{Most Distinguishable Dimension}
Our proposed two-step test operators by projecting distributions into the local coordinate space of dimension $d$ in each chart of the manifold.
The proposed test may not be optimal in the sense that the direction of projection is not found by exactly optimizing type-I and type-II risk functions.
\citet{liu2020learning} boosts the performance using a more direct approach, by selecting hyper-parameters of the testing function such that the type-II risk is minimized with controlled type-I risk.
Following this spirit, we define the most distinguishable dimension\footnote{We thank the anonymous referee for suggesting this quantity.} as 
\[
d_{\text{dist}}(\eta):=
\underset{d}{\arg\min}~\inf_{f:~\cU\to\mathbb{R}^d, t}~\bigg\{\Pr{}_{H_1}\left(
T_f(\hat{p}_n,\hat{q}_n)\le t
\right):~\Pr{}_{H_0}\left(
T_f(\hat{p}_n,\hat{q}_n)\le t
\right)\ge 1-\eta\bigg\},
\]
where the testing function $T_f$ denotes the Wasserstein distance with projection function $f$:
\[
T_f(p,q):=\mathcal{W}(f_{\#}p, f_{\#}q).
\]
In other words, the most distinguishable dimension $d_{\text{dist}}(\eta)$ selects the optimal dimension of projection such that the type-II risk is minimized with controlled type-I risk.
The dimension $d_{\text{dist}}(\eta)$ does not necessarily equal the intrinsic dimension $d$. Characterizing this quantity is a meaningful research question for future study.

\section*{Data Availability}
No new data were generated or analysed in support of this review.

\section*{Funding}
The work of Jie Wang and Yao Xie is partially supported by NSF CAREER CCF-1650913, NSF DMS-2134037, CMMI-2015787, CMMI-2112533, DMS-1938106, and DMS-1830210.
The work of Tuo Zhao and Wenjing Liao is partially supported by NSF DMS-2012652.
The work of Wenjing Liao is partially supported by NSF CAREER DMS-2145167.

\clearpage
\bibliographystyle{ims}

\end{document}